\newcolumntype{C}[1]{>{\centering\arraybackslash$}p{#1}<{$}}
\newcommand{\R}{\mathbb{R}}
\newcommand{\Z}{\mathbb{Z}}
\newcommand{\E}{\mathbb{E}}
\newtheorem{theorem}{Theorem}[section]
\newtheorem{assump}{Assumption}
\newtheorem{lemma}[theorem]{Lemma}
\newtheorem{corollary}[theorem]{Corollary}
\newtheorem{proposition}[theorem]{Proposition}
\title{
Classification Logit Two-sample Testing by Neural Networks
}
\author{
Xiuyuan Cheng\thanks{Department of Mathematics, Duke University. Email: xiuyuan.cheng@duke.edu}
~~~~~~~~~~~~~~~~
Alexander Cloninger\thanks{Department of Mathematics and Halicio{\u g}lu Data Science Institute, University of California at San Diego. Email: acloninger@ucsd.edu}
}
\date{\vspace{-20pt}}
\begin{document}

\maketitle

\begin{abstract}
The recent success of generative adversarial networks and variational learning suggests training a classifier network may work well in addressing the classical two-sample problem. 
Network-based tests have the computational advantage that the algorithm scales to large samples. 
This paper proposes a two-sample statistic
which is the difference of the logit function, provided by a trained classification neural network, evaluated on the testing set split of the two datasets. 
Theoretically, we prove the testing power to differentiate two
sub-exponential densities given that the network is sufficiently parametrized.
When the two densities lie on or near to low-dimensional manifolds embedded in possibly high-dimensional space, the needed network complexity is reduced to only
scale with the intrinsic dimensionality.
Both the approximation and estimation error analysis are based on
a new result of near-manifold integral approximation.
In experiments, the proposed method demonstrates better performance than previous network-based tests using classification accuracy as the two-sample statistic, and compares favorably to certain kernel maximum mean discrepancy tests on synthetic datasets and hand-written digit datasets.
  
\end{abstract}

\section{Introduction}

The two-sample test problem studies the comparison of two unknown distributions $p$ and $q$ from finitely observed data samples 
\cite{lehmann2006testing}.
As a central problem in statistics, it is widely encountered in general data analysis of biomedical data, audio and imaging data, etc. 
\cite{borgwardt2006integrating, chwialkowski2015fast, jitkrittum2016interpretable,  lopez2016revisiting,cheng2017two},
and particularly, in the machine learning application 
of training and evaluating generative models
\cite{li2015generative,li2017mmd, goodfellow2014generative,arjovsky2017wasserstein,nowozin2016fgan,
lloyd2015statistical,sutherland2016generative,chwialkowski2016kernel,liu2016kernelized,jitkrittum2017linear}.
Many existing tests are based on certain estimators of a distance or divergence 
between $p$ and $q$. 
Important examples include Maximum Mean Discrepancy (MMD), 
especially
kernel-based MMD \cite{anderson1994two,gretton2012kernel}
and distance of Reproducing Kernel Hilbert Space (RKHS) Mean Embedding \cite{chwialkowski2015fast,jitkrittum2016interpretable},
and divergence based methods 
which may involve non-parametric estimation of density difference
or density ratio 
 \cite{sriperumbudur2009integral, wornowizki2016two, sugiyama2013density, kanamori2012f}.
While these methods have been intensively studied and theoretically well-understood,
their application is often 
restricted to data with small dimensionality and/or small sample size due to model and computational limitations.

The powerful expressiveness of neural networks and the recent progress in neural network optimization
suggest the natural idea of using a network  for the two-sample problem \cite{lopez2016revisiting}. 
In the training of generative adversarial networks (GAN) \cite{goodfellow2014generative,arjovsky2017wasserstein, nowozin2016fgan},
in each iteration a discriminative network (D-net) is trained to distinguish 
the density $q$ produced by a generative network 
from the data density $p$ which is only accessible via observed samples, that is, a two-sample problem.
Strictly speaking, the task of D-net is a goodness-of-fit problem as the model density $q$ is analytically given
\cite{chwialkowski2016kernel,liu2016kernelized,jitkrittum2017linear}. 
Since batch sampling is commonly used in training GAN and other generative networks, 
the ability of a trained network, such as the D-net in GAN,
to detect the difference between two densities from finite samples is crucial for such applications.

Recent work has also shown that the D-net does not have to be arbitrarily large in order to approximate complicated functions, and that the architecture does not have to scale with the number of points in order to bound the generalization error \cite{schmidt2019deep}, or for constructing GANs \cite{chen2020statistical}.
These works approach neural networks from an approximation theoretic perspective and yield constructive approximations of smooth functions.
Other important directions in this line include studying the effects of data dimensionality \cite{shaham2018provable, chen2019nonparametric}, the effects of depth of the network \cite{yarotsky2017error,bolcskei2019optimal}, and showing that the space of fixed size networks is not closed under $L^p$ norm \cite{petersen2018topological}.
High dimensional data that are lying near low-dimensional structures have not been considered in these works. 
We review and discuss all these connections in more detail in Section \ref{sec:review}.

The current paper studies the method of training a neural network for the two-sample problem,
and the proposed test statistic is the log ratio of the class probabilities averaged over samples,
 which can be computed once a classifier network is trained.
Our contributions include:

\begin{itemize}
    \item We introduce a network-based two-sample test statistic using classification logit, and the algorithm inherits the scalable computational efficiency of neural networks.
    
    \item
    Theoretical guarantee of testing power is proved for sub-exponential densities $p$ and $q$ in $\R^D$, 
and the needed network complexity is reduced to be intrinsic when $p$ and $q$ lie on or near to a low-dimensional manifold embedded in $\R^D$ even when $D$ is high.
A key part of the analysis is a result of near-manifold integral approximation, and we give that in a  more general form which can be of independent interest.
    
    \item
    Numerical experiments show that the proposed test compares favorably to kernel MMD tests and earlier neural network test based on classification accuracy, on both synthetic manifold data and hand-written digits datasets. 
\end{itemize}

The softmax classification loss that we use corresponds to the Jensen-Shannon divergence (JSD) between two densities, which belongs to a general class of $f$-divergences.
Thus our method and techniques can extend to a broad class of classification networks \cite{nowozin2016fgan}.
Since JSD is a prototypical case of $f$-divergence, we focus on softmax classifier network in this paper.

\subsection{Related works}\label{sec:review}

{\bf Classification and two-sample testing}.
The relations between two-sample testing, 
divergence estimation and binary classification has been pointed out in earlier statistical literature \cite{friedman2004multivariate, sriperumbudur2009integral,reid2011information}.
\cite{ramdas2016classification} studied  Fisher LDA classifier used for testing mean shift of Gaussian distributions. 
Discriminative approach has also been used to detect and correct covariant shifts
\cite{bickel2007discriminative,bickel2009discriminative}.
Training classifier provides an estimator of density ratio, 
as has been pointed out in \cite{menon2016linking} and in the formulation of learning generative models \cite{mohamed2016learning}. 
While distribution divergence estimation has been studied and used for two-sample problems \cite{kanamori2011f,kanamori2012f,sugiyama2013density, wornowizki2016two},
the use of neural network as a divergence estimator for two-sample testing was less investigated.
In terms of theoretical guarantee of test power,
the analysis in \cite{lopez2016revisiting} 
assumes a non-zero population test statistic when $p\neq q$
but the expression is not specified, 
along with other approximations.
Theoretical analysis of neural network two-sample testing power remains limited.

{\bf MMD and kernel MMD tests.}
MMD,
also known as the Integral Probability Metric (IPM),
encloses a wide class of two-sample statistics 
such as Kolmogorov-Smirnov statistic, 
Wasserstein metric, etc. 
Particularly, kernel-based MMD \cite{anderson1994two,gretton2012kernel}
has been widely applied due to its non-parametric form, 
and recently in training moment matching networks (MMN) \cite{li2015generative,li2017mmd}
and evaluating generative models \cite{sutherland2016generative}. 
To optimize kernel choice, 
\cite{gretton2012kernel} considers selecting kernel bandwidth from data,
\cite{cheng2017two} studies anisotropic kernels.
Optimizations of kernel through 
convex combination of multiple kernels \cite{gretton2012optimal},
adapting reference locations in the Mean Embedding test \cite{jitkrittum2016interpretable},
and neural network parametrization \cite{liu2020learning}
have been introduced which maximize estimated testing power. 
The combination of neural network feature learning and kernel MMD has been studied in \cite{li2017mmd},
where the training is typically more costly than that of a classifier network.
Compared to kernel methods, neural networks are often algorithmically more scalable,
and the current paper studies the theoretical guarantee of testing power,
and compares performance in practice.

{\bf Relation to goodness-of-fit test and GAN.}
The goodness-of-fit problem differs from the two-sample problem in that one of the two densities is analytically accessible. 
Using the explicit formula of $q$, methods based on kernel Stein discrepancy have been developed in
 \cite{chwialkowski2016kernel,liu2016kernelized,jitkrittum2017linear}
 and applied to generative model evaluation.
 However, the computation of the {\it score function} $\nabla \log q$ may be difficult for certain generative models, 
including many generative networks. 
Meanwhile, in many generative models including MMN and GAN the goodness-of-fit is evaluated by batch sampling, i.e. the two-sample setting:
Kernel MMD is used in MMN,
and GAN, Wasserstein GAN \cite{arjovsky2017wasserstein} and $f$-GAN \cite{nowozin2016fgan}
estimate density divergence by a trained network (the D-net).
Since the success of GAN training relies on the discriminative power of the D-net,
the efficiency of using a neural network for the two-sample test
is important  for the training and evaluating of such  models.

{\bf Approximation with neural networks.} The expressiveness of neural networks and their ability to approximate functions to error $\epsilon$ considers the minimum architecture needed to approximate a family of functions, independent of whether an optimization scheme converges to the proposed constructed network.
The number of parameters needed in a network to bound the point-wise error by $\epsilon$ is known to scale as $\epsilon^{-D/r}$
where $D$ is dimension of the space and $r$ is the smoothness of the target function \cite{mhaskar1996neural, yarotsky2017error, shen2019nonlinear}.  It has also been established that if the function is in a Korobov-2 space (i.e., smooth mixed derivatives up to order  
${\partial^2_{x_1} ... \partial^2_{x_D}} f$), 
then the complexity can be reduced to $\epsilon^{-1/2}|\log(\epsilon)|^{3D/2}$ \cite{montanelli2019new}.
Similarly, when the data lies on a lower dimensional manifold of dimension $d<D$, the complexity of the networks scales as $\epsilon^{-d/2}$ for twice differentiable functions when the network depth is bounded \cite{shaham2018provable, schmidt2019deep, shen2019deep},
and regression of H\"older functions on such manifolds using deep ReLU networks
was studied in \cite{chen2019nonparametric} which proved estimation convergence rate depending on $d$.  These methods are mostly applied to regression problems, and consider data distributions that are absolutely continuous in $\mathbb{R}^D$ or lying on a low-dimensional manifold, but have not considered the case where data is concentrated near low-dimensional structures.

\section{Log-ratio test by network classifier}

\subsection{Preliminary: two-sample problem and MMD test}

Formally, the two-sample problem asks to test the null hypothesis 
${\cal H }_0: p  = q$ 
given datasets \[
X=\{x_i\}_{i=1}^{n_X}, \quad x_i \sim p ~ i.i.d., 
\quad 
Y=\{y_j\}_{j=1}^{n_Y}, \quad y_j \sim q ~ i.i.d.,
\]
and $X$ independent from $Y$. 
It is also of application interest to provide indication of where $q$ differs from $p$. 
Similarly, $K$-sample problem can be considered where $K > 2$, namely to differentiate distributions among
$K$ data sets.
We focus on the two-sample problem in the paper, and the neural network classifier method extends to the $K > 2$ situations.
Throughout the paper, we assume that the distributions have density functions, and use $p$ and $q$ to stand for both the distribution and the density function.

Most two-sample testing method is based upon a statistic \[
\hat{T}= \hat{T}(X,Y),
\]
 which is computed from the two datasets,
and a test threshold $\tau$.
The null hypothesis ${\cal H }_0$ is rejected if $\hat{T} > \tau$.
To control the false discovery under the null,
the threshold $\tau$ is usually set to the smallest value s.t. $\Pr [  \hat{T} > \tau | {\cal H }_0 ] \le \alpha$, 
where $\alpha \in (0,1)$ is a pre-specified number called the {\it significance level} of the test
(typically $\alpha = 0.05$).
Algorithm-wise, $\tau$ is given either by theory (the probabilistic distribution of $\hat{T}$ under ${\cal H }_0$) 
or computed from data.
Specifically, 
{\it permutation test} is a standard procedure to determine $\tau$ \cite{higgins2003introduction,gretton2012kernel}.
For a given test, the {\it test power} is measured by 
 $\Pr [  \hat{T} > \tau | {\cal H }_1 ] $,
 namely the probability of true discovery when $p$ and $q$ indeed differ. 
 The test is called asymptotically consistent if the test power $\to 1$ as the number of samples $n_X, n_Y \to \infty$
 and usually the ratio $\frac{n_X}{n_Y} \to$ a nonzero constant.

A class of widely used two-sample statistics is the MMD test, also named as IPM,
the population statistic of which takes the general form as 
\begin{equation}\label{eq:def-MMD}
D(p,q) = \sup_{f \in{ \cal F}} \int f (p-q),
\end{equation}
where ${\cal F}$ is certain restricted family of functions. 
E.g., in kernel MMD, ${\cal F}$ is the $L^2$-unit ball in the RKHS.
Many finite-sample estimators of \eqref{eq:def-MMD} have been developed.
The proposed network logit test is not an MMD,
because the training objective is an (empirical) $f$-divergence rather than an IPM. 
However, the test statistic resembles the form of MMD evaluated at the sup $f$ in $\cal F$.
We discuss the connection in Section \ref{subsec:witness}, and compare with kernel MMD tests in experiments.

\subsection{Classification logit test statistic}\label{subsec:stat-T}

The proposed test statistic is computed in the following way.
Given $X$ and $Y$ as above, without loss of generality suppose $n = n_X+n_Y$ is even integer.
Same as in \cite{lopez2016revisiting},
we split the dataset 
$ {\cal D} = \{ (x_i, 0) \}_{i=1}^{n_X} \cup \{ (y_j, 1) \}_{j=1}^{n_Y} = \{ (z_i, l_i) \}_{i=1}^n$, $l_i \in \{0,1\}$, 
 into two halves, 
 i.e. ${\cal D}  = {\cal D}_{\text{tr}} \cup  {\cal D}_{\text{te}}  $, $|{\cal D}_{\text{tr}}| = |{\cal D}_{\text{te}} |= \frac{n}{2}$,
 ${\cal D}_{\text{te}} = X_{\text{te}} \cup  Y_{\text{te}}$ and similarly for the training set. 
The method consists of two phases of training and testing:

\begin{itemize}
\item Training. 
 A binary classification neural network  is trained on $ {\cal D}_{\text{tr}}$ using softmax loss
 (equivalent to applying logistic regression to the output of last layer before the loss layer), 
 which gives estimated class probabilities as 
 \[
 \Pr [l= 0 | z] = \frac{ e^{u_\theta(z)}}{e^{u_\theta(z)} + e^{v_\theta(z)}},
\quad 
\Pr [l = 1 | z] = \frac{ e^{v_\theta(z)}}{e^{u_\theta(z)} + e^{v_\theta(z)}},
\] 
where $u_\theta(z)$ and $v_\theta(z)$ are activations in the last hidden layer of the network, $\theta$ denoting the network parametrization. 
We define \[
f_\theta := u_\theta- v_\theta, \] 
which is the {\it logit}.
The mathematical formulation of network training is detailed in Section \ref{subsec:training-opt}.

\item Testing.
After $f_\theta$ is parametrized by a trained neural network,
 the test statistic is computed as
\begin{equation}\label{eq:def-hatT-logratio}
\hat{T} 
= \frac{1}{ |X_{\text{te}}|} \sum_{x\in X_{\text{te}}} f_\theta(x) -  \frac{1}{ |Y_{\text{te}}| }\sum_{y \in Y_{\text{te}}} f_\theta(y),
\end{equation}
which can be equivalently written as 
$
\hat{T}  = \int f_\theta(x) (\hat{p}_\text{te}(x) - \hat{q}_\text{te}(x) )dx
$
where $\hat{p}_\text{te}$ and $\hat{q}_\text{te}$ stand for the empirical measure of $X_{\text{te}}$ and $Y_{\text{te}}$ respectively.

{\bf Determination of $\tau$}.
Once the logit function $f_\theta$ is evaluated on each testing sample,
 the test threshold $\tau$ can be computed by a {\it permutation test} 
\cite{higgins2003introduction}:
randomly permute the $ |{\cal D}_{\text{te}} |$ many labels $l_i$ on the test set,
and recompute the test statistics for $m_{\text{perm}}$ times, typically a few hundreds.
Then $\tau$  is set to be the (1-$\alpha$)-quantile of the empirical distribution
so as to control the type-I error to be at most $\alpha$. Note that this permutation test does not require retraining the network upon each permutation 
nor re-evaluation of the neural network on testing samples.

\end{itemize}

The above classifier logit test can be used with other classifiers than neural networks, 
e.g., logistic regression,
which is equivalent to restricting $f_\theta$ to be a linear mapping or the network to have only one linear layer.
A main advantage of neural network is the enlarged expressiveness of the class of functions $f_\theta$ that can be represented or approximated.

{\bf Computational complexity.}
Given $n$ data samples,  
evaluating the network output on each sample takes a fixed amount of flops,
and thus computing the test statistic takes $O(n)$ operations.
The permutation test to determine $\tau$ adds negligible cost since  $f_\theta$  has been evaluated at each test sample,
and permuting the class labels only reorders these computed values.
The training phase is certainly more expensive,
though theoretically the overall complexity is $O(n)$ 
assuming that training is terminated after a fixed number of epochs.
Note that the computation can be conducted by  batch sampling 
so the algorithm scales to large sample size and also to multiple sample problems.

\subsection{Witness function}\label{subsec:witness}

Given the logit function $f$, the empirical test statistic is written as $\hat{T}=\int f(\hat{p}-\hat{q})$,
the subscripts being omitted without causing confusion,
and the population test statistic is 
\begin{equation}\label{eq:def-Tf}
T[f] := \int f (p-q),
\end{equation}
which is of the same form as the MMD discrepancy \eqref{eq:def-MMD} evaluated at the $\sup$-achieved $f$. 
In the literature of kernel MMD \cite{gretton2012kernel},
such $f$ is named the {\it witness function}, 
as it provides an indication of where $q$ differs from $p$.  
The indicator of differential regions can be of more application interest than the hypothesis test itself.
The witness function for kernel MMD is expressed via the reproducing kernel. 

In our setting, the density differential indicator is provided by the logit function $f_\theta$ of the trained classifier network.
We follow the MMD literature and  call $f_\theta$
 the (empirical) witness function of the proposed logit test.
As has been pointed out in \cite{menon2016linking},
the training of the classifier can be interpreted as minimizing a Bregman divergence between the estimated logit $f_\theta$
and the true log ratio $f^* = \log \frac{p}{q}$. 
We will see in Section \ref{subsec:training-opt} that $f^*$ is also the unconstraint optimizer of the population training objective.
Thus 
the trained $f_\theta$ serves as an estimator of density log ratio $f^*$, and
the proposed statistic $\hat{T}$ can also be viewed as estimating the symmetric KL divergence
$ \int (p-q) \log \frac{p}{q} = \text{KL} (p || q) + \text{KL} (q || p)$.
We call $f^*$
the population witness function of the logit test.

The witness function plays an important role in the ability of the test to distinguish two densities. 
When $p \neq q$, once a witness function $f$ with $T[f] > 0$ is obtained (from the training set),
the two-sample test (on the test set) using $\hat{T}$ will be asymptotically consistent
 as a direct result of Central Limit Theorem (CLT). 
The difference in test power thus depends on the quality of $f$, e.g., how large is the bias $T[f]$ compared to the variance of $\hat{T}$.
Consequently,
the efficiency of the network classification test lies in how well the neural network can express a good witness function and how it can be identified via the optimization, which is the central question of our analysis. 
Apart from theory, 
we also experimentally compare the witness function of different MMD tests in Section \ref{sec:experiment}.

\subsection{Identification of $f_\theta$ by neural network training}\label{subsec:training-opt}

Mathematically, the training of classification neural network optimizes the following objective 
\begin{equation}\label{eq:training-loss-unnormalized}
\sum_{x \in X_{\text{tr}}}
 \log D(x) 
 + 
\sum_{y \in Y_{\text{tr}}}
 \log (1-D(y)),
  \quad 
 D(x):= \frac{e^{f(x)}}{1+e^{f(x)}},
\end{equation}
called the (negative) empirical training loss. 
\footnote{``Loss'' usually refers to minimization, in this paper we use $L$ and $L_n$ to denote the population and empirical negative softmax loss which is to maximize by the optimization. 
}
After normalizing by number of samples,
where we assume same of samples in $X$ and $Y$, and then $|X_{\text{tr}}| = |Y_{\text{tr}}|$, for simplicity throughout the paper,
the optimization of the empirical loss (up to a additive constant) can be written as
\begin{eqnarray}
& &  \max_{f\in{\cal F}_{\Theta}}
L_{n, \text{tr}} [f]
 =  \frac{1}{2}  \left(
\frac{1}{|X_{\text{tr}}|} \sum_{x \in X_{\text{tr}}}
 \log D(x) 
 + 
 \frac{1}{|Y_{\text{tr}}|} \sum_{y \in Y_{\text{tr}}}
 \log (1-D(y))
+ 2 \log 2
 \right) \nonumber
 \\ 
& &~~~~
 =   \frac{1}{2}  \left(
 \int  \hat{p}_{\text{tr}}(x) 
 \log D(x) dx
 + \int  \hat{q}_{\text{tr}} (x) \log (1-D(x))
+ 2 \log 2
 \right), 
 \label{eq:training-loss-finite-sample}
\end{eqnarray}
where ${\cal F}_{\Theta}$ denotes the class of functions that can be expressed as the difference of the outputs in the last hidden layer of the classification network,
and 
$\hat{p}_\text{tr}$ and $\hat{q}_\text{tr}$ stand for the empirical measure of $X_{\text{tr}}$ and $Y_{\text{tr}}$ respectively.

This training objective is the same as that of the the D-net in the standard GAN\cite{goodfellow2014generative}.
As number of samples $n \to \infty$, the corresponding population training loss can be expressed as
\begin{equation}\label{eq:training-loss-population}
L[f]= \frac{1}{2} \left(
 \int p\log\frac{2e^{f}}{1+e^{f}}+\int q\log\frac{2}{1+e^{f}} \right),
\end{equation}
and 
a direct verification shows that the maximizer is (see e.g.  \cite{goodfellow2014generative} where it is proved in terms of $D=\frac{e^{f}}{1+e^{f}}$)
\begin{equation}\label{eq:fstar-free}
f^{*}=\arg\max_{f}L[f]=\log\frac{p}{q},
\quad
L[f^{*}]=  \frac{1}{2}
\left(  \int p\log\frac{2p}{p+q}+\int q\log\frac{2q}{p+q} \right)
= \text{JSD}(p,q)
\end{equation}
which characterizes the solution of \eqref{eq:training-loss-population} when the
function class is arbitrarily large or large enough to contains $f^{*}$,
JSD referring to the Jensen-Shannon divergence. 

Note that the $f_\theta$ identified in practice, 
which we call $\hat{f}_{tr} \in {\cal F}_{\Theta}$ (``tr'' for ``trained''),
differs from $f^*$ for three reasons:

\begin{itemize}
\item
(Approximation error) 
The neural network function class ${\cal F}_{\Theta}$ is of finite complexity and may not contain $f^*$.

\item
(Estimation error)
Only finite training samples are used, which makes $L_{n, \text{tr}} \neq L$.

\item
(Optimization error)
The optimization of $L_{n,\text{tr}}[f]$ may attain a local 
rather than  global  optimum. We call the global optimum $\hat{f}_{\text{gl}}$ (``gl'' for ``global'').
\end{itemize}

The goal of analysis is thus to prove that the logit test  \eqref{eq:def-hatT-logratio} using $\hat{f}_{\text{tr}}$ 
obtained from training on the training set
can distinguish different $p$ and $q$ with efficiency.

\begin{figure}[t]
\centering{
\includegraphics[trim={40pt 260pt 50pt 190pt},clip,height=.28\linewidth]{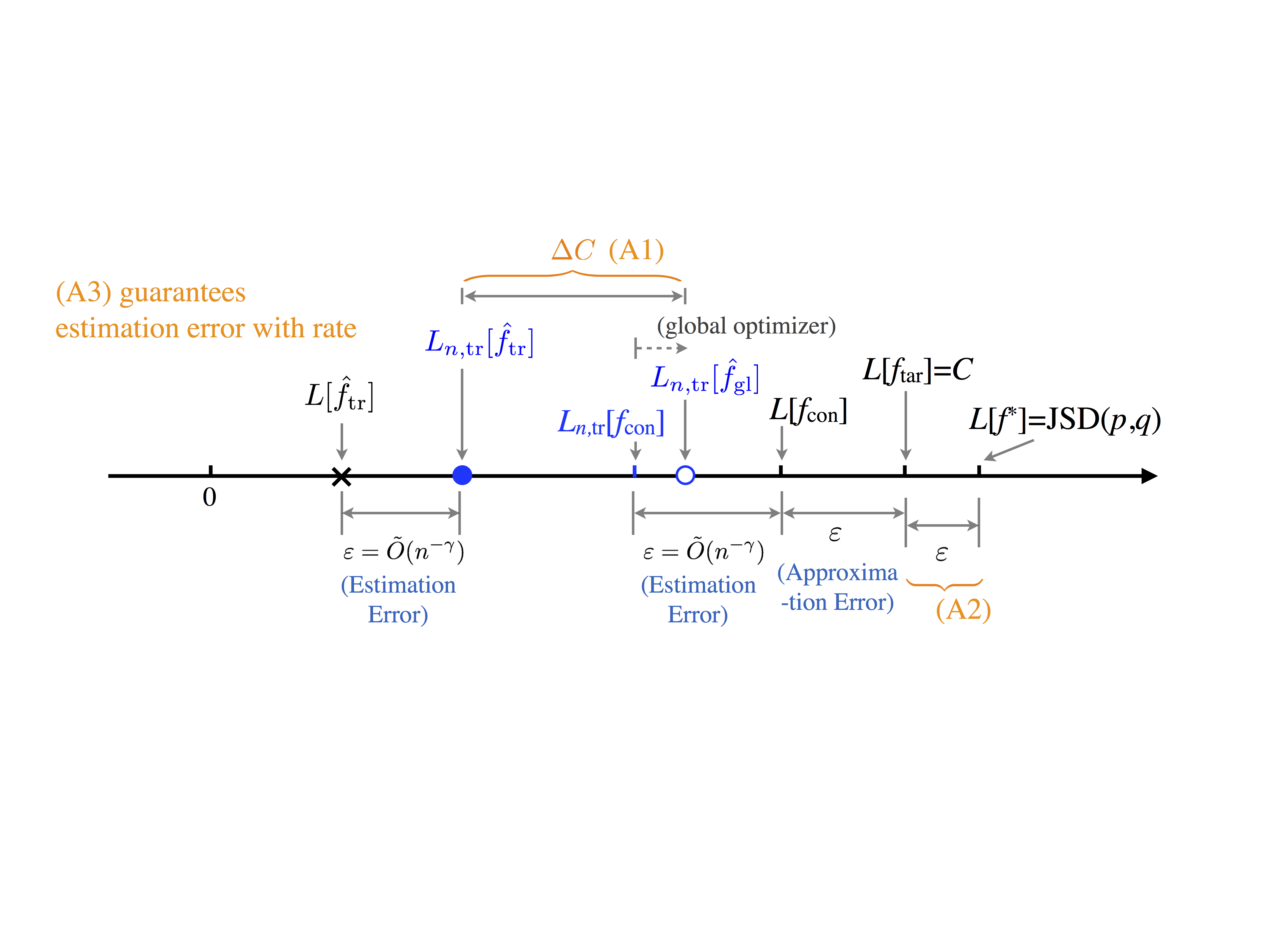}
}
\vspace{-10pt}
\caption{
Roadmap of analysis, illustrating the different functions $f^*$, $f_{\text{tar}}$,
$\hat{f}_{\text{gl}}$, $\hat{f}_{\text{tr}}$
and $f_{\text{con}}$.
The last three lie in the network function family ${\cal F}_\Theta$.
}
\label{fig:diag1}
\vspace{-10pt}
\end{figure}

\subsection{Roadmap of analysis}\label{subsec:roadmap}

We illustrate the differences among $\hat{f}_{tr}$,
$\hat{f}_{gl}$, and $f^*$ in the diagram in Fig. \ref{fig:diag1},
which also gives the roadmap of our analysis. 
Specifically,
we introduce $f_{\text{con}} \in {\cal F}_\Theta$
which approximates a target witness function $f_{\text{tar}}$ that carries a sufficiently large $L[f_{\text{tar}}]$.
The building blocks are as follows, where we use $\varepsilon$ to stand for a generic small number which may differ in different places: 

Starting from $L[f^*] = \text{JSD} >0$, JSD standing for JSD$(p,q)$,
\begin{itemize}
\item[1.]
A small $ |L[f_{\text{tar}} ] - L[f^*]|$ assuming that $f^*$ has regularity or has a regular surrogate $f_{\text{tar}}$. 
This relies on the assumption on $p$ and $q$, c.f. Assumption \ref{assump:ftar}.
Then $ L[f_{\text{tar}}] > \text{JSD} - \varepsilon$.

\item[2.]
A small $| L[f_{\text{con}}] -L[f_{\text{tar}}] |$ when network complexity is sufficiently large.
This relies on the network approximation analysis, 
and we bound the needed network complexity 
to scale with the intrinsic dimension $d$
when $p$ and $q$ lie on or near to low-dimensional manifolds. 
Then $ L[f_{\text{con}}] > \text{JSD} - 2\varepsilon$.

\item[3.]
A small $| L[f_{\text{con}}] -  L_{n,\text{tr}}[f_{\text{con}}] |$
by concentration bound. 
Note that we will need to deal with the deviation of $L_{n,\text{tr}}[\hat{f}_{\text{tr}} ]$
later, so we bound $ \sup_{ f \in {\cal F}_{\Theta}} | L[f] -  L_{n,\text{tr}}[f]|$
based on a bound of the covering number of the network function family ${\cal F}_{\Theta}$. 
In particular,
for on-or-near manifold densities
 the convergence rate of the estimation error
 is improved to only involving the intrinsic dimensionality $d$.
This step gives $L_{n,\text{tr}}[f_{\text{con}}]  > \text{JSD} - 3\varepsilon$.

\item[4.]
The optimization gives that $ L_{n,\text{tr}}[\hat{f}_{\text{gl}}]  \ge L_{n,\text{tr}}[f_{\text{con}}] $.
To handle $\hat{f}_{\text{tr}}$,
we assume the following on network training
\begin{assump}\label{assump:training}
The neural network training of optimizing $\max_{f \in {\cal F}_\Theta}L_{n,\text{tr}}[f]$
outputs $\hat{f}_{\text{tr}}$ which achieves a training loss that is $\Delta C$-close to the global optimum, $\Delta C$ small enough,
namely, \[
L_{n,\text{tr}}[\hat{f}_{\text{gl}}] - \Delta C 
\le L_{n,\text{tr}}[\hat{f}_{\text{tr}}] \le  L_{n,\text{tr}}[\hat{f}_{\text{gl}}].
\]
If the training algorithm is stochastic, then the above holds with high probability. 
\end{assump}
Some recent neural network optimization literature supports this assumption \cite{lu2017depth,kawaguchi2019elimination}.
In certain settings different than ours, it is proved that $\Delta C$ can be made small,
especially when training with large samples and using over-parametrized networks \cite{mei2018mean,arora2019exact,arora2019fine}.
Our experiments in Section \ref{sec:1dexample} show that $\Delta C$ can be relatively small in practice.  
In the current paper we do not further analyze the optimization error.
With the above assumption, $L_{n,\text{tr}}[\hat{f}_{\text{tr}}] > \text{JSD} - \Delta C - 3\varepsilon$.

\item[5.]
By the bound in Step 3., 
$|  L[\hat{f}_{\text{tr}} ] - L_{n,\text{tr}}[\hat{f}_{\text{tr}} ]|$ is small. 
Then 
$L[\hat{f}_{\text{tr}}] > \text{JSD} - \Delta C - 4\varepsilon$.
\end{itemize}

The above steps derive a non-zero lower bound of $L[\hat{f}_{\text{tr}}] $,
which then leads to a non-zero population test statistic 
$T[\hat{f}_{\text{tr}}]$. 
The testing consistency is then proved by the asymptotic normality of the empirical statistic $\hat{T}$  \eqref{eq:def-hatT-logratio}
(on testing set, rather than training set) by CLT. 
Throughout the steps, an important new analysis is for the near-manifold densities,
which we detail in Section \ref{sec:theory-approx} in a more general form.
The steps towards proving the test consistency are carried out in Section \ref{sec:theory-consist}.

\section{Example of Data on $\R$ and 1D Manifold}\label{sec:1dexample}

\begin{figure}[t]
\begin{center}
		\begin{tabular}{ c c}
		        \hspace{-1.75cm}
		        
		         \raisebox{0.75cm}{
			 \stackunder[5pt]{
			\includegraphics[height=.23\textwidth]{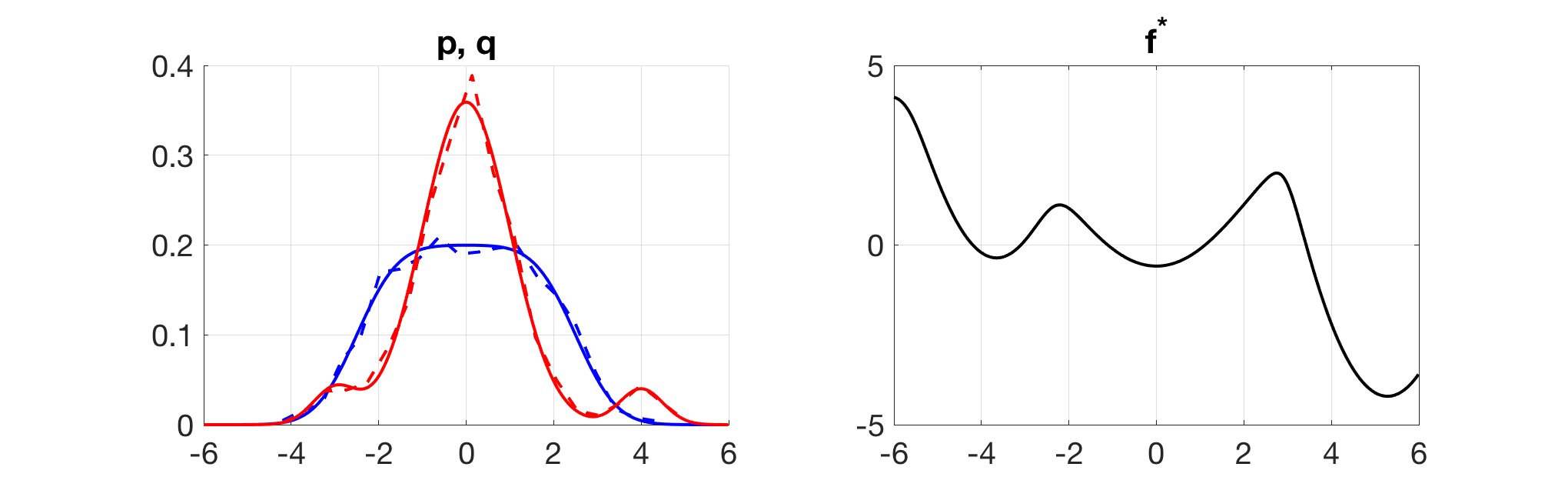}
			}{Example 1. Data in $\R$.}
			}
			 & 
			\hspace{-1.5cm}
			\includegraphics[height=.3\textwidth]{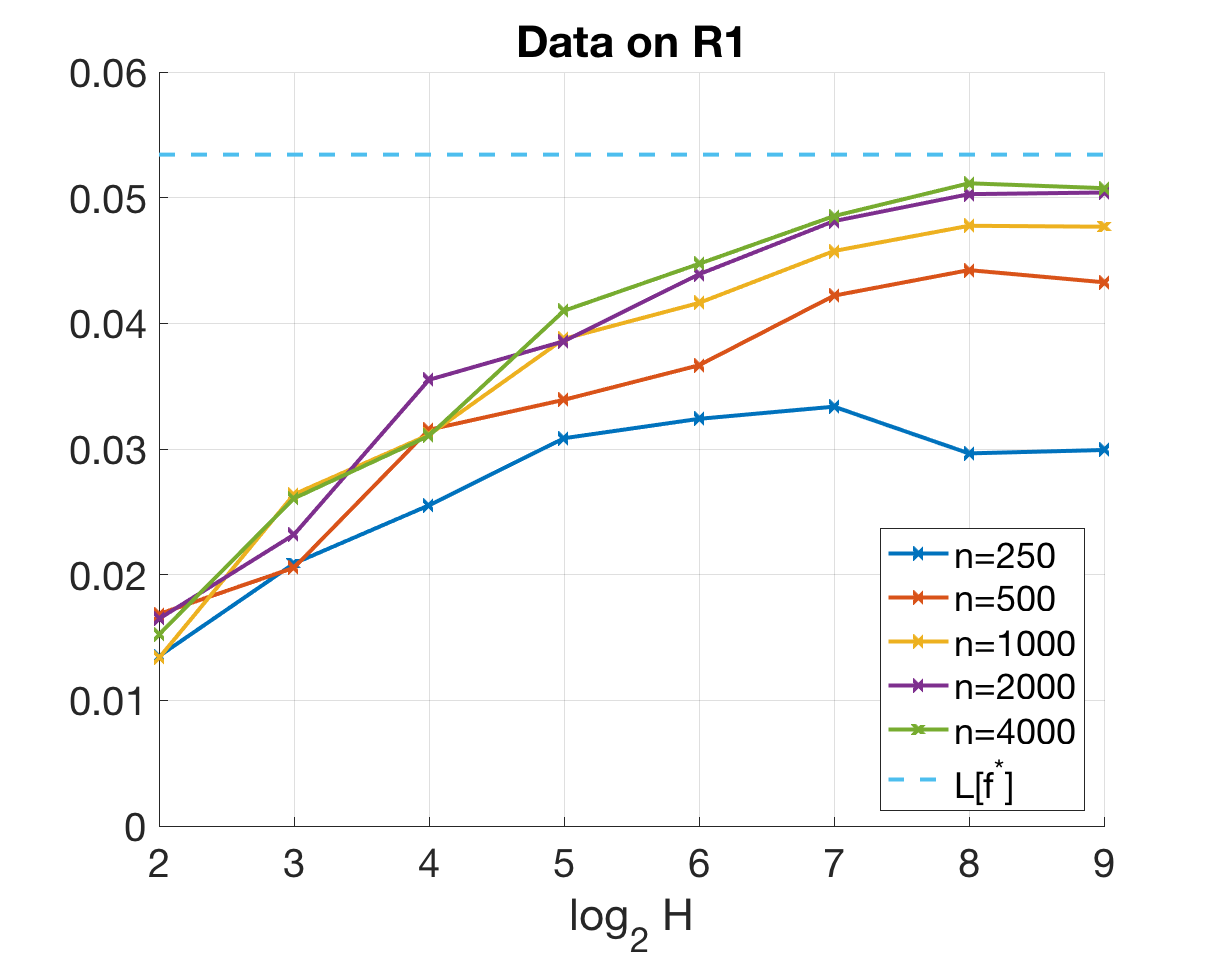} 
			 \\
			  \hspace{-1.75cm}
			 
			 \raisebox{0.75cm}{
			 \stackunder[5pt]{
			 \includegraphics[height=.23\textwidth]{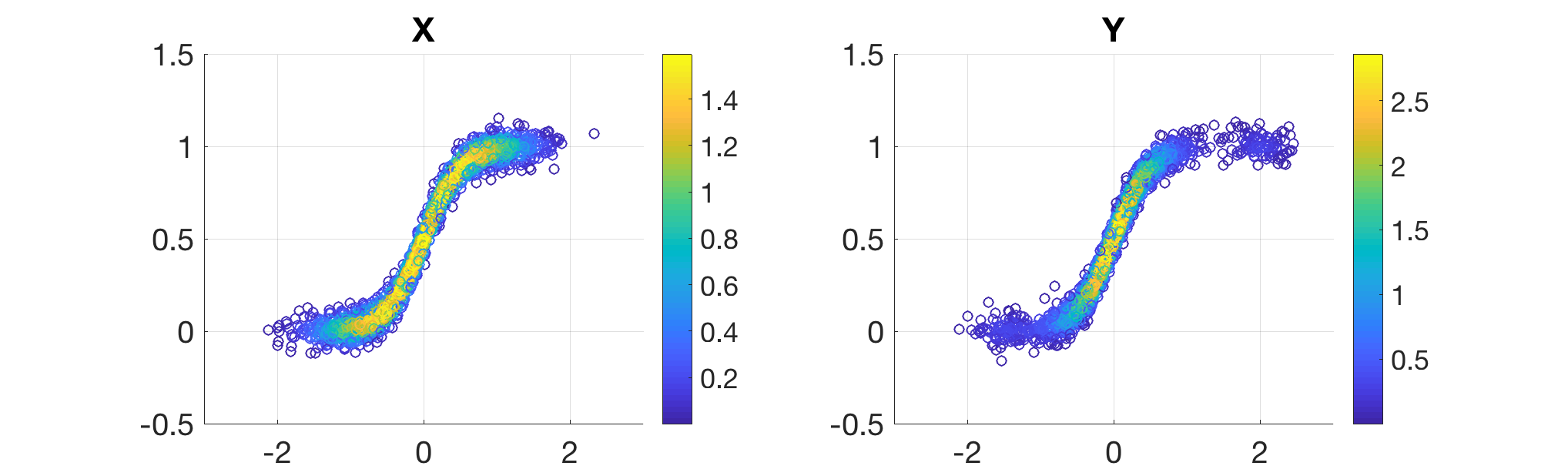}
			 }{Example 2. Data near a 1D manifold in $\R^2$}
			}			
			 & 
			\hspace{-1.5cm}
			\includegraphics[height=.3\textwidth]{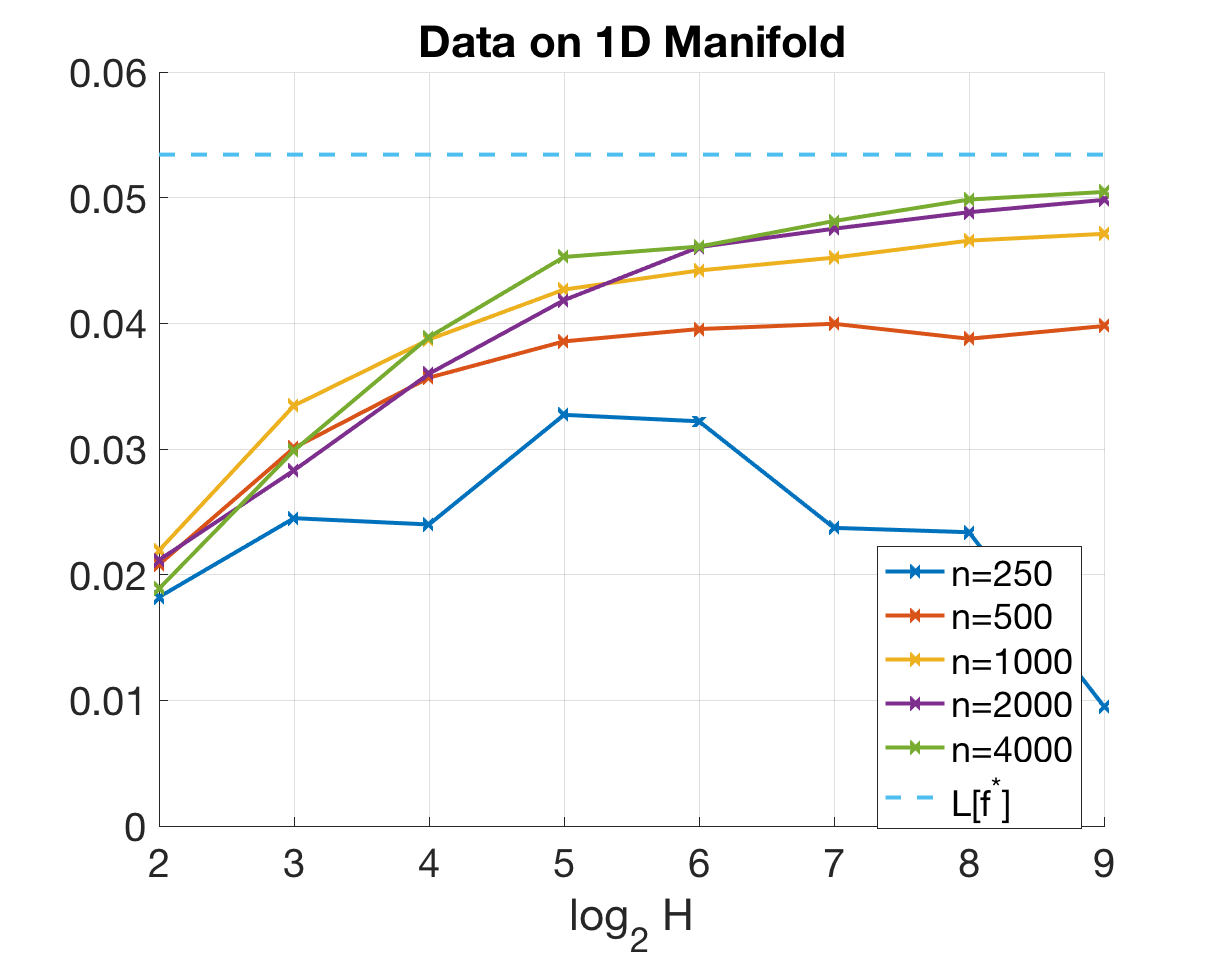} 
		\end{tabular}
\end{center}
\vspace{-20pt}
\caption{
(Left) Datasets $X \sim p$ and $Y \sim q$ of the two examples in Section \ref{sec:1dexample}.
In the plot of $p$, $q$ of Example 1, the dashed line shows the histogram of samples in $X$ and $Y$ respectively.
(Right) 
The values of $L[\hat{f}_\text{tr}]$ (averaged over 40 replicas of experiments) plotted against $log_2(H)$,
where $H$ is the width of the hidden layer in the classification neural network,
$H$ indicating the complexity of ${\cal F}_\Theta$.
The dashed line shows $L[f^*] = \text{JSD}(p,q)$, 
which is the unconstraint maximizer of $L$ as in \eqref{eq:training-loss-population}.
} 
\label{fig:eg-1d}
\vspace{-5pt}
\end{figure}

This section presents experiments on two sets of 1D data, 
in $\R$, and in $\R^2$ but lying near to a 1D curve, respectively. 
The experiments are set to verify the Assumption \ref{assump:training} 
and to observe the influence of the intrinsic dimensionality of data.
More experiments of two-sample testing are reported in Section \ref{sec:experiment}.

{\bf Datasets, training and evaluaton.}
The densities are Gaussian mixtures and have analytical formulas. Plots of the datasets are given in Fig. \ref{fig:eg-1d} (left). 
Specifically,
\begin{itemize}
\item
Example 1.

Distribution of $p$ is
$ \frac{1}{5}( {\cal N}(-2, \sigma_p^2) + {\cal N}(-1, \sigma_p^2) + {\cal N}(0, \sigma_p^2) + {\cal N}(1, \sigma_p^2) + {\cal N}(2, \sigma_p^2) )$,
where $\sigma_p= 0.8$;
Distribution of $q$ is
$ (1-\delta) {\cal N}(0,1) + \frac{\delta}{2} {\cal N}(-3,\sigma_q^2) + \frac{\delta}{2} {\cal N}(4,\sigma_q^2)$,
where $\sigma_q$ = 0.5, $\delta= 0.1$.

\item
Example 2. The data vector $(x,y)$ is defined by
\[
x = \frac{t}{2}, \quad y = \text{Sigmoid}( 2t) + s,
\quad \text{Sigmoid}(z) = \frac{1}{1 + exp(-z)},
\]
where the random variables $t \sim p$ or $q$ as in Example 1, and $s \sim {\cal N}(0, \sigma_s^2)$ independently from $t$, $\sigma_s = 0.05$.
The finite $\sigma_s$ adds a small Gaussian noise to the $y$-coordinate such that the 2D data lie near to a 1D manifold, given by the curve when $\sigma_s = 0$.
\end{itemize}

For both examples, the network has one hidden layer of $H$ neurons, $H= 4, 8, \cdots, 512$,
and ReLU activation function. 
The training is by stochastic optimization and 40 replicas are conducted.
For a given witness function $f$, either analytic or parametrized by a trained neural network,
we compute the value of $L[f]$ by a numerical integration on the 1D or 2D domain.

{\bf Results.}
The values of $L[\hat{f}_{\text{tr}}]$, averaged over replicas,
are plotted against $H$ for increasing number of training samples $n=|X_{\text{tr}}|+|Y_{\text{tr}}| = 250, 500, \cdots, 4000$,
as shown in Fig. \ref{fig:eg-1d} (right).
The mean and standard deviation of $L[f_{\text{tr}}]$ over replicas are shown in Table \ref{tab:eg-1d-stdL}.
As shown in the plot and the table, the network training achieves $L[\hat{f}_{\text{tr}}]$ more stably as $n$ increases,
and the curves indicate reliable pattern except for small values of $n$ ($n=$250)
and the first few small values of $H$ ($H \le 32$).
The experimental results show that 
\begin{itemize}
\item
As the network complexity increases, 
the curve of $L[\hat{f}_{\text{tr}}]$ goes up and approach  the unconstrained maximum $L[f^*]$.
For larger $n$ the difference $L[f^*] - L[\hat{f}_{\text{tr}}]$ is smaller,
indicating that increasing training size $n$ reduces the influence of finite-sample.
Since 
$L[\hat{f}_{\text{gl}}] $ lies between $L[f^*]$ and $L[\hat{f}_{\text{tr}}]$,
this means that the stochastic optimization achieves a loss
which is $\Delta C$-close to that of $\hat{f}_{\text{gl}}$, supporting Assumption \ref{assump:training},
and one may furtherly expect $\Delta C$ to be small at least when $H$ is large.

\item
The trend of the increase of  $L[\hat{f}_{\text{tr}}]$ as the network complexity increases 
behaves similarly for Example 1 and Example 2,
which indicates that it is the the intrinsic geometry of the datasets 
that affects the efficiency of the network to produce a witness function with differential power. 
\end{itemize}

The second observation that for on or near-manifold datasets, 
only the intrinsic geometric complexity
influences the performance of neural network methods has been reported in experimental literature \cite{ansuini2019intrinsic,rifai2011manifold} and approximation literature \cite{shaham2018provable,chen2019nonparametric,schmidt2019deep}.
This motivates our theoretical work to reduce the needed network complexity 
to only scale with the intrinsic complexity of the densities $p$ and $q$.
Our result provides an explanation from the viewpoint of approximation and estimation error analysis.

\section{Neural Network Approximation of Near-manifold Integrals}
\label{sec:theory-approx}

This section establishes a result that,
for a near-manifold density $p$
(described by exponential decay of $p$ away from the manifold ${\cal M}$),
the uniform approximation of a $\R^D$-Lipschitz function in $L^\infty({\cal M})$
implies approximation in $L^1(\R^D, p)$ up to an error proportional to the scale of the exponential tail.
This serves to resolve Step 2 in Section \ref{subsec:roadmap},
where, 
since only on-manifold approximation suffices,
the network complexity scales with the
 intrinsic manifold dimensionality 
 and the target function restricted to ${\cal M}$.
The approximation in $L^1(\R^D, p)$ implied by that in $L^\infty({\cal M})$
is also used in the estimation error analysis in Steps 3-5 in Section \ref{sec:theory-consist}.
We give the near-manifold-$p$-integral approximation result 
in a slightly more general form than that of $L[f]$ as in \eqref{eq:training-loss-population},
which can be of independent interest. 
All proofs in Section \ref{sec:main-proofs}.

\subsection{The integral approximation result}

Let ${\cal M} \subset \R^D$ be a compact smooth manifold of dimension $d$.
We define ${\cal P}_\sigma$ to be the class of densities in $\R^D$ which decay exponentially fast away from the manifold ${\cal M}$,
that is, for some small $ 0 < \sigma < 1$, and absolute constant $c_1 > 0 $,
\begin{equation}\label{eq:density-mannifold-exp-decay}
{\cal P}_\sigma = \{ p \text{ density on $\R^D$ s.t. } \text{Pr}_{X \sim p}[ d(X, {\cal M}) > t] \le c_1 e^{- \frac{t}{\sigma}}\},
\end{equation}
where $d(x, {\cal M}) := \inf_{ y \in {\cal M}} \|x-y\|_2$ for any $x \in \R^D$.
We treat $c_1$ as fixed throughout the analysis and $\sigma$ as the parameter indicating the size of the exponential tail. 

The aim of the analysis is to approximately compute the integral 
\begin{equation}\label{eq:def-If}
I[f] : = \int_{\R^D} p(x) T(f(x)) dx,
\end{equation}
by replacing $f$ with a neural network function $f_\theta$,
where $T$ is a 1D Lipschitz function so as to make the result more general,
and $p \in {\cal P}_\sigma$ which concentrates near the manifold. 
Due to the exponential decay of the density $p$, we expect the integral $I[f]$ to be dominated by the contribution of the integral on the manifold. 
Indeed, restricting $f$ to be on the manifold allows us to approximate $f|_{\cal M}$ by a neural network whose model complexity  
as approximation error $\to 0$ 
only depends on $f|_{\cal M}$ and the intrinsic manifold geometry. 
The bound of $|I[f] - I[f_\theta]|$, which involves integral in the ambient space,
is then obtained by considering the corresponding on-manifold integrals of $f$ and $f_\theta$ respectively, which gives the following Theorem.

The on-manifold function approximation by neural network is built upon the result in \cite{shaham2018provable}, which starts from an atlas on ${\cal M}$,
and will be set up in Section \ref{subsec:setup-manifold}. 
The subscript of Lip denotes the domain on which the Lipschitz continuity is considered.

\begin{theorem}\label{thm:ambient-intergral-approx}
Suppose that $f: \R^D \to \R$ is Lipschitz on $\R^D$, $f|_{\cal M}$ is $C^2$ on the manifold,
$T: \R \to \R$ is Lipschitz-1,
and $p \in {\cal P}_\sigma$ with $\sigma < \frac{1}{2}$.
Consider $I[f]$ as in \eqref{eq:def-If},
then 
for any $  \varepsilon < 1$,
there is a neural network architecture $\Theta$ with $ O_{f ,{\cal M}}(\varepsilon^{-d/2}) + N_0$ many trainable parameters, 
and a function $f_{con} \in {\cal F}_\Theta$ such that 
\begin{equation}\label{eq:bound-thm-ambient-integral}
|I[f_\text{con}] - I[f]| \le (1+ 2 C_1({\cal M}) c_1 \sigma)  \varepsilon + C_3( f, {\cal M}) c_1 \sigma,
\end{equation}
where  
\begin{equation}\label{eq:C3-thm-ambient-integral}
C_3( f, {\cal M}) := 
 2 C_1({\cal M})    \| T\circ f \|_{L^\infty({\cal M})} 
 +
C_2({\cal M}) 
 ( L_{{\cal M}, f}   + \text{Lip}_{\R^D}( f )  ),
\end{equation}
$C_1({\cal M})$, $C_2({\cal M})$  are as in \eqref{eq:def-C1C2} and only depending on the manifold-atlas,
the meaning of $O_{f,{\cal M}}(\cdot )$, $N_0$ and $L_{{\cal M},  f}$ 
is the same as in Theorem \ref{thm:mainthm-shaham2018provable},
noting that $f$ in Theorem \ref{thm:mainthm-shaham2018provable} is $f|_{\cal M}$ here. 
In particular, none of the three changes with $\varepsilon$, and $N_0$ is also independent of $f$ but involves manifold-atlas and $D$. 
\end{theorem}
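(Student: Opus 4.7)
The strategy is to convert the ambient-space integral \eqref{eq:def-If} into a manifold integral up to an error that shrinks with the tail scale $\sigma$, apply the atlas-based construction of \cite{shaham2018provable} (Theorem~\ref{thm:mainthm-shaham2018provable}) to approximate $f|_{\mathcal{M}}$ uniformly on $\mathcal{M}$, and then propagate the on-manifold approximation back to the ambient integral via Lipschitz control and the exponential tail of $p$. Since $T$ is Lipschitz-$1$,
\begin{equation*}
|I[f] - I[f_{\text{con}}]| \le \int_{\R^D} p(x)\, |T(f(x)) - T(f_{\text{con}}(x))|\, dx,
\end{equation*}
so the whole task reduces to a pointwise bound on $|T(f(x)) - T(f_{\text{con}}(x))|$ followed by integration against $p$.

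First I would invoke Theorem~\ref{thm:mainthm-shaham2018provable} on $f|_{\mathcal{M}}$ (which is $C^2$ by assumption) to produce $f_{\text{con}} \in \mathcal{F}_\Theta$ with $O_{f, \mathcal{M}}(\varepsilon^{-d/2}) + N_0$ parameters satisfying $\|f - f_{\text{con}}\|_{L^\infty(\mathcal{M})} \le \varepsilon$; here $N_0$ is the atlas- and $D$-dependent overhead. The underlying partition-of-unity construction localizes $f_{\text{con}}$ to a tubular neighborhood $\mathcal{M}_\delta$ of fixed width $\delta$ (determined by the atlas), and supplies atlas-dependent Lipschitz constants that, combined with $L_{\mathcal{M}, f}$ and $\text{Lip}_{\R^D}(f)$, become the $C_1(\mathcal{M})$ and $C_2(\mathcal{M})$ of \eqref{eq:C3-thm-ambient-integral}.

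Next, for $x$ inside the reach of $\mathcal{M}$ I would use the nearest-point projection $\pi(x) \in \mathcal{M}$ together with the triangle inequality to obtain
\begin{equation*}
|T(f(x)) - T(f_{\text{con}}(x))| \le \varepsilon + \bigl( \text{Lip}_{\R^D}(f) + L_{\mathcal{M}, f_{\text{con}}} \bigr) d(x, \mathcal{M}),
\end{equation*}
where the $\varepsilon$ is the on-manifold error at $\pi(x)$ and $L_{\mathcal{M}, f_{\text{con}}}$ bounds the normal-direction variation of $f_{\text{con}}$ through $L_{\mathcal{M}, f}$ plus atlas factors. For $x$ outside this strip I would bound each side separately: $|T(f(x))| \le \|T \circ f\|_{L^\infty(\mathcal{M})} + \text{Lip}_{\R^D}(f)\, d(x, \mathcal{M})$, and $|T(f_{\text{con}}(x))| \le \|T \circ f\|_{L^\infty(\mathcal{M})} + \varepsilon$ plus an atlas-dependent correction for the off-manifold decay of $f_{\text{con}}$. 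Integrating against $p$ and invoking the tail estimates $\E_{X \sim p}[d(X, \mathcal{M})] \le c_1 \sigma$ and $\PP[d(X, \mathcal{M}) > t] \le c_1 e^{-t/\sigma}$ (both direct consequences of \eqref{eq:density-mannifold-exp-decay}), the $\varepsilon$ contributions consolidate into $(1 + 2 C_1(\mathcal{M}) c_1 \sigma)\varepsilon$ while the $\|T \circ f\|_{L^\infty(\mathcal{M})}$, $L_{\mathcal{M}, f}$, and $\text{Lip}_{\R^D}(f)$ contributions consolidate into $C_3(f, \mathcal{M}) c_1 \sigma$. The hypothesis $\sigma < 1/2$ ensures $\delta$ can be fixed inside the reach while keeping all tail prefactors bounded.

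The main technical obstacle will be identifying the atlas-dependent constants $C_1(\mathcal{M})$ and $C_2(\mathcal{M})$ in \eqref{eq:def-C1C2} from the partition-of-unity bounds, chart Lipschitz constants, normal-projection geometry, and atlas overlap multiplicity of the Shaham construction, and verifying that the normal-direction Lipschitz constant of $f_{\text{con}}$ composes cleanly from the local ReLU-approximation Lipschitz bounds and the chart maps so that no additional $\varepsilon$-dependent factors appear and only $N_0$ absorbs the atlas overhead. Once those atlas constants are pinned down, the remaining estimate is a routine regional decomposition into tubular-neighborhood and exponential-tail contributions.
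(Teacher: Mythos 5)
Your proposal is correct, but it follows a genuinely different route from the paper. The paper proves the theorem by passing through Proposition~\ref{prop:integral-swap}: both $I[f]$ and $I[f_{\text{con}}]$ are rewritten as manifold integrals against a projected density $\tilde p$ built from the extended partition of unity $\tilde\eta_i$, the two manifold integrals are compared using $\|f-f_{\text{con}}\|_{L^\infty(\mathcal{M})}\le\varepsilon$ and \eqref{eq:tildep-almost-density}, and the errors $r_1,r_2$ from the two applications of the integral-comparison proposition produce exactly the constants $C_1(\mathcal{M}),C_2(\mathcal{M})$ of \eqref{eq:def-C1C2}. You instead bound $|T(f(x))-T(f_{\text{con}}(x))|$ pointwise via the nearest point on $\mathcal{M}$ and then integrate against $p$, using only three ingredients that are all available: the uniform on-manifold error $\varepsilon$ and the global bound $\text{Lip}_{\R^D}(f_{\text{con}})\le L_{\mathcal{M},f}$ from Theorem~\ref{thm:mainthm-shaham2018provable}, plus $\E_{X\sim p}\,d(X,\mathcal{M})\le c_1\sigma$ (the paper's Lemma~\ref{lemma-c3}). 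Carried out cleanly this gives $|I[f]-I[f_{\text{con}}]|\le\varepsilon+(\text{Lip}_{\R^D}(f)+L_{\mathcal{M},f})\,c_1\sigma$, which is in fact slightly sharper than \eqref{eq:bound-thm-ambient-integral} and implies it, since $C_2(\mathcal{M})\ge 1$ and the remaining terms are nonnegative. Two simplifications you should note: the split into a tubular strip and a far region, and the appeal to the reach, are unnecessary — the nearest-point projection exists for every $x$ by compactness (uniqueness is irrelevant) and both $f$ and $f_{\text{con}}$ are globally Lipschitz, so the triangle-inequality bound holds on all of $\R^D$ and the vague ``off-manifold decay correction'' for $f_{\text{con}}$ never enters; and your anticipated difficulty of recovering $C_1,C_2$ from the chart geometry is moot, since the theorem only asserts an upper bound and your constants are dominated by the stated ones. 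What your route does not give you is the projected-density machinery itself (the $\tilde\eta_i$, $\tilde p_i$ and Proposition~\ref{prop:integral-swap}), which the paper reuses in the estimation-error analysis for near-manifold densities (case (3) of Proposition~\ref{prop:conc-supF-Ln}); your argument proves this theorem more economically, while the paper's heavier lemma buys a tool used again later.
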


The proof combines the integral comparison for Lipschitz functions on $\R^D$ in Proposition \ref{prop:integral-swap},
and the on-manifold function approximation in Theorem \ref{thm:mainthm-shaham2018provable}, 
where it is proved that $\text{Lip}_{\R^D}(f_{con}) \le L_{{\cal M},  f}$ 
using the wavelet construction of $f_{con}$. In Section \ref{sec:theory-consist}, 
we will introduce a Lipschitz regularization of the network family (Assumption \ref{assump:netLip}), 
and further bound the $\infty$-norm of $T\circ f$ on ${\cal M}$ in the two-sample testing context.

\begin{figure}[t]
\centering{
~~~~~~~~~~~~~~~~~
\includegraphics[trim={40pt 230pt 0pt 50pt},clip,height=.38\linewidth]{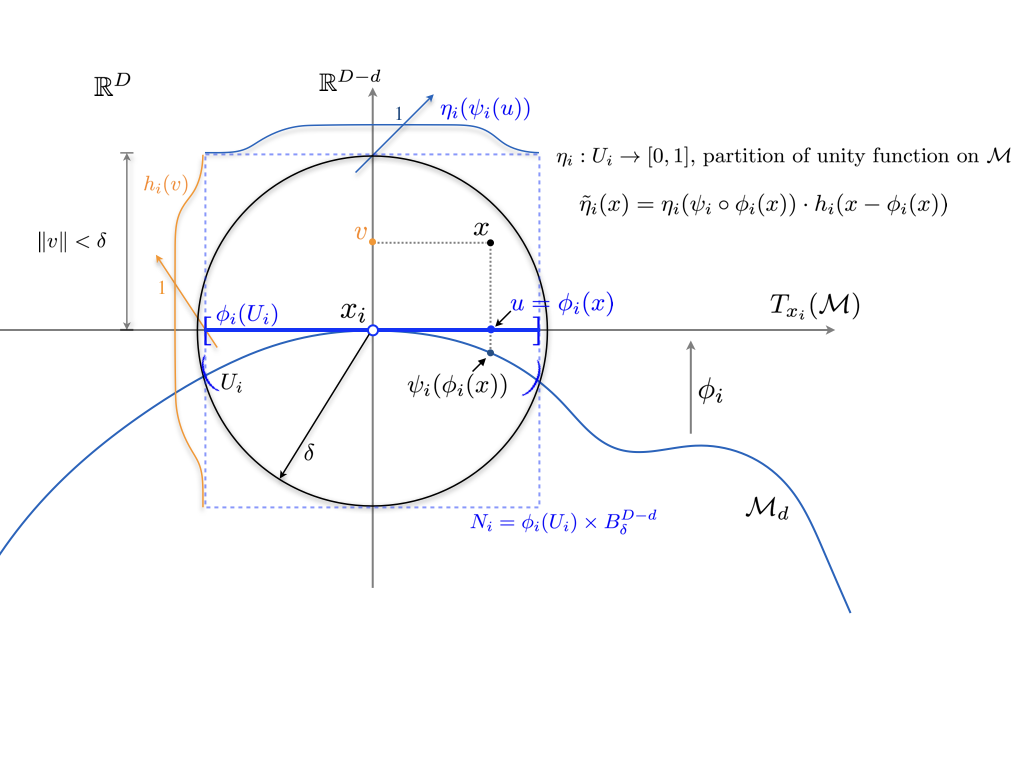}
}
\caption{
A $d$-dimensional manifold ${\cal M}$ embedded in the ambient space $\R^D$ (only part of ${\cal M}$ is shown), 
an atlas $\{(U_i, \phi_i)\}_{i=1}^K$ which gives a $\delta$-cover of ${\cal M}$,
and the extended partition of unity function $\tilde{\eta}_i$ defined on the neighborhood $N_i$ of $U_i$ in $\R^D$.
}
\label{fig:diag2}
\vspace{-10pt}
\end{figure}

\subsection{Manifold atlas and on-manifold function approximation}\label{subsec:setup-manifold}

We first establish some notations for the manifold and atlas cover.   
The manifold $\mathcal{M}$ can be covered with an atlas $\{(U_i, \phi_i)\}_{i=1}^{ K}$,
where $U_i = B(x_i,\delta)\cap \mathcal{M}$ is an open set on $\mathcal{M}$, 
$ 0 < \delta < 1$, the choice of which to be specified below.
The orthogonal projection
 $\phi_i:U_i\rightarrow \mathbb{R}^d$ is the map that takes $U_i$ to the tangent plane $T_{x_i}({\cal M})$.
We also define the map $\psi_i: \phi_i(U_i)\rightarrow U_i$, 
which is the inverse of $\phi_i$ due to the one-to-one correspondence between $U_i$ and $\phi_i(U_i)$.
Let $d_{\cal M}$ denote the manifold geodesic distance.
In the construction, the Euclidean ball radius $\delta$ is chosen to be small enough such that $B(x, 2\delta) \cap {\cal M}$ is isomorphic to a ball in $\R^d$ and 
there exist positive $\alpha_{i}$ and $\beta_{i}$ s.t.
\begin{equation}\label{eq:equiv-norm-Ui}
\alpha_{i} \| \phi_i(x) - \phi_i(x') \|_2 
\le 
d_{\cal M}(x, x')
\le
\beta_{i} \| \phi_i(x) - \phi_i(x') \|_2,
\quad 
\forall x, x' \in U_i,
\end{equation}
and for all $i$, 
\begin{equation}\label{eq:global-alpha-beta}
\alpha_{i} \ge \alpha_{\cal M} \ge \frac{1}{2}, 
\quad  1\le \beta_{i} \le \beta_{\cal M} \le 2.
\end{equation}
To see why this is possible:
for any point $x \in {\cal M}$, 
there is a $\delta_x >0$ and constants  $\frac{1}{2} < \alpha_x \le 1 \le  \beta_x < 2$
such that whenever $\delta' < \delta_x$,
the relation \eqref{eq:equiv-norm-Ui} with constants $\beta_x, \alpha_x$ 
holds on the neighborhood $U_x := B(x,\delta') \cap {\cal M}$,
and at the same time $B(x, 2\delta') \cap {\cal M}$ is isomorphic to a ball in $\R^d$.
This is because that the manifold is locally near Euclidean,
which means that $\beta_x, \alpha_x$ can be made close to 1 if $\delta' \to 0+$.
The existence of $\delta_x$ and $\beta_x, \alpha_x$ 
is due to manifold smoothness and the finiteness of the local curvature near $x$. 
The $\inf_{x \in {\cal M}} \delta_x$ exists and is positive due to compactness and smoothness of ${\cal M}$.
Setting that (and the minimum with 1)  as $\delta$ for all $x$ 
 leads to a finite cover of ${\cal M}$ which is $\{U_i\}_{i=1}^K$
with constants $\alpha_i, \beta_i$ on each $U_i$, and then the global $\alpha_{\cal M}, \beta_{\cal M}$ exist.
Note that while the atlas and particularly the radius $\delta$ depend on the embedding of ${\cal M}$ in the ambient space $\R^D$, the atlas remains valid if $D$ increases to $D'$, ${\cal M}$ is isometrically embedded into $\R^{D'}$, and the reach of the manifold is maintained \cite{aamari2019estimating},
e.g., 
when $\R^D$ is isometrically embedded in $\R^{D'}$. 
Thus we view any quantity which only depends on the $\delta$-atlas as intrinsic to the manifold geometry.

Given the covering atlas, there exists a partition of unity $\{\eta_i\}_{i=1}^{K}$  on ${\cal M}$ such that supp$(\eta_i) \subset U_i$, 
$\eta_i \in C^\infty({\cal M})$, and $\sum_{i=1}^{ K} \eta_i(x) = 1$ for all $x\in \mathcal{M}$. 
Under this setting, we have the following:

\begin{theorem}
\label{thm:mainthm-shaham2018provable}
Notations as above, 
suppose $f \in C^2(\mathcal{M})$, 
then for any $\varepsilon < 1$, 
there exists a four layer feed-forward network with rectified linear unit activations and 
$N = O_{f,\mathcal{M}}(\varepsilon^{-d/2}) + N_0$ 
parameters,
such that  the network function  $f_N: \R^D \to \R$ satisfies that
\begin{align*}
\|f - f_N\|_{L^\infty(\mathcal{M})} \le \varepsilon,
\end{align*}
where the constant in $O_{f,\mathcal{M}}(\cdot)$ only depends on $f$ and manifold-atlas,
specifically, it is $(d+K)\delta^d (K C_{f,\eta})^{d/2}$,
$K$ being the number of coverings in the $\delta$-atlas,
the constant $C_{f,\eta}$ depending on $d$ and up to 2nd manifold-derivatives of $f$ and the partition of unity functions of the $\delta$-atlas.
$N_0 = C( KdD + \frac{D}{\delta} \log\frac{D}{\delta}  )$,
 $C$ being an absolute constant, 
 and $N_0$ does not depend on $f$ nor change with $\varepsilon$.  

Furthermore, the constructed network function $f_N$ is globally Lipschitz and $Lip_{\mathbb{R}^D}(f_N) \le L_{\mathcal{M},f}$, which is a constant depending on $f$ and the manifold-atlas, but independent of $\varepsilon$ and $N$.
\end{theorem}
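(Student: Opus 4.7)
The plan is a partition-of-unity construction on the $\delta$-atlas $\{(U_i,\phi_i)\}_{i=1}^K$. I would start by writing, on $\mathcal{M}$, $f=\sum_{i=1}^K \eta_i f$, and for each chart introduce the pulled-back local function $g_i := (\eta_i f)\circ \psi_i$, which is $C^2$ on the compact domain $\phi_i(U_i)\subset \R^d$ with $C^2$-norm controlled by the $C^2$-norm of $f$ on $\mathcal{M}$, by $\eta_i$, and by the atlas constants $\alpha_i,\beta_i$ from \eqref{eq:equiv-norm-Ui}--\eqref{eq:global-alpha-beta}. The candidate network takes the form $f_N = \sum_{i=1}^K h_i(P_i(x))$, where $P_i:\R^D\to \R^d$ is the orthogonal projection onto the tangent plane $T_{x_i}\mathcal{M}$ extending $\phi_i$ (a linear first layer contributing $dD$ parameters per chart, hence the $KdD$ inside $N_0$), and $h_i$ is a ReLU subnetwork approximating $g_i$ on $\phi_i(U_i)$.

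For each chart I would approximate $g_i$ in $L^\infty(\phi_i(U_i))$ by a sparse dyadic rectifier-wavelet expansion built from tensor products of $d$ one-dimensional hats on a bounding box. The standard best-$N$-term estimate for $C^2$ functions on a box in $\R^d$ yields $L^\infty$ error $O(N^{-2/d})$, so choosing $N_i = O(\varepsilon^{-d/2})$ basis terms per chart delivers error $\varepsilon/K$ per chart, with the hidden constant absorbing $\|g_i\|_{C^2}$; collecting these gives the constant $C_{f,\eta}$ in the theorem. Each 1D hat is built from four ReLU units, and the $d$-fold tensor product assembles via the standard identity $\min(a,b)=a-\mathrm{ReLU}(a-b)$ in $O(d)$ extra units, so each wavelet term costs a fixed $O(d)$ parameters. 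Summing $K\cdot O(\varepsilon^{-d/2})$ elements yields the $\varepsilon^{-d/2}$-scaling in $N$, with the explicit prefactor $(d+K)\delta^d (KC_{f,\eta})^{d/2}$ tracking the atlas radius and per-chart budget.

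To control the assembly, I would confine each $h_i$ to $\phi_i(U_i)$ via the natural supports of its hat basis functions, so that summing across charts reproduces $\sum_i \eta_i f = f$ on $\mathcal{M}$ up to the cumulative wavelet error of at most $K\cdot(\varepsilon/K)=\varepsilon$. Off $\mathcal{M}$ the partition-of-unity identity fails, but the theorem only demands an $L^\infty(\mathcal{M})$ bound. A small chart-localization circuit, built once from thresholded distance tests against the chart centers $x_i$ in $\R^D$, is still needed to ensure that a point $x$ whose projection $P_i(x)$ happens to land in $\phi_i(U_i)$ receives the $i$-th contribution only when $x$ is also close to $U_i$ in the ambient space; encoding the $\delta$-neighborhoods through a near-orthogonal axis test gives the $\tfrac{D}{\delta}\log\tfrac{D}{\delta}$ overhead in $N_0$, which depends on $\delta$ and $D$ but not on $f$ or $\varepsilon$.

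For the Lipschitz conclusion, I would use that each hat-wavelet approximant of $g_i$ has slope bounded by $\|\nabla g_i\|_{L^\infty}+O(\varepsilon)$, composition with $P_i$ (operator norm $1$) preserves this up to the factor $\beta_i$, and summing over the fixed number of charts produces a global bound $L_{\mathcal{M},f}$ independent of $\varepsilon$. The main obstacle I expect is precisely the chart-localization step: the cutoff restricting $h_i\circ P_i$ to an $\R^D$-neighborhood of $U_i$ must itself be Lipschitz with a constant that does not refine as $\varepsilon\to 0$, and it must not perturb values on $\mathcal{M}$. I would handle this by keeping the wavelet subnetworks supported strictly inside $\phi_i(U_i)$, so the localization plays a purely geometric role fixed by the atlas, with its entire ReLU cost absorbed into the $\varepsilon$-independent $N_0$, leaving the $\varepsilon$-dependence of $N$ to come only from the $C^2$-wavelet count $\varepsilon^{-d/2}$.
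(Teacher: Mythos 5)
Your overall route is the same as the paper's (this result is essentially the construction of \cite{shaham2018provable}): chart-by-chart wavelet/ReLU approximation of $(\eta_i f)\circ\psi_i$ on $\phi_i(U_i)$ with $O(\varepsilon^{-d/2})$ terms from the $C^2$ coefficient decay, a linear change-of-coordinates layer per chart giving the $KdD$ term, and an ambient localization whose cost goes into $N_0$. However, there is a genuine gap at exactly the step you flag as the main obstacle, and your proposed fix does not close it. Keeping the wavelet subnetworks ``supported strictly inside $\phi_i(U_i)$'' is irrelevant to the problematic case: a point $x\in\mathcal{M}\setminus U_i$ can have $\phi_i(x)\in\phi_i(U_i)$, and there your per-chart approximant $h_i(P_i(x))$ is close to $(\eta_i f)(\psi_i(\phi_i(x)))$, which can be of size $\|f\|_{L^\infty(\mathcal{M})}$, while the true contribution $(\eta_i f)(x)$ is $0$; without a working gate the summed error on $\mathcal{M}$ is $O(\|f\|_\infty)$, not $O(\varepsilon)$. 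You do say a localization circuit is needed, but you never explain how a ReLU network \emph{multiplies} the gate into the chart value, and ReLU networks cannot form products directly (an approximate-multiplication module would reintroduce $\varepsilon$-dependent cost). The paper's key device is the exact, product-free gating identity \eqref{eq:def-fNi-new}: with $F_0=\|f\|_{L^\infty(\mathcal{M})}+1$ one forms $\mathrm{Relu}(\mathrm{Relu}(\pm\hat f_{N,i}(u))+F_0 g_\delta(v)-F_0)$, which passes $\hat f_{N,i}$ through unchanged when $g_\delta(v)=1$, outputs exactly $0$ when $g_\delta(v)=0$ (using the a priori bound $|\hat f_{N,i}|<F_0$), and is always dominated by $|\hat f_{N,i}(u)|$ in between. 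This must be combined with the geometric separation that makes a fixed-width transition possible: points of $U_i$ have normal distance $\|x-\phi_i(x)\|\le\frac{\sqrt3}{2}\delta$ (Lemma \ref{lemma:manifold-in-tube}), while manifold points outside $U_i$ that project into $\phi_i(U_i)$ have normal distance $>\delta$ (via the $B_{2\delta}$-isomorphism built into the atlas), so the bump $g_\delta$ of Lemma \ref{lemma:construct-g-delta}, which is $1$ on $\|v\|\le\frac{\sqrt3}{2}\delta$ and $0$ on $\|v\|\ge\delta$ with $O(\frac{D}{\delta}\log\frac{D}{\delta})$ parameters, does the job. Your ``thresholded distance tests against the chart centers'' gate on $\|x-x_i\|$ instead of on the normal coordinate; there the clean $[\frac{\sqrt3}{2}\delta,\delta]$ margin is not available, so you would at least need a separate compact-support margin argument for $\eta_i$.

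A secondary weakness: your Lipschitz argument asserts the truncated wavelet approximant has slope $\|\nabla g_i\|_{L^\infty}+O(\varepsilon)$. An $L^\infty$-accurate truncation need not have gradient close to that of the target; the correct (and sufficient) statement is that the gradient is bounded \emph{independently of} $\varepsilon$, obtained by summing $|c_{k,b}|\,\|\nabla\psi_{k,b}\|$ over scales, which converges geometrically thanks to the $C^2$ coefficient decay. You also need to account for the gate's Lipschitz constant $O(F_0/\delta)$ in $L_{\mathcal{M},f}$, which your sketch omits.
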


The proof is by constructing a sub-network function $\bar{f}_{N,i}$
on a neighborhood $N_i \subset \R^D$ around each $U_i \subset {\cal M}$ to approximate $f \eta_i$ respectively, and then taking a sum over $i$ using the partition of unity property. 
Specifically, for each $U_i$, the neighborhood $N_i$ is defined as
\begin{equation}\label{eq:def-Ni}
N_i : = \phi_i (U_i) \times B_\delta^{D-d},
\quad B_\delta^{D-d}:= \{ x \in \R^{D-d}, \, \|x\|_2 < \delta \},
\end{equation}
thus $\phi_i(N_i) = \phi_i(U_i)$,
where we also denote $\phi_i$ as the orthogonal projection from $\R^D$ to the tangent space $T_{x_i}({\cal M})$. 
The relation \eqref{eq:equiv-norm-Ui}\eqref{eq:global-alpha-beta} gives the following lemma,
\begin{lemma}\label{lemma:manifold-in-tube}
For any $x \in U_i$,
$\|x - \phi_i(x)\| < \delta \sqrt{1-\frac{1}{\beta_i^2}} \le \frac{\sqrt{3}}{2} \delta$.
\end{lemma}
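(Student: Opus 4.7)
The plan is to exploit the Pythagorean decomposition afforded by $\phi_i$ being an orthogonal projection onto the affine tangent plane $T_{x_i}(\mathcal{M})$, together with the metric equivalence \eqref{eq:equiv-norm-Ui}. Since $x_i$ itself lies on $T_{x_i}(\mathcal{M})$, we have $\phi_i(x_i)=x_i$, and for any $x \in U_i$ the vector $x-\phi_i(x)$ is orthogonal to $\phi_i(x)-x_i$. Hence Pythagoras yields
\begin{equation*}
\|x-x_i\|^2 \;=\; \|\phi_i(x)-x_i\|^2 + \|x-\phi_i(x)\|^2,
\end{equation*}
so it suffices to bound $\|\phi_i(x)-x_i\|$ from below.

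For this, I would combine two standard facts. First, the Euclidean (chord) distance is always dominated by the manifold geodesic distance: $\|x-x_i\|\le d_{\mathcal{M}}(x,x_i)$. Second, the right inequality in \eqref{eq:equiv-norm-Ui} applied to $x,x_i\in U_i$ gives $d_{\mathcal{M}}(x,x_i)\le \beta_i\|\phi_i(x)-\phi_i(x_i)\| = \beta_i\|\phi_i(x)-x_i\|$. Chaining these two yields the desired lower bound $\|\phi_i(x)-x_i\|\ge \|x-x_i\|/\beta_i$.

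Plugging back into the Pythagorean identity gives
\begin{equation*}
\|x-\phi_i(x)\|^2 \;\le\; \|x-x_i\|^2\left(1-\tfrac{1}{\beta_i^2}\right) \;<\; \delta^2\left(1-\tfrac{1}{\beta_i^2}\right),
\end{equation*}
where the strict inequality uses $x\in B(x_i,\delta)$, so $\|x-x_i\|<\delta$. Taking square roots delivers the first stated bound. The second bound is immediate from the global control $\beta_i\le\beta_{\mathcal{M}}\le 2$ in \eqref{eq:global-alpha-beta}, which gives $1-1/\beta_i^2\le 3/4$.

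There is no real obstacle: the only subtlety is checking that $\phi_i(x_i)=x_i$, which holds because the tangent plane at $x_i$ contains $x_i$ and $\phi_i$ is defined as the orthogonal projection onto that plane, so the Pythagorean split is on solid footing. Everything else is a one-line chain of inequalities.
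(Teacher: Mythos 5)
Your proposal is correct and follows essentially the same route as the paper's proof: both use $\phi_i(x_i)=x_i$, the chain $\|x-x_i\|\le d_{\cal M}(x,x_i)\le \beta_i\|\phi_i(x)-x_i\|$ from \eqref{eq:equiv-norm-Ui}, and the Pythagorean identity for the orthogonal projection, followed by $\|x-x_i\|<\delta$ and $\beta_i\le 2$ from \eqref{eq:global-alpha-beta}. Nothing further is needed.
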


The sub-network function $\bar{f}_{N,i}(x)$, $x=(u,v)$ in local coordinates, is constructed by first approximating $f \eta_i(\psi_i(u))$, viewed as a function on $\R^d$, by some $\hat{f}_{N,i}(u)$, $u \in \R^d$, and then extending constantly into $N_i$ for a distance  $< \delta$ by a tent-like function $g_i(v)$ on $\R^{D-d}$ which is 1 when $\| v \| < \frac{\sqrt{3}}{2}\delta$ and 0 when $\|v \| \ge 1$. Lemma \ref{lemma:manifold-in-tube} guarantees that $\bar{f}_{N,i}$ restricted to $x \in U_i$ equals $\hat{f}_{N,i}(u)$ because $g_i(v)=1$ on $U_i$. 
Thus $f_N$ by taking a sum $\sum_{i=1}^K \bar{f}_{N,i}$ is uniformly approximating $f$ on the manifold. 
The Lipschitz constant of $f_N$ is proved by considering that of $\hat{f}_{N,i}$ and $g_i$ respectively which bounds each $\text{Lip}_{\R^D}(\bar{f}_{N,i})$.
The statement regarding the global Lipschitz continuity of $f_N$ is a byproduct of the construction in \cite{shaham2018provable}, but was not explicitly stated therein. 
For completeness, we provide a proof of the Theorem in Section \ref{sec:main-proofs}.

\subsection{Comparison of on and near-manifold integrals}

As we would like to analyze the integral of near manifold density in the ambient space, we extend the partition of unity function $\eta_i$ to the neighborhood $N_i$ defined as in \eqref{eq:def-Ni} as
\begin{equation}\label{eq:def-tileta}
\tilde{\eta}_i (x) = \eta_i( \psi_i\circ \phi_i(x) ) \cdot h_i( x - \phi_i(x) ), 
\end{equation}
where $h_i( x)$ is continuous on $ \R^{D-d}$, vanishes outside $B_\delta^{D-d}$, $0 \le h_i \le 1$ and 
\begin{equation}\label{eq:hx=1}
h_i( x ) =1  \text{ if } \|x\| \le  \delta \sqrt{1-\frac{1}{\beta_i^2} },
\quad
\text{Lip}( h_i ) < \frac{2\beta_i^2}{\delta}.
\end{equation}
This can be constructed, e.g., by $h_i(x) = g(\|x\| / \delta)$  where $g(r)=1$ on $[0, \sqrt{1-\frac{1}{\beta_i^2} }]$, 0 outside $r > 1$,
and linearly interpolating in between. 
This construction guarantees the following properties of the extended function $\tilde{\eta}_i$'s:

\begin{lemma}\label{lemma:lip-etai}
For $i=1,\cdots,K$, 
$\tilde{\eta}_i$ as a function in $\R^D$
vanishes outside $N_i$,
equals $\eta_i$ on $U_i$,
is Lipschitz continuous on $\R^D$, 
and, for all $i$,
\[
\text{Lip}_{\R^D} (\tilde{\eta}_i) =  \text{Lip}_{N_i}(\tilde{\eta}_i) \le L_{\cal M},
\]
where $L_{\cal M}$ is an absolute constant depending on the manifold-atlas.
\end{lemma}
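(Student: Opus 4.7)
\textbf{Proof proposal for Lemma \ref{lemma:lip-etai}.}

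The plan is to verify the three listed properties of $\tilde{\eta}_i$ in turn, and then establish the Lipschitz bound by splitting into cases based on whether the two evaluation points lie in $N_i$. First, the support claim: if $x\notin N_i$, then either $x-\phi_i(x)\notin B_\delta^{D-d}$, forcing $h_i(x-\phi_i(x))=0$, or $\phi_i(x)\notin \phi_i(U_i)$, in which case we read $\eta_i\circ\psi_i$ as extended by $0$ (this is consistent because $\mathrm{supp}(\eta_i)$ is a compact subset of the relatively open $U_i$, so $\eta_i$ and hence $\eta_i\circ\psi_i$ go continuously to $0$ near $\partial \phi_i(U_i)$). Either way $\tilde{\eta}_i(x)=0$. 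Second, for $x\in U_i$, we have $\psi_i\circ\phi_i(x)=x$, and Lemma \ref{lemma:manifold-in-tube} gives $\|x-\phi_i(x)\|<\delta\sqrt{1-1/\beta_i^2}$, so the defining condition \eqref{eq:hx=1} yields $h_i(x-\phi_i(x))=1$; hence $\tilde{\eta}_i(x)=\eta_i(x)$.

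For the Lipschitz estimate on $N_i$, I would apply the product-of-bounded-Lipschitz-functions rule to the two factors $F(x):=\eta_i(\psi_i(\phi_i(x)))$ and $G(x):=h_i(x-\phi_i(x))$. Each factor is bounded by $1$. For $F$: $\phi_i$ is an orthogonal projection, hence $1$-Lipschitz in the ambient norm; the composition $\eta_i\circ\psi_i$ sends a Euclidean displacement in the tangent plane to a geodesic displacement on ${\cal M}$ controlled by $\beta_i$ via \eqref{eq:equiv-norm-Ui}, and $\eta_i$ is $C^\infty$ on ${\cal M}$ with geodesic Lipschitz constant fixed by the atlas. For $G$: the map $x\mapsto x-\phi_i(x)$ is at most $2$-Lipschitz, and $\mathrm{Lip}(h_i)\le 2\beta_i^2/\delta$ by construction. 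Combining these with the product rule bounds $\mathrm{Lip}_{N_i}(\tilde{\eta}_i)$ by a constant of the form $\beta_i\,\mathrm{Lip}_{\cal M}(\eta_i)+4\beta_i^2/\delta$, which depends only on the manifold-atlas since $\beta_i\le \beta_{\cal M}\le 2$ by \eqref{eq:global-alpha-beta}. Taking the maximum over $i=1,\dots,K$ produces an absolute $L_{\cal M}$.

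To pass from $\mathrm{Lip}_{N_i}(\tilde{\eta}_i)\le L_{\cal M}$ to $\mathrm{Lip}_{\R^D}(\tilde{\eta}_i)\le L_{\cal M}$, I would use the key fact that $\tilde{\eta}_i$ is continuous on $\R^D$ and vanishes on $\partial N_i$: on the face $\overline{\phi_i(U_i)}\times \partial B_\delta^{D-d}$ the second factor vanishes, and on $\partial \phi_i(U_i)\times \overline{B_\delta^{D-d}}$ the first factor does (by the extension-by-zero argument above). Given any $x,y\in\R^D$, if both lie in $N_i$ or both lie in the complement, the bound is immediate; if $x\in N_i$ and $y\notin N_i$, the segment $[x,y]$ meets $\partial N_i$ at some $z$ with $\tilde{\eta}_i(z)=0$, so $|\tilde{\eta}_i(x)-\tilde{\eta}_i(y)|=|\tilde{\eta}_i(x)-\tilde{\eta}_i(z)|\le L_{\cal M}\|x-z\|\le L_{\cal M}\|x-y\|$. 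The main obstacle is the careful boundary-vanishing argument in this last step, which in turn hinges on justifying that $\eta_i\circ\psi_i$ extends continuously by zero past $\partial\phi_i(U_i)$; this is where the property $\mathrm{supp}(\eta_i)\Subset U_i$, together with the one-to-one correspondence of $\phi_i$ and $\psi_i$ on $U_i$, is essential.
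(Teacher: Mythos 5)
Your proposal is correct and follows essentially the same route as the paper: verify the support and restriction-to-$U_i$ claims from the definition together with Lemma \ref{lemma:manifold-in-tube} and \eqref{eq:hx=1}, bound $\text{Lip}_{N_i}(\tilde{\eta}_i)$ by the product rule using \eqref{eq:equiv-norm-Ui} and $\text{Lip}(h_i)\le 2\beta_i^2/\delta$, and pass to all of $\R^D$ via continuity and vanishing outside $N_i$ (a step the paper merely asserts and you usefully spell out with the segment-crossing argument). The only cosmetic difference is your factor-$2$ bound on $x\mapsto x-\phi_i(x)$, which is in fact $1$-Lipschitz since $\phi_i$ is an affine orthogonal projection; this only changes the atlas-dependent constant and does not affect the conclusion.
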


We then establish a key result used in the proof of Theorem \ref{thm:ambient-intergral-approx},
which compares the ambient space integral to a ``projected'' integral on the manifold.

\begin{proposition}[Integral comparison]
\label{prop:integral-swap}
Notations of manifold-atlas and partition of unity functions as above.
Suppose $g: \R^D \to \R$ is Lipschitz continuous on $\R^D$,
$p \in {\cal P}_\sigma$ with $\sigma < \frac{1}{2}$, 
and define \[
\tilde{p}(x) = \sum_{i=1}^K \eta_i(x) \tilde{p}_i(x),
\]
where $\tilde{p}_i$ is an atlas dependent ``projection'' of the density $p$ to $U_i$, 
the explicit formula to be given below,
then
\begin{equation}\label{eq:claim-integral-swap}
\left| \int_{\R^D} g(x)p(x) dx - \int_{\cal M} g(x) \tilde{p}(x) d_{\cal M}(x) \right|
\le
\left( \| g\|_{L^\infty({\cal M})} C_1({\cal M}) + \text{Lip}_{\R^D}(g) C_2({\cal M}) \right) c_1 \sigma,
\end{equation}
where $c_1$ as in the definition of ${\cal P}_\sigma$ \eqref{eq:density-mannifold-exp-decay},
\begin{equation}\label{eq:def-C1C2}
C_1( {\cal M}) = 3 K L_{{\cal M}},
\quad
C_2( {\cal M}) = K (2L_{{\cal M}} + 1+\beta_{\cal M}),
\end{equation}
$ L_{{\cal M}}$ as in Lemma \ref{lemma:lip-etai}, and $\beta_{\cal M}$ as in \eqref{eq:global-alpha-beta}.
$C_1({\cal M})$ and $C_2({\cal M})$ are absolute constants determined by the manifold and atlas.
\end{proposition}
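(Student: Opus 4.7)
The plan is to use the extended partition $\{\tilde{\eta}_i\}_{i=1}^K$ from Lemma \ref{lemma:lip-etai} to localize the ambient integral to the tubular-type neighborhoods $N_i$, to define $\tilde{p}_i$ on $U_i$ as a fibered pushforward of $p$ along the tangent--normal splitting of $N_i$, and to compare the two integrals chart-by-chart using the Lipschitz hypotheses together with the exponential tail of $p$.

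First, I would decompose
\[
\int_{\R^D} g(x) p(x)\, dx = \sum_{i=1}^K \int_{N_i} g(x) p(x) \tilde{\eta}_i(x)\, dx + \int_{\R^D} g(x) p(x) \Bigl(1 - \sum_i \tilde{\eta}_i(x)\Bigr) dx.
\]
The residual is supported essentially off the manifold: on ${\cal M}$, the identity $\tilde{\eta}_i = \eta_i$ from Lemma \ref{lemma:lip-etai} together with $\sum_i \eta_i \equiv 1$ makes the bracket vanish. Using the elementary bound $|g(x)| \le \|g\|_{L^\infty({\cal M})} + \text{Lip}_{\R^D}(g)\, d(x,{\cal M})$ and the tail consequence $\E_p[d(X,{\cal M})] \le c_1 \sigma$ obtained by integrating $\text{Pr}[d(X,{\cal M}) > t] \le c_1 e^{-t/\sigma}$, this residual is controlled by a fixed multiple of $(\|g\|_{L^\infty({\cal M})} + \text{Lip}_{\R^D}(g))\, c_1\sigma$.

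Next, on each chart I parameterize $x \in N_i$ as $x = u + v$ with $u \in \phi_i(U_i) \subset T_{x_i}{\cal M}$ and $v \in B_\delta^{D-d}$, and define $\tilde{p}_i(y)$ for $y \in U_i$ by (a rescaled form of) the fiber integral $\int_{B_\delta^{D-d}} p(\phi_i(y) + v) h_i(v)\, dv$, absorbing the Jacobian converting $du$ to the manifold volume element (bounded by $\beta_{\cal M}$ via \eqref{eq:equiv-norm-Ui}--\eqref{eq:global-alpha-beta}). By Fubini,
\[
\int_{N_i} g p \tilde{\eta}_i\, dx - \int_{U_i} g \eta_i \tilde{p}_i\, d_{\cal M}
= \int_{\phi_i(U_i)} \int_{B_\delta^{D-d}} \bigl[g(u+v) \tilde{\eta}_i(u+v) - g(\psi_i(u)) \eta_i(\psi_i(u)) h_i(v)\bigr] p(u+v)\, dv\, du,
\]
and the bracket splits into a ``$g$-defect'' $|g(u+v) - g(\psi_i(u))| \le \text{Lip}_{\R^D}(g)(\|v\| + \|u - \psi_i(u)\|)$ and an ``$\eta$-defect'' bounded by $L_{\cal M}$ times an analogous deviation (the global Lipschitz estimate from Lemma \ref{lemma:lip-etai}). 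Both deviations at $x = u+v \in N_i$ are controlled by $d(x,{\cal M})$ up to factors of $\beta_{\cal M}$ (Lemma \ref{lemma:manifold-in-tube}), so after integrating against $p$ the per-chart contribution has magnitude at most a constant times $(\|g\|_{L^\infty({\cal M})} L_{\cal M} + \text{Lip}_{\R^D}(g)(L_{\cal M} + \beta_{\cal M}))\, c_1\sigma$.

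Summing over $i=1,\dots,K$ and combining with the off-tubular residual yields \eqref{eq:claim-integral-swap}. The main obstacle is the bookkeeping to collapse the three deviation quantities (the normal offset $\|v\|$, the in-chart offset $\|u - \psi_i(u)\|$, and the global $d(x,{\cal M})$) into a single tail expectation $\le c_1\sigma$, and to match the prefactors to the stated $C_1({\cal M}) = 3KL_{\cal M}$ and $C_2({\cal M}) = K(2L_{\cal M} + 1 + \beta_{\cal M})$ --- in particular, tracing how $\|g\|_{L^\infty({\cal M})}$ picks up three copies of $L_{\cal M}$ (off-tubular residual plus two instances of the $\eta$-defect weighted by $\|g\|_\infty$) and how $\text{Lip}_{\R^D}(g)$ picks up $2L_{\cal M}$ from the $\eta$-defect weighted by distance together with $1 + \beta_{\cal M}$ from the direct $g$-deviation and the Jacobian. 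No new technical ingredient beyond Lipschitz Taylor bounds, Fubini, and the exponential tail is needed.
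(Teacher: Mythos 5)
Your overall architecture matches the paper's proof: localize with the extended partition $\{\tilde{\eta}_i\}$, bound the off-tubular residual using that $1-\sum_i\tilde{\eta}_i$ vanishes on ${\cal M}$ together with the tail moments of $d(x,{\cal M})$ (note you also need the second-moment bound $\int d(x,{\cal M})^2 p \le c_1\sigma$, since the residual integrand contains a $\text{Lip}(g)\,d(x,{\cal M})\cdot d(x,{\cal M})$ cross term), and define $\tilde{p}_i$ as the fiber integral $\int_{B_\delta^{D-d}} h_i(v)p(u,v)\,dv$ with the chart Jacobian absorbed exactly into $\tilde p_i$ (so the Jacobian contributes no error at all, contrary to your attribution of part of $1+\beta_{\cal M}$ to it). Also note that by the very definition \eqref{eq:def-tileta}, $\tilde{\eta}_i(u+v)=\eta_i(\psi_i(u))h_i(v)$ identically, so your ``$\eta$-defect'' is exactly zero; the only per-chart error is the $g$-defect $g(x)-g(\psi_i\circ\phi_i(x))$.

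The genuine gap is in how you bound that $g$-defect. You bound it by $\text{Lip}_{\R^D}(g)(\|v\|+\|u-\psi_i(u)\|)$ and then claim both offsets are ``controlled by $d(x,{\cal M})$ up to factors of $\beta_{\cal M}$ (Lemma \ref{lemma:manifold-in-tube}).'' This is false: for $x\in U_i$ itself one has $d(x,{\cal M})=0$ while $\|v\|=\|x-\phi_i(x)\|$ can be of order $\delta$ on a curved manifold (Lemma \ref{lemma:manifold-in-tube} gives only the upper bound $\frac{\sqrt3}{2}\delta$, not a bound by $d(x,{\cal M})$), and similarly for $\|u-\psi_i(u)\|$. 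Taken literally your estimate yields a per-chart error of order $\delta\int_{N_i}p$, i.e.\ $O(\delta)$ rather than $O(\sigma)$, which does not vanish as $\sigma\to 0$. The paper's fix is different: the defect $\xi_i:=g-g\circ\psi_i\circ\phi_i$ vanishes identically on $U_i$ and is Lipschitz on $N_i$ with constant $\le\text{Lip}(g)(1+\beta_i)$ (via \eqref{eq:equiv-norm-Ui}), so when the nearest manifold point $x^*_{\cal M}$ of $x$ lies in $U_i$ one gets $|\xi_i(x)|\le \text{Lip}(g)(1+\beta_{\cal M})\,d(x,{\cal M})$; when $x^*_{\cal M}\notin U_i$ this argument is unavailable, and one instead uses that $\tilde{\eta}_i$ vanishes on $\partial N_i$ and is $L_{\cal M}$-Lipschitz, giving $\tilde{\eta}_i(x)\le L_{\cal M}d(x,{\cal M})$, paying $2\|g\|_{L^\infty({\cal M})}+\text{Lip}(g)d(x,{\cal M})$ for $|\xi_i|$. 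That dichotomy is missing from your proposal, and it is exactly where the $\|g\|_{L^\infty({\cal M})}$-weighted $2KL_{\cal M}$ in $C_1({\cal M})$ and the extra $KL_{\cal M}$ in $C_2({\cal M})$ come from, not from an $\eta$-defect or a Jacobian factor.
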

In particular, taking $g = 1$, the proposition gives that
\begin{equation}\label{eq:tildep-almost-density}
\left| \int_{\cal M} \tilde{p}(x) d_{\cal M}(x)  - 1 \right|
\le 
 C_1({\cal M})  c_1 \sigma,
\end{equation}
which means that the constructed $\tilde{p}$ is close to being integral 1 on ${\cal M}$ up to an error proportional to $\sigma$.

The proof of the main Theorem \ref{thm:ambient-intergral-approx}
then follows by combining Proposition \ref{prop:integral-swap}
and  Theorem \ref{thm:mainthm-shaham2018provable}.

\section{Consistency of Network Logit Test}\label{sec:theory-consist}

In this section, we fulfill the steps listed in Section \ref{subsec:roadmap} to prove the theoretical guarantee of the network logit two-sample testing.
The manifold setting of densities reduces the dimensionality from $D$ to $d$ in both the approximation error (Step 2)
and the estimation error analysis (Steps 3-5). 
All proofs in Section \ref{sec:main-proofs}.

\subsection{Settings of data densities and Step 1. $f_{\text{tar}}$}

We consider the following three settings of densities $p$ and $q$,
\begin{itemize}
\item
(General setting)
Densities in $\R^D$ with sub-exponential tail.
This includes compactly supported densities,
which are mostly encountered in practice.

\item
(On-manifold setting)
Densities constrained on a smooth compact manifold ${\cal M} \subset \R^D$.

\item
(Near-manifold setting)
Densities which are exponentially decay away from the manifold ${\cal M}$, as defined in \eqref{eq:density-mannifold-exp-decay}
with some positive $c_1$ and small $\sigma$.
\end{itemize}

The analysis is under the same framework,
where bifurcations take place in 
the integral approximation
and bounding of the estimation error.
We consider the class of sub-exponential densities in $\R^D$ as
\begin{equation}\label{eq:def-calP-subexp-RD}
{\cal P}_{\text{exp}} = \{
p \text{ density on $\R^D$ s.t. } \text{Pr}_{X \sim p}[ \|X \| > t] \le C e^{-t/c}
\}
\end{equation}
for some $C , c >0$.
One can always rescale the space to make $p$ and $q$ supported on a diameter-$O(1)$ domain, when compactly supported, or $c=1$, when exponentially decay,
even when the dimension $D$ is large.  
This corresponds to normalizing the data vectors to be of $O(1)$ norm in practice.
By this normalizing argument, we assume supp$(p+q)$ has diameter-$O(1)$ when compactly supported
and generally exponentially decay with $c=1$ in below.

We start from Step 1. in  Section \ref{subsec:roadmap}. A sub-exponential density $p$ may vanish at certain points in $\R^D$ or have discontinuity,
e.g., when supp($p$) is compact.
This makes $f^* = \log \frac{p}{q}$ possibly diverge at a point,
e.g., when supp($p$) and supp($q$) partially do not overlap. 
Observe that unless $p = q$,
one can always restrict to the  interior of  supp($\frac{p+q}{2}$)
and consider a bounded version of $f^*$, e.g., $f=\min\{ \max\{ f^*, M\}, -M\}$ for $M > 0$,
such that $L[f]$ as in  \eqref{eq:training-loss-population} is close to JSD($p, q$) and $>0$. 
Furthermore, $L[f]$ can be written as
\begin{equation}\label{eq:def-Lf-2}
L[f] = \frac{1}{2} \left( 
\int_{\R^D} p T_p \circ f 
+
\int_{\R^D} q T_q \circ f 
\right),
\quad
T_p( \xi ) = \log \frac{2 e^\xi}{1 + e^{\xi}},
\quad
T_q( \xi ) = \log \frac{2}{1 + e^{\xi}}, 
\end{equation}
where  $T=T_p$ and $T_q$ all satisfy that
\begin{equation}\label{eq:LipT}
T: \R \to \R \text{ is Lipschitz and Lip$(T) < $1,  and $T(0)=0$}.
\end{equation}
As a result, approximating the bounded function $f$ under $L^1(p)$ and $L^1(q)$
will approximate the integral $L[f]$.
Thus we can choose the approximator of $f$ to be smooth,
and, by the sub-exponential decay of $p$ and $q$, 
to be compactly supported.
According to these procedures,
we take the following assumption which is mild under our setting.

\begin{assump}\label{assump:ftar}
Suppose $p \neq q$ in ${\cal P}_{\text{exp}}$ are given,
then there exists a compactly-supported smooth function $f_{\text{tar}}$ on $\R^D$ such that 
$L[f_{\text{tar}}] := C > 0$.
\end{assump}
In particular, if $(p+q)$ are compactly supported, then supp$(f_{\text{tar}})$ can be made within that domain. 
For general sub-exponential densities, the diameter of supp$(f_{\text{tar}})$ can be made proportional to the scale parameter $c$ in ${\cal P}_{\text{exp}}$. 
By the normalizing argument above, the diameter of  supp$(f_{\text{tar}})$ is $O(1)$. 
Note that there are more than one choice of $f_{\text{tar}}$ to fulfill Assumption \ref{assump:ftar},
and one may sacrifice the regularity of $f_{\text{tar}}$ to make $C$ arbitrarily close to $L[f^*] $ = JSD($p,q$)$> 0$.
We further discuss this after Proposition \ref{prop:step2-general-pq}.

\subsection{Step 2. neural network approximation}

The neural network approximation theory provides  $f_{\text{con}}$
which uniformly approximates $f_{\text{tar}}$ on a compact domain $\Omega$.
We consider the three settings of densities respectively.

\subsubsection{General setting} 
When $p$ and $q$ are general sub-exponential densities on $\R^D$, we make use of the compact-supportedness of $f_{\text{tar}}$ as in Assumption \ref{assump:ftar}, and set it as the target function to approximate.

\begin{proposition}\label{prop:step2-general-pq}
Suppose that $p \neq q$, JSD$(p,q) > 0$, 
and $p, q$ are in ${\cal P}_{\text{exp}}$  with $c=1$. 
Under Assumption \ref{assump:ftar}, 
there exists a smooth and compactly supported $f_{\text{tar}} $ with $L[f_{\text{tar}}] = C  > 0$,
and diameter of  supp($f_{\text{tar}}$)  $< C'$.
Then for any $ \varepsilon < 1 $,
there is a neural network architecture $\Theta$
with $ O_{f_{tar}, C'}(\varepsilon^{-D/r} \log \frac{1}{\varepsilon}) $ 
many trainable parameters, 
and $f_{con} \in {\cal F}_\Theta$ such that
\[
L[f_\text{con}] \ge C - \varepsilon > 0,
\]
where the constant in $O_{f_{tar}, C'}(\cdot)$ depends on up to the $r$-th derivative of $f_{\text{tar}}$, the diameter $C'$, $r$ and $D$, $r \ge 1$.
When $p$ and $q$ are compactly supported, 
 $C'$ is the diameter of supp($p+q$).
\end{proposition}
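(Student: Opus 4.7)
The plan is to reduce the loss gap $L[f_{\text{tar}}]-L[f_{\text{con}}]$ to an $L^1(p+q)$ error and then invoke a Yarotsky-type ReLU approximation on a fixed compact cube, exploiting the compact support and smoothness of the surrogate $f_{\text{tar}}$ supplied by Assumption \ref{assump:ftar}. Using the decomposition \eqref{eq:def-Lf-2} of $L[f]$ together with the $1$-Lipschitz property \eqref{eq:LipT} of $T_p, T_q$,
\[
|L[f_{\text{tar}}]-L[f_{\text{con}}]| \;\le\; \tfrac{1}{2}\int_{\R^D}(p+q)(x)\,|f_{\text{tar}}(x)-f_{\text{con}}(x)|\,dx \;\le\; \|f_{\text{tar}}-f_{\text{con}}\|_{L^\infty(\R^D)},
\]
so it suffices to construct $f_{\text{con}}\in{\cal F}_\Theta$ with $\|f_{\text{tar}}-f_{\text{con}}\|_{L^\infty(\R^D)}\le \varepsilon$ at the claimed parameter count.

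By Assumption \ref{assump:ftar}, $\text{supp}(f_{\text{tar}})$ is contained in a ball of diameter $<C'$, which I would enlarge to a fixed box $\Omega\subset\R^D$ of diameter $2C'$ (independent of $\varepsilon$) so that $\text{supp}(f_{\text{tar}})$ is compactly contained in $\Omega$ with a fixed gap. On $\Omega$, the classical Yarotsky rate \cite{yarotsky2017error} for $C^r$ functions produces a ReLU network $\tilde f_{\text{con}}$ with $\|\tilde f_{\text{con}}-f_{\text{tar}}\|_{L^\infty(\Omega)}\le \varepsilon/2$ using $O_{f_{\text{tar}},C',r,D}(\varepsilon^{-D/r}\log(1/\varepsilon))$ parameters, the hidden constant involving $\|f_{\text{tar}}\|_{C^r}$ and $(2C')^D$. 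To force $f_{\text{con}}$ to be compactly supported I would multiply $\tilde f_{\text{con}}$ by a ReLU-implementable cutoff $\chi:\R^D\to[0,1]$ with $\chi\equiv 1$ on $\text{supp}(f_{\text{tar}})$ and $\chi\equiv 0$ outside $\Omega$, realizable as a tensor product of one-dimensional trapezoidal ReLU combinations ($O(D)$ parameters), and then implement the product $f_{\text{con}}:=\chi\cdot\tilde f_{\text{con}}$ via Yarotsky's ReLU approximation of $(a,b)\mapsto ab$ on a bounded range ($O(\log(1/\varepsilon))$ extra parameters). Since $f_{\text{tar}}\equiv 0$ on $\Omega\setminus\text{supp}(f_{\text{tar}})$ and $\chi\in[0,1]$ everywhere, the uniform error $\varepsilon/2$ on $\Omega$ survives the multiplication, and outside $\Omega$ both $f_{\text{tar}}$ and $f_{\text{con}}$ vanish, so $\|f_{\text{con}}-f_{\text{tar}}\|_{L^\infty(\R^D)}\le \varepsilon$. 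Combining with $L[f_{\text{tar}}]=C$ yields $L[f_{\text{con}}]\ge C-\varepsilon$.

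The main obstacle is the cutoff/multiplication step: one must realize $\chi\cdot\tilde f_{\text{con}}$ inside the ReLU network family ${\cal F}_\Theta$ without inflating the parameter count beyond $\varepsilon^{-D/r}\log(1/\varepsilon)$. This is handled by Yarotsky's standard polynomial-gate trick, but requires careful bookkeeping to verify that the $O(D)+O(\log(1/\varepsilon))$ overhead of the cutoff and the product-approximation is absorbed by the dominant Yarotsky term, and that the resulting composition (parallel subnetworks for $\chi$ and $\tilde f_{\text{con}}$, then a product subnetwork on their outputs) belongs to the architecture class considered. Notably, unlike the on- and near-manifold versions of Step 2 that will follow, the ambient dimension $D$ appears in the exponent here because no low-dimensional structure is exploited, and the sub-exponential decay of $p,q$ is not invoked at all in this step since $f_{\text{con}}$ is compactly supported by construction; that tail behavior will instead enter through the estimation-error analysis in Section \ref{sec:theory-consist}.
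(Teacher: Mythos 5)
Your proposal is correct and shares the paper's core mechanism: reduce $|L[f_{\text{tar}}]-L[f_{\text{con}}]|$ to an integrated absolute difference via the Lipschitz-1 property \eqref{eq:LipT} of $T_p,T_q$ in the decomposition \eqref{eq:def-Lf-2}, then invoke the Yarotsky rate $O(\varepsilon^{-D/r}\log\frac{1}{\varepsilon})$ on a fixed box containing supp$(f_{\text{tar}})$. The only real divergence is how the non-compactly-supported (sub-exponential) case is handled. The paper opens up Yarotsky's construction: because the approximant is assembled from local Taylor expansions on a partition of unity of sub-boxes, and $f_{\text{tar}}$ together with its derivatives vanishes at $\partial\Omega$, the constructed $f_{\text{con}}$ automatically vanishes outside $\Omega$; combined with $T(0)=0$ this confines both integrands to $\Omega$, so no extra architecture is needed, at the cost of relying on an internal feature of the cited construction rather than its stated conclusion. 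You instead keep \cite{yarotsky2017error} as a black box and enforce compact support explicitly with a ReLU cutoff $\chi$ and a product-approximation gate, absorbing the $O(D)+O(\log(1/\varepsilon))$ overhead into the dominant term; this is more self-contained but shifts the burden onto the multiplication gadget. The one point needing genuine care there is the region outside $\Omega$: the Yarotsky network $\tilde f_{\text{con}}$ is uncontrolled off the approximation domain, and the product gadget only approximates $ab$ on a bounded range, so "$f_{\text{con}}$ vanishes outside $\Omega$" requires either the gadget's exact property $\widetilde{\times}(0,y)=0$ for all $y$ (which Yarotsky's specific construction does satisfy) or a preliminary ReLU clipping of $\tilde f_{\text{con}}$; together with budgeting the gadget's error inside $\Omega$ (splitting $\varepsilon$ into three parts rather than two), this is fixable bookkeeping rather than a gap, and you flagged it yourself. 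Your direct bound by $\|f_{\text{tar}}-f_{\text{con}}\|_{L^\infty(\R^D)}$ using $\frac{1}{2}\int(p+q)=1$ is an equivalent, slightly cleaner packaging of the paper's restriction of the integrals to $\Omega$.
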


As previously commented beneath Assumption \ref{assump:ftar}, 
$C'$ is an $O(1)$ constant even when dimensionality $D$ can be large,
thus the network complexity remains $ O(\varepsilon^{-D/r})$.
The proof of Proposition \ref{prop:step2-general-pq} is by a direct application of the standard network approximation theory, e.g. that in \cite{yarotsky2017error},
where the sub-exponential tail of the distribution does not affect because the constructed network function can be made supported on the same domain of supp($f_{\text{tar}}$). More recent approximation result which improves the approximation rate, such as \cite{yarotsky2018optimal},
will improve the complexity needed accordingly. 
We remark in Appendix \ref{app:remark} about the choice of $f_{\text{tar}}$ in Assumption \ref{assump:ftar}
and the $r$ used in Proposition \ref{prop:step2-general-pq},
including the case when $p$ and $q$ nearly do not overlap thus the density ratio is singular. 
There is a trade-off between how close is $(C-\varepsilon)$ to $L[f^*]$
and how regular $f_\text{tar}$ is and then how large the network complexity is needed.
The trade-offs can be made precise in specific settings of the densities, and we leave the choices abstract here.

\subsubsection{ On-manifold setting}
When $p$ and $q$ are degenerate and constrained to a compact smooth manifold ${\cal M}$ in $\R^D$,
the integral  $L[f]$ is carried out on the manifold only. 
Replacing the Euclidean metric with the Riemannian geometry on ${\cal M}$,
and the integral in $\R^D$ with integration on ${\cal M}$ with Riemannian volume element,
the choice of a smooth $f_\text{tar}$ with $L[f_\text{tar}] = C > 0$ as in Assumption \ref{assump:ftar} extends.
This gives the on-manifold-version of Proposition \ref{prop:step2-general-pq},
where the classical network approximation is replaced with Theorem \ref{thm:mainthm-shaham2018provable},
which guarantees the existence of $f_\text{con} $ such that 
\[
\| f_\text{con} - f^* \|_{L^\infty(\cal M)} \le \varepsilon,
\]
where the needed network complexity of $O(\varepsilon^{-d/2})$,
namely reducing the exponent factor from $D$ to the intrinsic dimensionality $d$. 
The proof of Proposition \ref{prop:step2-general-pq} directly extends by replacing integration on $\Omega$ with that on ${\cal M}$
and the rest is the same.

\subsubsection{ Near-manifold setting}

When the densities decays sub-exponentially away from the manifold, 
since ${\cal M}$ is compact, the densities belong to be sub-exponential family as in the general setting. 
While Proposition \ref{prop:step2-general-pq} still applies,
the needed network complexity is not intrinsic.
We apply the analysis in Section \ref{sec:theory-approx} to improve this.

\begin{proposition}\label{prop:step2-nearmanifold-pq}
Suppose that $p \neq q$, JSD$(p,q) > 0$, 
and $p, q \in {\cal P}_\sigma$ as defined in \eqref{eq:density-mannifold-exp-decay} with  $\sigma < \frac{1}{2}$.
Under Assumption \ref{assump:ftar}, 
there exists a smooth and compactly supported $f_{\text{tar}} $ with $L[f_{\text{tar}}] = C  > 0$,
and there is one point $x_0 \in {\cal M}$ such that $f_{\text{tar}}(x_0) = 0$.
Then for any $ \varepsilon <1 $,
there is a neural network architecture $\Theta$ with $ O_{f_{tar},{\cal M}}(\varepsilon^{-d/2}) + N_0$ many trainable parameters, 
and $f_{con} \in {\cal F}_\Theta$ such that
\begin{equation}\label{eq:bound-prop-step2-nearmanifold}
 L[f_\text{con}] \ge  C - \Bigg( \varepsilon \cdot (1+ 2 c_1 \sigma C_1({\cal M}) )  
	+ \sigma \cdot c_1 \left( 
		C_4({\cal M}) \text{Lip}_{\R^D}(f_{\text{tar}}) + C_2({\cal M}) L_{{\cal M}, f_{\text{tar}}}   \right)  \Bigg), 
\end{equation}
where 
\[
 C_4({\cal M}):=2 C_1 ({\cal M})\text{diam}({\cal M}) + C_2({\cal M}),
 \]
and $C_1$, $C_2$ as in \eqref{eq:def-C1C2},
the constants $C_1$, $C_2$, $C_4$ only depend on the manifold-atlas.
The meaning of $O_{f_{tar},{\cal M}}(\cdot)$, $N_0$, $L_{{\cal M}, f_{\text{tar}}} $
is the same as in Theorem \ref{thm:ambient-intergral-approx} taking $f=f_{\text{tar}}$.
The r.h.s. of \eqref{eq:bound-prop-step2-nearmanifold}
$> 0 $ when $\varepsilon$ and $\sigma$ are sufficiently small.
\end{proposition}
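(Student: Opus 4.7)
\textbf{Proof plan for Proposition \ref{prop:step2-nearmanifold-pq}.} The plan is to view $L[f]$ as a sum of two integrals of the form $I[f]$ from equation \eqref{eq:def-If} and apply Theorem \ref{thm:ambient-intergral-approx} to each, with the \emph{same} constructed network $f_{\text{con}}$. Concretely, by \eqref{eq:def-Lf-2} we can write
\[
L[f] \;=\; \tfrac{1}{2}\!\left( I_p[f] + I_q[f] \right), \qquad I_p[f]=\!\int_{\R^D}\! p\,T_p\!\circ f,\ \ I_q[f]=\!\int_{\R^D}\! q\,T_q\!\circ f,
\]
where, by \eqref{eq:LipT}, both $T_p$ and $T_q$ are Lipschitz-$1$ mappings with $T_p(0)=T_q(0)=0$. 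Both $p$ and $q$ are in $\mathcal{P}_\sigma$ with $\sigma<\frac{1}{2}$, so the hypotheses of Theorem \ref{thm:ambient-intergral-approx} are met for each integral separately.

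Next, I would invoke Theorem \ref{thm:ambient-intergral-approx} with target function $f=f_{\text{tar}}$ and approximation accuracy $\varepsilon$. This produces a single network architecture $\Theta$ of size $O_{f_{\text{tar}},\mathcal{M}}(\varepsilon^{-d/2})+N_0$ and a function $f_{\text{con}}\in\mathcal{F}_\Theta$ depending only on $f_{\text{tar}}$ and the manifold (not on $T_p$ or $T_q$). Applying \eqref{eq:bound-thm-ambient-integral} once with $T=T_p$ and once with $T=T_q$, and averaging, yields
\[
|L[f_{\text{con}}]-L[f_{\text{tar}}]|\;\le\;(1+2 C_1(\mathcal{M})c_1\sigma)\,\varepsilon\;+\;\tfrac{1}{2}\bigl(C_3^{(p)}+C_3^{(q)}\bigr)c_1\sigma,
\]
where $C_3^{(p)},C_3^{(q)}$ are the two copies of the constant in \eqref{eq:C3-thm-ambient-integral}, differing only through the norms $\|T_p\circ f_{\text{tar}}\|_{L^\infty(\mathcal{M})}$ and $\|T_q\circ f_{\text{tar}}\|_{L^\infty(\mathcal{M})}$.

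The one non-routine observation — and the main substantive step — is bounding these $L^\infty(\mathcal{M})$ norms in terms of $\text{Lip}_{\R^D}(f_{\text{tar}})$ and the manifold diameter, using the hypothesis that there is a point $x_0\in\mathcal{M}$ with $f_{\text{tar}}(x_0)=0$. Since $T_p$ and $T_q$ are Lipschitz-$1$ and vanish at $0$, one has $|T_{p/q}(\xi)|\le|\xi|$, so for any $x\in\mathcal{M}$,
\[
|T_{p/q}(f_{\text{tar}}(x))|\;\le\;|f_{\text{tar}}(x)-f_{\text{tar}}(x_0)|\;\le\;\text{Lip}_{\R^D}(f_{\text{tar}})\cdot\text{diam}(\mathcal{M}).
\]
Substituting this into \eqref{eq:C3-thm-ambient-integral} gives
\[
\tfrac{1}{2}(C_3^{(p)}+C_3^{(q)})\;\le\;\bigl(2C_1(\mathcal{M})\text{diam}(\mathcal{M})+C_2(\mathcal{M})\bigr)\text{Lip}_{\R^D}(f_{\text{tar}})\;+\;C_2(\mathcal{M})L_{\mathcal{M},f_{\text{tar}}},
\]
which is precisely $C_4(\mathcal{M})\text{Lip}_{\R^D}(f_{\text{tar}})+C_2(\mathcal{M})L_{\mathcal{M},f_{\text{tar}}}$ by the definition of $C_4(\mathcal{M})$.

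Assembling the pieces gives $L[f_{\text{con}}]\ge L[f_{\text{tar}}]-(\text{bound})=C-(\text{bound})$, which is exactly \eqref{eq:bound-prop-step2-nearmanifold}. Positivity of the right-hand side for sufficiently small $\varepsilon,\sigma$ is immediate since $C>0$ is fixed while the bracketed error tends to $0$. The only place one should be careful is that the \emph{same} $f_{\text{con}}$ must serve both integrals; this is free because the network construction in Theorem \ref{thm:ambient-intergral-approx} (and the underlying Theorem \ref{thm:mainthm-shaham2018provable}) depends only on $f_{\text{tar}}|_{\mathcal{M}}$ and the atlas, not on the outer Lipschitz function $T$. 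I expect no further obstacles.
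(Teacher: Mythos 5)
Your proposal is correct and follows essentially the same route as the paper's own proof: apply Theorem \ref{thm:ambient-intergral-approx} with $f=f_{\text{tar}}$ to the two integrals defining $L$ (noting the same $f_{\text{con}}$ serves both since the construction depends only on $f_{\text{tar}}|_{\cal M}$ and the atlas), and bound $\|T\circ f_{\text{tar}}\|_{L^\infty({\cal M})}\le \text{Lip}_{\R^D}(f_{\text{tar}})\,\text{diam}({\cal M})$ via the vanishing point $x_0\in{\cal M}$, which yields the constant $C_4({\cal M})$ exactly as in \eqref{eq:bound-prop-step2-nearmanifold}.
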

The condition that $f_{\text{tar}}$ vanishes at one point on ${\cal M}$:
Since $p \neq q$, and by construction $f_{\text{tar}}$ is the smooth surrogate of $f^*=\log \frac{p}{q}$,
then $f_{\text{tar}}$ vanishes at least at one point in ${\R^D}$.
When both $p$ and $q$ are concentrating near the manifold ${\cal M}$, it generically holds that $f_{\text{tar}}$ vanishes at least at one point on the manifold.
Thus the condition is mild and does not pose a constraint. 
The bound in \eqref{eq:bound-prop-step2-nearmanifold} shows that 
when $\sigma$ is sufficiently small,
then the integral of $L[f_{\text{tar}}]$ can be approximated by constructing on-manifold approximation of the target function only,
which reduces the network complexity to be intrinsic. 
Particularly, if $c_1 \sigma  < \alpha \varepsilon$ for some positive $\alpha$,
then the difference from $C$ in the r.h.s. is less than
\[
\varepsilon ( 1 + 2 C_1({\cal M})  \alpha  \varepsilon ) 
+  \alpha \varepsilon 
\{ C_4({\cal M}) \text{Lip}_{\R^D}(f_{\text{tar}}) + C_2( {\cal M}) L_{{\cal M}, f_{\text{tar}}}  \}
\sim  \varepsilon ( 1 + \alpha \{\text{(constant inside)}\} )
\]
when $\varepsilon$ is small, which is an $O(\varepsilon)$ bound as long as $\sigma$ is at the order of $O(\varepsilon)$.

\subsection{Concentration of $L_n$ and Steps 3-5}

Steps 3 and 5 are based on the concentration of $L_n[ f_\theta]$ close to $L[f_\theta]$ when $f_\theta = f_{\text{con}}$ or 
trained on the training set. 
We omit subscript ``tr'' in $L_n$, which emphasizes that $L_n$ is the empirical loss on the training samples. 
We will upper bound, under proper Lipschitz regularization of network function class ${\cal F}_\Theta$, that
\begin{equation}\label{eq:what-bound-to-prove}
 \sup_{f \in {\cal F}_\Theta} | L_n[f] - L[f] | \le \, ?
 \quad \text{w.h.p. for sufficiently large $n$,}
\end{equation}
where w.h.p. stands for ``with high probability'' and will be made precise.
For the three settings of densities,
we prove that the bound in \eqref{eq:what-bound-to-prove} is 
$\tilde{O}(n^{- \frac{1}{2+D}}) $ for ${\cal P}_{\text{exp}}$ densities in $\R^D$,
$\tilde{O}(n^{- \frac{1}{2+d}}) $ for on-manifold densities,
and $\tilde{O}(n^{- \frac{1}{2+d}}) + O(\sigma)$ for near-manifold densities in ${\cal P}_\sigma$,
$\tilde{O}$ meaning that the constant may involve $\log n$ (Proposition \ref{prop:conc-supF-Ln}).
We first introduce the needed Lipschitz regularization of network functions,
which leads to a bound of the covering number of ${\cal F}_\Theta$
that is used in the concentration analysis.

\subsubsection{Lipschitz regularization of network functions}

When neural network is using ReLU or other continuous nonlinear activations, 
the network function is typically differentiable or piece-wise differentiable on $\R^D$,
and is globally Lipschitz because all the weights are finite.
However, $\text{Lip}_{\R^D}(f_\theta)$ for a member $f_\theta \in {\cal F}_{\Theta}$ may potentially be large. 
In the network approximation analysis,
Theorem \ref{thm:mainthm-shaham2018provable}  shows that as the approximation error $\varepsilon $ gets small
the constructed network function $f_{\text{con}}$ to approximate $f$ is globally Lipschitz with $\text{Lip}_{\R^D}( f_{\text{con}} ) \le L_{{\cal M}, f}$,
a constant only depending on manifold-atlas and $f|_{\cal M}$.
This means that when the target function is smooth,
constraining on bounded Lipschitz constant of $f_\theta$
does not prevent the network approximation of $f$ to high accuracy.
This approximation result however does not mean that the trained network function $\hat{f}_{n,tr}$ has certain bounded Lipschitz constant automatically.
$\hat{f}_{n,tr}$ with poor Lipschitz regularity may incure larger variance of the statistic with finite samples.
Applying regularization to the network function balances between bias and variance, and is commonly  used in practice.
We thus consider network function families with Lipschitz regularization,  that is,
\begin{assump}\label{assump:netLip}
To achieve decreasing approximation threshold $\varepsilon$, 
the network function family $\Theta = \Theta (\varepsilon)$ being used has increasing complexity,
and regularization of ${\cal F}_{\Theta}$ can be applied such that for all $\varepsilon$,
\[
\sup_{f \in {\cal F}_\Theta} \text{Lip}_{\R^D} (f )  \le L_\Theta.
\]
The universal Lipschitz constant bound $L_\Theta$ can be viewed as a network hyperparameter,
and we call the restricted family ${\cal F}_{\Theta, \text{reg}}$.
\end{assump}

\subsubsection{Covering number of regularized ${\cal F}_\Theta$}

The Lipschitz regularization 
enables bounding of the covering number of function space by the covering number of the domain when compact. For a compact set $K \subset \R^D$, define its covering number as
\[
{\cal N}(K, r):= \inf \{ \text{Card}(S), \,  K \subset \underset{ s \in S}{\cup} \bar{B}_r( x)  \},
\quad 
r > 0.
\]
where ``Card" stands for cardinal number, $\bar{B}_r(x)$ is the Euclidean closed ball $\{y, \|y-x\| \le r \}$,
and the requirement of $S$ is equivalent to that $S$ is an $r$-net of $K$.
The general covering number is denoted as ${\cal N}( X, r, \| \cdot \| )$, where $X$ is the set to be covered, and $r$ is the radius of the closed $\| \cdot \|$-ball. 

\begin{lemma}\label{lemma:cover}
Let $K$ be a compact set in $\R^D$ with covering number ${\cal N}(K, r)$, $r >0$.
Consider the function class
\[
{\cal F}: = \{ f:\R^D \to \R, \, \text{Lip}_{\R^D}(f) \le L, \, \|f\|_{L^\infty(K)} \le B \},
\]
where $L > 0$, $B > 0$.
Then for any subset ${\cal F}'$ of ${\cal F}$ (including ${\cal F}' = {\cal F}$),
and any $0 < r < \frac{B}{L}$, 
there is a finite set $F \subset {\cal F}'$ which forms an ($4Lr$)-net of ${\cal F}'$, i.e., 
${\cal F}' \subset \underset{f \in F}{\cup} \bar{B}(f, 4Lr, \|f\|_{L^\infty(K)} )$
and 
\begin{equation}\label{eq:boundNF}
\text{Card}(F)
\le \left( \frac{2B}{L r} \right)^{{\cal N}(K, r)}.
\end{equation}
\end{lemma}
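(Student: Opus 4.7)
The plan is a standard ``discretize the domain, then discretize the range'' argument that turns a Lipschitz bound on functions into a covering number for the function class. First I would take a minimal $r$-net $S = \{x_1, \ldots, x_N\}$ of the compact set $K$, where $N = {\cal N}(K,r)$. Since every $f \in {\cal F}$ satisfies $\|f\|_{L^\infty(K)} \le B$, each value $f(x_i)$ lies in the interval $[-B, B]$, which I partition into $M = \lceil 2B/(Lr) \rceil$ consecutive sub-intervals of length at most $Lr$. To each $f \in {\cal F}'$ I attach a ``signature'' $(j_1(f), \ldots, j_N(f)) \in \{1,\ldots,M\}^N$ recording the sub-interval containing $f(x_i)$ for each $i$. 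The number of realizable signatures is at most $M^N \le (2B/(Lr))^{{\cal N}(K,r)}$ under the hypothesis $r < B/L$ (which ensures $M \le 2B/(Lr)$ after the ceiling is absorbed).

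Next, for each signature actually realized by some member of ${\cal F}'$, I pick one such representative and collect them into a finite set $F \subset {\cal F}'$; by construction $\mathrm{Card}(F) \le (2B/(Lr))^{{\cal N}(K,r)}$, which is the claimed bound \eqref{eq:boundNF}. It remains to verify that $F$ is a $(4Lr)$-net of ${\cal F}'$ in the $L^\infty(K)$ norm. Given any $f \in {\cal F}'$, let $g \in F$ be the representative sharing $f$'s signature, so $|f(x_i) - g(x_i)| \le Lr$ for all $i$. For an arbitrary $y \in K$, pick $x_i \in S$ with $\|y - x_i\| \le r$ and apply the triangle inequality together with the Lipschitz bounds on $f$ and $g$:
\begin{equation*}
|f(y) - g(y)| \le |f(y) - f(x_i)| + |f(x_i) - g(x_i)| + |g(x_i) - g(y)| \le Lr + Lr + Lr = 3Lr,
\end{equation*}
which is bounded by $4Lr$ as required. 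Taking the supremum over $y \in K$ gives $\|f - g\|_{L^\infty(K)} \le 4Lr$, so $F$ is indeed a $(4Lr)$-net of ${\cal F}'$.

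I do not anticipate a genuine obstacle: the result is essentially bookkeeping once one observes that a Lipschitz function on $K$ is determined up to $O(Lr)$ error in $L^\infty(K)$ by its values on an $r$-net. The only points requiring a small amount of care are (i) handling the ceiling in $M = \lceil 2B/(Lr) \rceil$ so that the exponent base in \eqref{eq:boundNF} is $(2B/(Lr))$ rather than $(2B/(Lr) + 1)$, which is where the hypothesis $r < B/L$ enters, and (ii) ensuring the representatives are drawn from ${\cal F}'$ (not just from ${\cal F}$) so that $F \subset {\cal F}'$, which is why the signature-to-representative map is defined only on realized signatures. The slack between the $3Lr$ one actually obtains and the $4Lr$ stated in the lemma is simply absorbed into the constant.
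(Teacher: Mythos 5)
Your proposal follows essentially the same strategy as the paper: both reduce the cardinality bound to counting ``signatures'' of function values on an $r$-net of $K$, with the range $[-B,B]$ partitioned into sub-intervals, and both use the Lipschitz property to upgrade closeness at the net points to closeness in $L^\infty(K)$. The only structural difference is how $F$ is produced: you select one representative per realized signature, while the paper takes a maximal $4Lr$-separated subset of ${\cal F}'$ (which is automatically a $4Lr$-net) and shows that two $4Lr$-separated functions cannot share a signature. The two devices are interchangeable here, and both keep $F \subset {\cal F}'$ as required.

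One quantitative step in your write-up does not go through as stated. With sub-intervals of length at most $Lr$ you need $M = \lceil 2B/(Lr)\rceil$ of them, and the hypothesis $r < B/L$ does not ``absorb the ceiling'': for instance if $2B/(Lr) = 2.5$ then $M = 3 > 2.5$, so $M^{{\cal N}(K,r)}$ exceeds the claimed bound $\left(2B/(Lr)\right)^{{\cal N}(K,r)}$. The fix is to partition $[-B,B]$ into sub-intervals of length at most $2Lr$, as the paper does: the number of intervals is then at most $B/(Lr) + 1 \le 2B/(Lr)$, where $r < B/L$ enters exactly to guarantee $B/(Lr) > 1$, and the triangle inequality becomes $|f(y) - g(y)| \le Lr + 2Lr + Lr = 4Lr$. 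In other words, the slack you noted between your $3Lr$ and the stated $4Lr$ is precisely what pays for the coarser range partition; with that adjustment your argument is complete and matches the lemma's constants.
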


The lemma proves an upper bound for the covering number of the whole class ${\cal F}$,
which will contain regularized network function class as a subset. 
When applying to analyzing the general and on-or-near manifold densities,
the $K$ will be a ball in $\R^D$ and the compact manifold respectively,
the covering number ${\cal N}(K)$ then
differs in a factor of $r^{-D}$ v.s. that of $r^{-d}$.

\subsubsection{Concentration of $L_n$ sup over network function family}

The concentration of $L_n[f]$ for a single Lipschitz $f$ is a direct result of Bernstein's inequality for sub-exponential random variables.
Specifically, the following lemma, proved in Appendix \ref{app:proofs}, 
verifies that the random variables $T \circ f(x_i)$ are sub-exponential due to that the density of $x_i$ is in ${\cal P}_\text{exp}$.
\begin{lemma}\label{lemma:sub-exp-xi}
Suppose that $T: \R \to \R$, Lip($T$) $ \le1$, 
$f: \R^D \to \R$, $\text{Lip}_{\R^D}(f) \le L$, $x_i \sim p$, $i=1,\cdots,n$, i.i.d.,  and $p$ is sub-exponential on $\R^D$, i.e.  $p \in {\cal P}_{\text{exp}}$  with $c=1$,
then $\xi_i : = T(f(x_i))$ are i.i.d 1D sub-exponential random variables, and specifically,
\begin{equation}\label{eq:sub-exp-xi}
\Pr [ | \xi_i - \E \xi_i  | > t]
\le C' e^{-c' \frac{t}{L}},
\quad \forall t > 0,
\end{equation}
where $C'$ and $c'$ are absolute positive constants.
\end{lemma}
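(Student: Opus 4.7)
\textbf{Proof plan for Lemma~\ref{lemma:sub-exp-xi}.} The i.i.d.\ claim is immediate since $\xi_i = (T\circ f)(x_i)$ is a deterministic function of $x_i$ and the $x_i$ are i.i.d.; the content is the sub-exponential tail bound. The plan is to transfer the sub-exponential tail of $\|x_i\|$ (guaranteed by $p\in\mathcal{P}_{\text{exp}}$ with $c=1$) to $\xi_i$ via the two Lipschitz compositions, and then re-center from the natural ``anchor'' $a := T(f(0))$ to $\E \xi_i$.

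First I would compose the Lipschitz bounds: since $\mathrm{Lip}(T)\le 1$ and $\mathrm{Lip}_{\R^D}(f)\le L$, for any $x\in\R^D$,
\[
|T(f(x))-T(f(0))|\ \le\ |f(x)-f(0)|\ \le\ L\,\|x\|.
\]
Thus $|\xi_i - a|\le L\|x_i\|$ pointwise. Combining with the definition \eqref{eq:def-calP-subexp-RD} of $\mathcal{P}_{\text{exp}}$ with $c=1$ gives, for all $s>0$,
\[
\Pr\bigl[\,|\xi_i - a| > s\,\bigr]\ \le\ \Pr[\,L\|x_i\|>s\,]\ \le\ C\,e^{-s/L}.
\]
So $\xi_i-a$ has an exponential tail with scale $L$; in particular all its moments are finite.

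Next I would pass from the anchor $a$ to the true mean $\E\xi_i$. By Jensen and the pointwise bound above,
\[
|\E\xi_i - a|\ \le\ \E|\xi_i-a|\ \le\ L\,\E\|x_i\|,
\]
and $\E\|x_i\|$ is a finite absolute constant $M$ depending only on $C$ (integrating the tail of $\|x_i\|$). Hence $|\E\xi_i - a|\le LM$. The triangle inequality then gives, for any $t>0$,
\[
\{\,|\xi_i-\E\xi_i|>t\,\}\ \subset\ \{\,|\xi_i-a|>t-LM\,\},
\]
so for $t\ge 2LM$, $\Pr[|\xi_i-\E\xi_i|>t]\le C\,e^{-(t-LM)/L}\le (Ce^{M})\,e^{-t/L}$, while for $t<2LM$ the bound is trivial by adjusting the leading constant (e.g.\ take $C'\ge e^{2M}C$). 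Consolidating into absolute constants $C',c'>0$ yields \eqref{eq:sub-exp-xi}.

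The only mildly subtle point I expect is the final re-centering: one must check that the shift from $a$ to $\E\xi_i$ is itself of size $O(L)$, which is why it can be absorbed into the exponent without losing the $1/L$ scale. This is precisely handled by the Jensen step $|\E\xi_i-a|\le L\,\E\|x_i\|$, which works because the sub-exponential tail of $\|x_i\|$ comes with a scale-free constant $c=1$ (after the normalization argument made in the paper), so $\E\|x_i\|$ is independent of $L$ and $D$. Everything else is a routine assembly of Lipschitz composition and a one-line tail-to-tail comparison.
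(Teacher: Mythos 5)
Your proof is correct and follows essentially the same route as the paper: both transfer the exponential tail of $\|x_i\|$ to $\xi_i$ via the Lipschitz composition $\mathrm{Lip}(T\circ f)\le L$ and absorb an $O(L)$ mean shift into the constants. The only cosmetic difference is that you anchor at $T(f(0))$ and re-center afterwards, while the paper bounds $|\xi_i-\E\xi_i|\le L(\|x_i\|+\E\|x_i\|)$ directly.
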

Bernstein's inequality for sub-exponential random variables (Corollary 2.8.3 in \cite{vershynin2018high}) then gives that 
\begin{equation}\label{eq:Bern-1}
\Pr \left[
 \left| \frac{1}{n }  \sum_{i=1}^n T(f(x_i)) - \E_{x \sim p} T(f(x)) \right| \ge t
 \right]
 \le 2 \exp \left \{
 - c_0' n \frac{t^2}{L^2} \right\},
 \quad \forall  0< t  < c_1' L,
\end{equation}
where $c_0'$, $c_1'$ are absolute positive constant.
We will control the sup over network function family by a covering argument. 
Define the regularized neural network function class for architecture $\Theta$ as
\begin{equation}\label{eq:def-Freg}
{\cal F}_{\Theta, \text{reg}}(\Omega) = \{
f \in {\cal F}_\Theta,
\, \text{Lip}_{\R^D}( f ) \le L,
\, \exists x_0 \in \Omega, f(x_0) = 0
\},
\end{equation}
where the dependence on $\Omega$ is only via the assumption on the location of a vanishing point of $f$.
Under Assumption \ref{assump:netLip}, $L$ equals the universal constant $L_\Theta$,
the subscript $\Theta$ is omitted here for simplicity. 

\begin{proposition}\label{prop:conc-supF-Ln}
Let $B_R$ denote the ball in $\R^D$ centering at origin, $R \ge 1$.
For the three density settings,  suppose that
\begin{itemize}
\item[(1)] General. $p, q \in {\cal P}_{\text{exp}}$ with $c=1$. 
When $p$ and $q$ are compactly supported, supp($p+q$) $\subset B_{R}$.

\item[(2)] On-manifold.
Case (1) plus that supp($p+q$) $\subset {\cal M} \subset B_R$ in $\R^D$.

\item[(3)] Near-manifold. ${\cal M}$  as in (2),
case (1) plus that
$p, q \in {\cal P}_\sigma$ as defined in \eqref{eq:density-mannifold-exp-decay} with  $\sigma < \frac{1}{2}$.
\end{itemize}
Suppose that the network function family, 
denoted as ${\cal F}_{\Theta, \text{reg}}$ for all cases,
 is ${\cal F}_{\Theta, \text{reg}}( B_{R}) $ for case (1),
and ${\cal F}_{\Theta, \text{reg}}( {\cal M} ) $ for cases (2) (3).
Then, 
when $n$ is sufficiently large, with probability $\to 1$ as $n \to \infty$,
\[
\underset{ f \in {\cal F}_{\Theta, \text{reg}}}{\sup} | L_n[f] - L[f]|
\le
\begin{cases}
\tilde{C} \cdot L \log n (\log n /n)^{1/(2+D)},  & \text{case (1)}\\
\tilde{C}({\cal M}) \cdot L (\log n /n)^{1/(2+d)}, &   \text{case (2)}\\
\tilde{C}({\cal M}) \cdot L (\sigma +  (\log n /n)^{1/(2+d)}), &   \text{case (3)}
\end{cases}
\] 
where $\tilde{C}( \cdot)$ refers to a positive constant that may depend on $\cdot$,
without $(\cdot)$ an absolute constant, 
and the notation stands for different constants in different cases.
In case (1), if $p$, $q$ are compactly supported, 
the bound can improve to $\tilde{C}(R) \cdot L (\log n /n)^{1/(2+D)}$ removing a $\log n$ factor. 
The $\to 1$ probability is exponentially high except for the non-compactly supported case in (1).
\end{proposition}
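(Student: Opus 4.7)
The plan is a covering argument combined with Bernstein's inequality. For a single $f \in \mathcal{F}_{\Theta,\text{reg}}$ with $\text{Lip}_{\R^D}(f) \le L$, Lemma \ref{lemma:sub-exp-xi} together with \eqref{eq:Bern-1} (applied to each of the $X$- and $Y$-sums appearing in $L_n$) already gives $\Pr[|L_n[f] - L[f]| > t] \le 4 e^{-c n t^2/L^2}$ for $t$ in a sub-exponential range. To upgrade this to a uniform bound over $\mathcal{F}_{\Theta,\text{reg}}$, I would first observe that the vanishing-point condition in \eqref{eq:def-Freg} combined with the Lipschitz constraint yields $\|f\|_{L^\infty(K)} \le L\cdot\text{diam}(K)$ for any compact $K$ containing the designated $x_0$. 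Lemma \ref{lemma:cover} then supplies an $L^\infty(K)$-net $F \subset \mathcal{F}_{\Theta,\text{reg}}$ of radius $4Lr$ with $\log\text{Card}(F) \le \mathcal{N}(K,r)\log(2\,\text{diam}(K)/r)$. A union bound over $F$, followed by an extension from $F$ to $\mathcal{F}_{\Theta,\text{reg}}$ via the Lipschitz bound, reduces the argument to (i) controlling the integrands off $K$, and (ii) balancing the net radius $r$ against the Bernstein deviation $t$.

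The three cases then differ in the choice of $K$ and in how off-$K$ mass is handled. Case (2) is cleanest: take $K=\mathcal{M}$, so that $p,q$ are fully supported in $K$, off-$K$ terms vanish, $\mathcal{N}(\mathcal{M},r) \lesssim r^{-d}$, and $\text{diam}(\mathcal{M}) = O(1)$. Balancing $nt^2/L^2 \asymp r^{-d}\log(1/r)$ against the extension error $t \asymp Lr$ gives $r \asymp (\log n/n)^{1/(2+d)}$, the claimed rate. Case (3) also uses $K = \mathcal{M}$, but for two net functions $f_1,f_2$ with $\|f_1-f_2\|_{L^\infty(\mathcal{M})} \le 4Lr$ the Lipschitz regularization yields $|f_1(x)-f_2(x)| \le 4Lr + 2L\,d(x,\mathcal{M})$ on all of $\R^D$; integrating against $p$ or $q$ contributes $O(Lc_1\sigma)$ via the exponential decay \eqref{eq:density-mannifold-exp-decay}, and with high probability the empirical integral picks up the same $O(Lc_1\sigma)$ by applying Bernstein once more to the sub-exponential variables $d(x_i,\mathcal{M})$. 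The same balancing in $r$ then produces the on-manifold rate together with the additive $\sigma$ term. Case (1) requires truncation: set $K = B_R$ with $R=R(n)\asymp\log n$, so that the expected tail $\int_{B_R^c}(|T_p\circ f|p + |T_q\circ f|q) \lesssim L\cdot Re^{-R}$ by $|f(x)| \le L(R+\|x-x_0\|)$ and sub-exponential decay, while the empirical tail is $O(L\log n/n^c)$ with high probability. Then $\mathcal{N}(B_R,r)\lesssim(R/r)^D$ and $\text{diam}(B_R)=O(\log n)$, and balancing produces the rate $L\log n\cdot(\log n/n)^{1/(2+D)}$; when $\text{supp}(p+q)$ is already compact, $R$ is $O(1)$ and one $\log n$ factor drops out.

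The main obstacle I anticipate is case (1): the truncation radius $R(n)$ couples the covering-number bound, the expected tail correction, and the empirical tail correction simultaneously, so a separate high-probability argument, e.g.\ a Bernstein bound on both $\#\{i:\|x_i\|>R\}$ and $\sum_i \|x_i\|\mathbbm{1}\{\|x_i\|>R\}$, is needed to guarantee that no sample drifts far past $B_R$ uniformly in $f$. A secondary subtlety is that Lemma \ref{lemma:cover} produces an $L^\infty(K)$-net whereas Bernstein delivers an $L^1(p)$-type deviation; the Lipschitz regularization of $\mathcal{F}_{\Theta,\text{reg}}$ is precisely what bridges the two, both on $K$ directly and off $K$ through the decay of $p$. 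Tracking how constants split between $\mathcal{M}$-dependent (from atlas quantities as in Proposition \ref{prop:integral-swap}), $L$-dependent, and absolute, in order to match the stated $\tilde{C}(\cdot)\cdot L$ form in each case, is routine bookkeeping once the three ingredients above are in place.
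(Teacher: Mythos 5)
Your proposal is correct and follows essentially the same route as the paper: a Lipschitz-based $L^\infty$ net from Lemma \ref{lemma:cover} (using the vanishing-point condition to bound $\|f\|_{L^\infty(K)}$), Bernstein concentration for single $f$ via Lemma \ref{lemma:sub-exp-xi} and \eqref{eq:Bern-1}, union bound plus net-to-class extension through the Lipschitz regularization, with $K=\mathcal{M}$ and covering number $\lesssim r^{-d}$ in cases (2)--(3), a Bernstein bound on $\frac{1}{n}\sum_i d(x_i,\mathcal{M})$ for the empirical near-manifold correction, and truncation at radius $R_n \asymp \log n$ in the non-compact case, all balanced exactly as in the paper. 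The only (harmless) deviations are that in case (3) you bound the population discrepancy directly via $|f(x)-f_0(x)|\le 4Lr+2L\,d(x,\mathcal{M})$ together with $\int d(x,\mathcal{M})\,p \le c_1\sigma$, where the paper instead invokes the integral comparison Proposition \ref{prop:integral-swap}, and in case (1) you control the empirical tail by bounding the number and magnitude of samples outside $B_{R_n}$, where the paper uses the simpler high-probability event that no sample leaves $B_{R_n}$; both variants yield the same rates and constants of the stated form.
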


In case (1), though the densities may have sub-exponential tail,
 it is generic to assume that exists $x_0 \in B_{R}$ such that $p(x_0)=q(x_0)$. 
Since $\log \frac{p}{q}$ vanishes at least at one point inside $B_{R}$, we consider network functions that also have this property.
The condition that $R \ge 1$ is only for convenience, and does not pose any constraint to our setting. 
Note that the estimation error in cases (2) improves from the generic case (1) by reducing the $D$ to the intrinsic dimensionality $d$,
and in the near-manifold case, the bound adds another term of $O(\sigma)$,
similar to the result of the approximation error. 
The analysis of case (3) again uses the integral comparison technique in Section \ref{sec:theory-approx}.

As a remark, if $p$, $q$ have compact support $\Omega$ which has a smaller volume than $B_R^D$, 
then the covering number ${\cal N}(\Omega,r)$ can be made $< {\cal N}(B_R^D,r)$,
the latter leading to the $-\frac{1}{2+D}$ rate in case (1). 
This is the fundamental reason that the rate can be improved to $-\frac{1}{2+d}$ for on or near manifold densities in cases (2) and (3).
Our proof technique bounds the estimation error by the covering complexity of the 
``essential'' support of the densities, allowing certain sub-exponential tail, 
which can be viewed as capturing the essential complexity of the densities. 
The manifold setting is a special case that is natural in applications.

\subsection{Testing power and consistency of network-logit test}
\label{subsec:test-consist}

We have now finished the 5 steps in Section \ref{subsec:roadmap} to prove that,
under Assumptions \ref{assump:training}, \ref{assump:ftar}, \ref{assump:netLip},
 the $\hat{f}_{\text{tr}}$
obtained by training a classifier on the training set has a $L[\hat{f}_{\text{tr}}]$
which is 
\begin{equation}\label{eq:summary-gap-C}
L_{\text{gap}} := \Delta C + O( \varepsilon) + O(\sigma) + \tilde{O}(n^{-1/(2+d')})
\end{equation}
close to $L[f_{\text{tar}}] = C > 0$, 
$f_{\text{tar}}$ being a smooth surrogate of $f^*=\log \frac{p}{q}$,
and $C$ can approach the unconstrained upper bound $L[f^*] = $JSD$(p,q)$ if $f^*$ itself is regular. 
In \eqref{eq:summary-gap-C}, 
$\varepsilon$ is the network approximation error of $L[f_{\text{tar}}]$, 
$\sigma$ is the sub-exponential decay scale of near-manifold densities (and does not appear for general or on-manifold density settings),
$\Delta C$ is the optimization error, and the last term decaying with $n$ is the estimation error,
where
\begin{itemize}
\item
For general sub-exponential densities in $\R^D$,
$d' = D$, the network complexity is $O(\varepsilon^{-D/r})$ for $r$-regular $f_{\text{tar}}$.

\item 
When $p$, $q$ are on or near a $d$-dimensional compact smooth manifold ${\cal M}$,
$d'=d$, the network complexity is $O(\varepsilon^{-d/2})$ for $C^2$ $f_{\text{tar}}|_{\cal M}$.
\end{itemize}
The constants in all the big $O$'s may depend on the network Lipchitz upper bound $L_\Theta$,
the function $f_{\text{tar}}$,
the diameter of the domain where the densities lie on (in case of exponential decay, the scale $c$ in ${\cal P}_{\text{exp}}$),
and the manifold-atlas if the densities are on-or-near manifold,
but not on $\varepsilon$.

When $L_\text{gap}$ is small enough,
the above result guarantees that $L[\hat{f}_{\text{tr}}] > 0$.
With an elementary lemma showing that $T[f] \ge 4 L[f]$,
which is Lemma \ref{lemma1}
(the tightness of the relaxation is explained in the remark below and in Lemma \ref{lemma2}),
it gives that $T[\hat{f}_{\text{tr}}] > 0$.
This sets a strictly positive expectation of the test statistic $\hat{T}$.
The testing consistency then follows by standard CLT, 
after proving a bounded variance of the test statistic. 
This gives the main theorem of this section:

\begin{theorem}\label{thm1}
Notations as before,
under Assumptions \ref{assump:training}, \ref{assump:ftar}, \ref{assump:netLip},
and the generic settings in Proposition \ref{prop:conc-supF-Ln},
let $\hat{f}_{\text{tr}}$ be the trained network function from $n_{tr}$ many samples of $X$ and $Y$,
and $T_n$ be the test statistic evaluated on the testing set where $|X_{\text{te}}| = |Y_{\text{te}}| = n$.
If $\Delta C$, $\varepsilon$, $\sigma$ are sufficiently small and $n_{tr}$ sufficiently large
to make $L_{\text{gap}} < C$,
where the needed network complexity is bounded as above,
then

(1) When $p \neq q$,
 $\E T_n  = T[ \hat{f}_{\text{tr}} ] > 0$,
and specifically $\frac{1}{4} \E T_n > C -  L_{\text{gap}}$.

(2) 
When $p = q$, 
$\sqrt{n} T_n  \to {\cal N}(0,  \sigma_{{\cal H}_0}^2)$ in distribution.
When $p \neq$ q,
$\sqrt{n} (T_n  - \E T_n) \to {\cal N}(0,  \sigma_{{\cal H}_1}^2)$ in distribution.
$\sigma_{{\cal H}_0}$ and $\sigma_{{\cal H}_1}$ are all bounded by 
$L_\Theta$ multiplied by an absolute constant.
\end{theorem}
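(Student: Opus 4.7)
The proof is the culmination of the roadmap laid out in Section \ref{subsec:roadmap}, so my plan is to execute those five steps and then separately handle the CLT claim. For Part (1), the plan is as follows. First, invoke Assumption \ref{assump:ftar} to obtain a smooth $f_{\text{tar}}$ with $L[f_{\text{tar}}] = C > 0$. Next, apply Proposition \ref{prop:step2-general-pq} in the general/on-manifold setting, or Proposition \ref{prop:step2-nearmanifold-pq} in the near-manifold setting, to produce $f_{\text{con}} \in {\cal F}_{\Theta,\text{reg}}$ with $L[f_{\text{con}}] \ge C - O(\varepsilon) - O(\sigma)$ and with the network complexity scaling with the intrinsic dimension where relevant. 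Then I combine three inequalities: Proposition \ref{prop:conc-supF-Ln} gives $\sup_{f \in {\cal F}_{\Theta,\text{reg}}}|L_n[f]-L[f]| \le \tilde{O}(n^{-1/(2+d')})$ w.h.p.\ over the training set; Assumption \ref{assump:training} gives $L_n[\hat f_{\text{tr}}] \ge L_n[\hat f_{\text{gl}}] - \Delta C \ge L_n[f_{\text{con}}] - \Delta C$; and the uniform concentration applied once more passes from $L_n[\hat f_{\text{tr}}]$ back to $L[\hat f_{\text{tr}}]$. Chaining these yields $L[\hat f_{\text{tr}}] \ge C - L_{\text{gap}}$ with $L_{\text{gap}}$ as in \eqref{eq:summary-gap-C}.

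To translate this into a bound on $T[\hat f_{\text{tr}}]$, I use the elementary fact alluded to as Lemma \ref{lemma1}, namely $T[f] \ge 4 L[f]$. This is just concavity: the maps $f \mapsto \log\frac{2e^{f}}{1+e^{f}}$ and $f \mapsto \log\frac{2}{1+e^{f}}$ are concave, so $L[f]$ is concave in $f$; expanding at $f=0$ gives $L[0]=0$ and $\frac{d}{dt}L[tf]\bigr|_{t=0} = \tfrac{1}{4}T[f]$, and concavity yields $L[f] \le \tfrac{1}{4}T[f]$. Substituting the lower bound on $L[\hat f_{\text{tr}}]$ gives $T[\hat f_{\text{tr}}] \ge 4(C - L_{\text{gap}}) > 0$ provided $L_{\text{gap}} < C$. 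Since the test split is independent of the training split, conditional on $\hat f_{\text{tr}}$ one has $\E[T_n \mid \hat f_{\text{tr}}] = T[\hat f_{\text{tr}}]$, which delivers the bound $\tfrac{1}{4}\E T_n > C - L_{\text{gap}}$ stated in (1).

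For Part (2), I condition on $\hat f_{\text{tr}}$ and view
\[
T_n = \frac{1}{n}\sum_{x \in X_{\text{te}}} \hat f_{\text{tr}}(x) - \frac{1}{n}\sum_{y \in Y_{\text{te}}} \hat f_{\text{tr}}(y)
\]
as a difference of two independent sample means of i.i.d.\ bounded-variance random variables. Under ${\cal H}_0$, $p=q$, so $T[\hat f_{\text{tr}}]=0$ for every $\hat f_{\text{tr}}$; under ${\cal H}_1$, the conditional mean is $T[\hat f_{\text{tr}}]$. The classical CLT applied to each mean separately then yields the Gaussian limits with variance $\sigma^2 = \mathrm{Var}_{x \sim p}[\hat f_{\text{tr}}(x)] + \mathrm{Var}_{y \sim q}[\hat f_{\text{tr}}(y)]$. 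To bound $\sigma^2$ by $O(L_\Theta^2)$, I use Assumption \ref{assump:netLip} (any $f \in {\cal F}_{\Theta,\text{reg}}$ is $L_\Theta$-Lipschitz) and normalize $\hat f_{\text{tr}}$ by subtracting its value at a reference point (say the vanishing point in ${\cal F}_{\Theta,\text{reg}}$), which leaves $T_n$ unchanged. Then Lemma \ref{lemma:sub-exp-xi} shows $\hat f_{\text{tr}}(x) - \hat f_{\text{tr}}(x_0)$ is sub-exponential with parameter proportional to $L_\Theta$, giving the claimed uniform variance bound.

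The main obstacle, in my view, is the bookkeeping across two sources of randomness: the training sample drives the approximation/estimation/optimization chain of Steps 1--5 and supplies an event of probability tending to one on which $L_{\text{gap}}<C$, while the test sample drives the CLT. The cleanest presentation is to prove everything conditional on the training set, lifting the final statements to unconditional ones using the fact that the good training event has probability $\to 1$ (from Proposition \ref{prop:conc-supF-Ln} and, where applicable, Assumption \ref{assump:training}). The only mildly delicate point is verifying that the variance bound $\sigma^2 \le cL_\Theta^2$ is uniform in $\hat f_{\text{tr}}$, which is exactly why the Lipschitz regularization in Assumption \ref{assump:netLip} is essential: it ensures sub-exponentiality of the integrand under $p$ and $q$ with a deterministic parameter, so the CLT variance is controlled even though $\hat f_{\text{tr}}$ itself is random.
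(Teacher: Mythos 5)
Your proposal is correct and follows essentially the same route as the paper: Steps 1--5 (Propositions \ref{prop:step2-general-pq}, \ref{prop:step2-nearmanifold-pq}, \ref{prop:conc-supF-Ln} together with Assumptions \ref{assump:training} and \ref{assump:netLip}) yield $L[\hat f_{\text{tr}}] \ge C - L_{\text{gap}}$ on the training side, the inequality $T[f]\ge 4L[f]$ converts this into the positive-mean claim of part (1), and part (2) is the CLT for the two independent test-set averages with the variance bounded via the sub-exponential argument of Lemma \ref{lemma:sub-exp-xi} with $L=L_\Theta$. Your concavity/tangent-line derivation of $T[f]\ge 4L[f]$ is a valid minor variant of the paper's pointwise bound $\frac{1+e^{\xi}}{2}\ge e^{\xi/2}$ used in Lemma \ref{lemma1}, so nothing essential differs.
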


The above Theorem directly gives the asymptotic test consistency, the proof of which is omitted.
\begin{corollary}\label{cor1-asymp-power}
Suppose Theorem \ref{thm1} holds and notations as therein,
and when  $p \neq q$, $T:= \E T_n > 4 (L[f_\text{tar}] - L_{\text{gap}}) > 0$.
Let $ 0 < \alpha < 1$ be the two-sample test level, typically $\alpha = 0.05$, 
and the test threshold be 
$\tau_n = \frac{\sigma_{{\cal H}_0}}{\sqrt{n}} \Psi^{-1}( \alpha)$, 
where $\Psi(x) := \int_x^\infty \frac{1}{\sqrt{2\pi}} e^{-y^2/2} dy$.
Then, as $n = n_{te} \to \infty$,
 $\Pr [T_n > \tau_n] \to \alpha$ if $p=q$,
 and when $p \neq q$,
\[
\Pr [T_n > \tau_n | {\cal H}_1]
\to 
1 - \Psi\left( \sqrt{n} \cdot \frac{T - \tau_n}{\sigma_{{\cal H}_1}} \right)
\ge
1 - \Psi\left( \sqrt{n} \cdot \frac{ 4 (L[f_\text{tar}] - L_{\text{gap}})  - \tau_n}{\sigma_{{\cal H}_1}} \right)
\]
which $\ge 1- c_0 e^{-c_1 n}$ for positive constants $c_0$ and $c_1$ (different from the other $c_0$, $c_1$ in the paper).
\end{corollary}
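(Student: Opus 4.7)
The corollary is a fairly routine consequence of the CLT statements established in Theorem \ref{thm1}, together with elementary Gaussian tail bounds. The plan is to treat the null and alternative cases separately, and then squeeze out the exponential lower bound on the power.

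\textbf{Null case.} Under $\mathcal{H}_0$, Theorem \ref{thm1}(2) gives $\sqrt{n} T_n / \sigma_{\mathcal{H}_0} \to \mathcal{N}(0,1)$ in distribution. Rewriting $\{T_n > \tau_n\}$ as $\{\sqrt{n} T_n/\sigma_{\mathcal{H}_0} > \Psi^{-1}(\alpha)\}$ and applying continuity of $\Psi$ at $\Psi^{-1}(\alpha)$, we obtain
$\Pr[T_n > \tau_n] \to \Psi(\Psi^{-1}(\alpha)) = \alpha$.

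\textbf{Alternative case, limiting expression.} Under $\mathcal{H}_1$, Theorem \ref{thm1}(2) gives $\sqrt{n}(T_n - T)/\sigma_{\mathcal{H}_1} \to \mathcal{N}(0,1)$, where $T := \mathbb{E} T_n$. Rewriting
\[
\{T_n > \tau_n\} = \Big\{ \frac{\sqrt{n}(T_n - T)}{\sigma_{\mathcal{H}_1}} > \frac{\sqrt{n}(\tau_n - T)}{\sigma_{\mathcal{H}_1}}\Big\},
\]
and noting that $1 - \Psi(-x) = \Psi(x)$ for the (one-minus) right tail, the CLT approximation yields $\Pr[T_n > \tau_n | \mathcal{H}_1] = 1 - \Psi\!\left( \sqrt{n}(T-\tau_n)/\sigma_{\mathcal{H}_1}\right) + o(1)$. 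The claimed inequality then follows from monotonicity of $1-\Psi$ together with the assumed lower bound $T > 4(L[f_\text{tar}] - L_\text{gap})$.

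\textbf{Exponential lower bound.} Since $\tau_n = \sigma_{\mathcal{H}_0} \Psi^{-1}(\alpha)/\sqrt{n} = O(n^{-1/2})$ and $T$ is a fixed positive constant, there is $n_0$ with $T - \tau_n \ge T/2$ for all $n \ge n_0$. Thus $u_n := \sqrt{n}(T-\tau_n)/\sigma_{\mathcal{H}_1} \ge \sqrt{n}\, T/(2\sigma_{\mathcal{H}_1})$, and the standard Gaussian tail bound $\Psi(u) \le e^{-u^2/2}$ for $u > 0$ gives $\Psi(u_n) \le \exp(-nT^2/(8\sigma_{\mathcal{H}_1}^2))$. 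Choosing $c_1 = T^2/(8\sigma_{\mathcal{H}_1}^2)$ and absorbing the CLT approximation error into a constant $c_0$ yields $\Pr[T_n > \tau_n | \mathcal{H}_1] \ge 1 - c_0 e^{-c_1 n}$.

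The only subtlety is that the bare CLT only gives a qualitative limit, whereas the stated conclusion is non-asymptotic. To make the exponential bound rigorous one should either quote a Berry--Esseen-type quantitative CLT (its $O(n^{-1/2})$ correction is dominated by the exponential $e^{-c_1 n}$ after mildly decreasing $c_1$), or bypass the CLT and apply Bernstein's inequality directly to $T_n$: by Lemma \ref{lemma:sub-exp-xi} and Assumption \ref{assump:netLip}, the summands defining $T_n$ are sub-exponential with parameter proportional to $L_\Theta$, so $\Pr[|T_n - T| \ge T/2] \le 2\exp(-c n)$ for an explicit $c > 0$ once $n$ is large enough that $T/2 < c_1' L_\Theta$ in \eqref{eq:Bern-1}. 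This second route is cleaner and is the one I would ultimately follow; it makes the exponential-in-$n$ power bound immediate and avoids any dependence on the CLT convergence rate.
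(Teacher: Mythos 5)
Your argument is correct and is exactly the routine CLT-plus-Gaussian-tail route the paper has in mind: the paper explicitly omits the proof of this corollary, stating it follows directly from Theorem \ref{thm1}, and your null-case reduction, the survival-function identity $\Psi(-x)=1-\Psi(x)$ under ${\cal H}_1$, and the bound $\Psi(u)\le e^{-u^2/2}$ with $\tau_n = O(n^{-1/2})$ fill in precisely that intended argument. Your closing remark that the exponential bound, read non-asymptotically for the actual rejection probability rather than for the limiting expression, is better obtained via Bernstein's inequality (using Lemma \ref{lemma:sub-exp-xi} and Assumption \ref{assump:netLip}) is a sound refinement consistent with the paper's tools.
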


Going beyond the asymptotic consistency as $n\to \infty$,
one may obtain a lower bound of testing power based on the bound of $\E T_n $
and $\text{Var}(T_n)$ in Theorem \ref{thm1} by Chebyshev inequality. 
This will provide a finite-sample testing power 
which is positive and approaching 1 
as long as $(L[f_\text{tar}] - L_{\text{gap}}) $ can exceed
a constant multiple of $n_{te}^{-1/2}$.
Details omitted here.

\section{Experiments}\label{sec:experiment}

This section conducts numerical experiments of the proposed two-sample test and compares with alternatives,
on synthetic 1D and manifold densities and evaluating hand-written digits generating models. 
Codes to produce experimental results will be publicly online.

\begin{figure}[t]
\begin{center}
		\begin{tabular}{ c c c c}
		        \hspace{-1.25cm}
			\includegraphics[width=.25\textwidth]{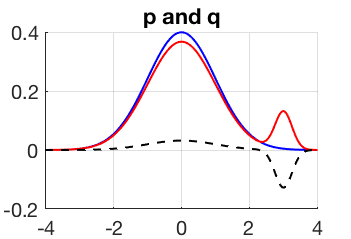}
			 & 
			\hspace{-0.55cm}
			\includegraphics[width=.25\textwidth]{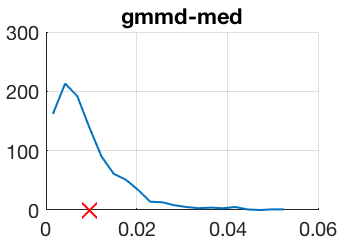} 
			&
			\hspace{-0.55cm}
			\includegraphics[width=.25\textwidth]{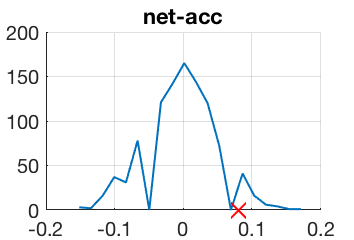}
			 &
			\hspace{-0.55cm}
			\includegraphics[width=.25\textwidth]{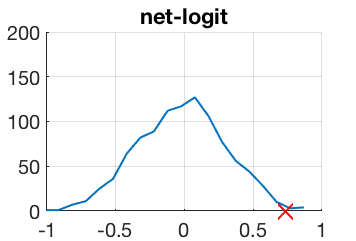} 
			 \\
			
			 \hspace{-1cm}
			 
			 \raisebox{1.25cm}{
			 \scriptsize
			 \stackunder[5pt]{
			 \begin{tabular}{ c|c c c  }
			\hline
                           				& {\it gmmd} 	&  {\it net-acc}  	& {\it net-logit}  \\
			 \hline
			mean 	         	& 19.14		& 19.98  			& 	78.09 	 \\
			std		         	&  1.95 		& 10.43			&	20.56		\\
			median  	 		& 19.63		& 17.63			& 	84.13	 \\
			 \hline
			\end{tabular}
			}{Test Power}
			}			
			 & 
			\hspace{-0.55cm}
			\includegraphics[width=.24\textwidth]{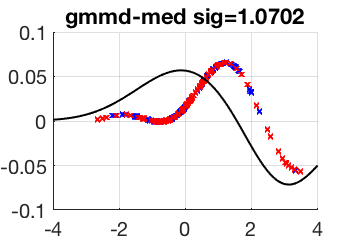} 
			&
			\hspace{-0.55cm}
			\includegraphics[width=.24\textwidth]{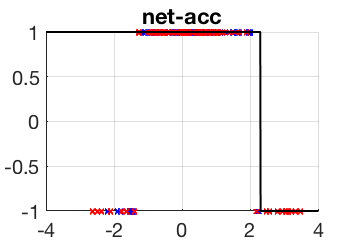} 
			&
			\hspace{-0.55cm}
			\includegraphics[width=.24\textwidth]{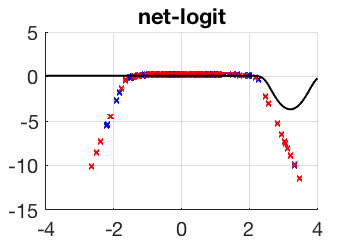}
		\end{tabular}
\end{center}
\vspace{-15pt}
\caption{
\small
{\bf Plots}: 
Top-left: Two densities  $p$ and $q$ in Eg.3, $\delta = 0.08$.
Right three columns:
The test statistic $\hat{T}$ on $|X_\text{te}|= |Y_\text{te}|=100$ samples
i.e. under ${\cal H}_1$ (red cross)
and the histogram of $\hat{T}$  under 1000 permutation tests i.e. ${\cal H}_0$ (blue curve).
The population witness function (black curve) and the empirical one evaluated on test data (red cross for $X_\text{te}$,
blue crosses for $Y_\text{te}$) of the three methods are shown in the bottom row.
{\bf Table}: 
The mean, standard deviation (``std") and median of the test power 
computed  over $n_{\text{rep}}=20$ replicas.
The randomness of test power estimation is explained in Appendix \ref{app:exp-1d-section}.}
\label{fig:1dexample}
\end{figure}

\begin{figure}[t]
\begin{center}
\hspace{-0.65cm}
 \raisebox{0.3cm}{
\includegraphics[height=0.125\textwidth,trim = {0cm 0cm 0cm 0cm}, clip]{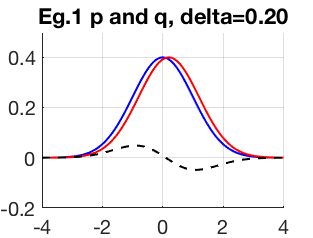}
}
\hspace{-0.4cm}
\includegraphics[height=0.15\textwidth,trim = {6cm 0cm 6cm 0cm}, clip]{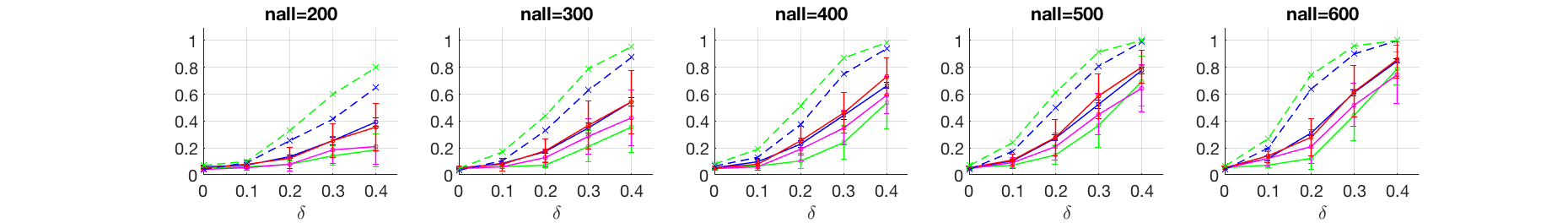}
\\
\hspace{-0.65cm}
 \raisebox{0.3cm}{
\includegraphics[height=0.125\textwidth,trim = {0cm 0cm 0cm 0cm}, clip]{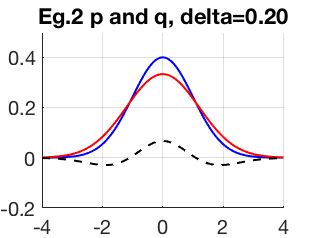}
}
\hspace{-0.4cm}
\includegraphics[height=0.15\textwidth,trim = {6cm 0cm 6cm 0cm}, clip]{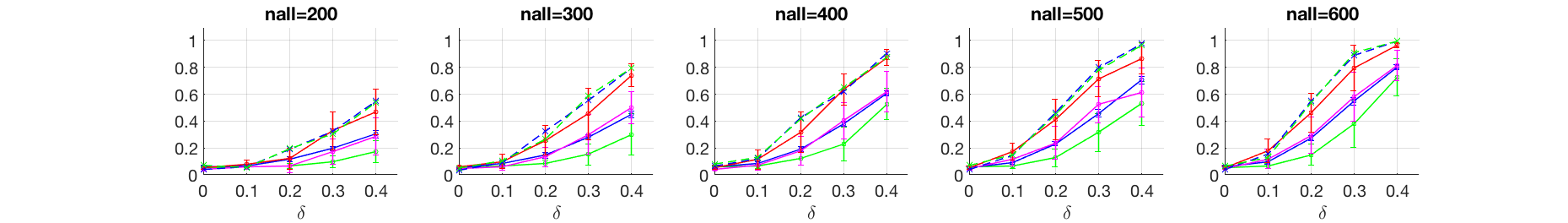}
\\
\hspace{-0.65cm}
 \raisebox{0.3cm}{
\includegraphics[height=0.125\textwidth,trim = {0cm 0cm 0cm 0cm}, clip]{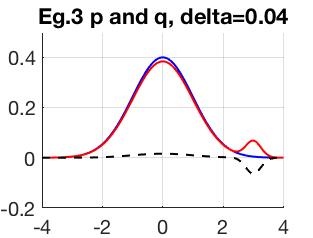}
}
\hspace{-0.4cm}
\includegraphics[height=0.15\textwidth,trim = {6cm 0cm 6cm 0cm}, clip]{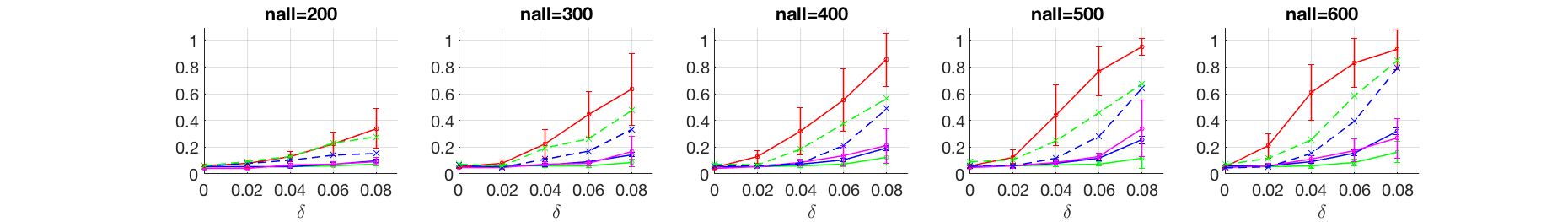}
\end{center}
\vspace{-20pt}
\caption{
Three examples of 1D data  in Section \ref{subsec:exp-1d}. 
Test power of:
{\it gmmd} (blue),
{\it gmmd-ad} (green),
{\it net-acc} (pink),
{\it net-logit} (red),
error bar standing for the standard deviation of the estimated power over 20 replicas,
and
{\it gmmd+} (blue dash),
{\it gmmd++} (green dash).
$n_{all} =|X| + |Y|$ including half-half training-testing split. 
}\label{fig:test5-eg3}
\end{figure}

\subsection{Synthetic 1D normal density departure}\label{subsec:exp-1d}

The following three examples all have
$x_i \sim {\cal N} (0,1)$
and

\begin{itemize}
\item
Eg.1. Mean shift, $y_i \sim {\cal N}(\delta , 1)$;

\item
Eg.2. Dilation of variance, $y_i \sim {\cal N}(0, (1+\delta)^2)$;

\item
Eg.3. Mixture with bump at tail, 
$y_j\sim (1-\delta){\cal N}(0,1) + \delta {\cal N}(3,\frac{1}{16})$.

\end{itemize}

We examine the tests:
(1) {\it net-acc},
(2) {\it net-logit}, 
(3) {\it gmmd} (setting kernel bandwidth $\sigma$ to be median distance),
(4) {\it gmmd-ad} (selecting $\sigma$ adaptively using the training set),
and
(5) {\it gmmd}+ (using all training and testing samples, median $\sigma$),
(6) {\it gmmd}++ (using all data and post-selecting $\sigma$ over a range).
Tests (1)-(4) only use the test set when measuring the power, while (5)-(6) access both the training and test sets.
More details about experimental set-ups are in Appendix \ref{app:exp-1d-section}.
The test powers of all the methods  are plotted in Fig. \ref{fig:test5-eg3} for the three examples. 
For Eg.1 and Eg.2, 
{\it gmmd}+, {\it gmmd}++ are performing consistently better than the other four which only access the test data set,
particularly in Eg.1.
In Eg.3,
 {\it net-logit} gives stronger power than  {\it gmmd}+, {\it gmmd}++
when $n_{\text{all}} > 200$,
where  {\it net-acc} remains inferior to the two.
Among the four methods (1)-(4),
the performances on Eg.1 are comparable,
and {\it net-logit} gives better power on Eg.2 and Eg. 3.
This is especially the case of Eg. 3, 
where {\it net-logit} shows the most significant advantage.
The test powers, empirical and population witness functions of tests (1)(2)(3) on E.g. 3 are shown in Figure \ref{fig:1dexample},
and more details in Appendix \ref{app:exp-1d-eg3-detail}.

\subsection{Comparison of test-power by witness function}

Generally, there is no best test methods to use for all data sets, and the test power depends on both method and data.
Here we study Eg. 3. in more detail, comparing the witness functions of the network-based and kernel-based methods,
so as to explain the empirical advantage of {\it net-logit} in this case. 

The analysis and experiments in Section \ref{sec:1dexample} show that,
with large samples and sufficiently large neural network, the trained witness function $\hat{f}_{tr}$
approaches the population witness function $f^*$ in terms of the population divergence $L[f]$. 
Similar theories hold for kernel MMD tests. Thus comparing test power based on population witness functions 
sheds light on the behavior of the tests.

{\bf Witness function.}
The population witness functions for {\it gmmd}
is $w_\sigma(x) := \int g_\sigma(x-y) (p(y)-q(y))dy$,  $g_\sigma(z):=e^{-|z|^2/(2\sigma^2)}$,
and its empirical counterpart is by replacing $p$ and $q$ with the empirical densities of $X_{\text{te}}$ and $Y_{\text{te}}$ respectively.
Recall that the population and empirical witness function for {\it net-logit} test are
$f^*(x) = \log \frac{p(x)}{q(x)}$ and $f_\theta$ respectively.
For  {\it net-acc}, 
when $|X_\text{te}|= |Y_\text{te}|$,
 it is equivalent to using the sign (taking value of $\pm 1$) of $f_\theta$ instead of $f_\theta$ 
in computing the test statistic in \eqref{eq:def-hatT-logratio}, as shown in \eqref{eq:def-hatT-netacc}.
Thus we call $\text{Sign}(f_\theta(x))$ the empirical witness function for the  {\it net-acc} test,
and $\text{Sign}(f^*(x))$ the population one.
The population and empirical witness functions (in one test run) are plotted in Fig. \ref{fig:1dexample}.
Comparing to {\it gmmd}, 
the witness function of {\it net-logit}, 
i.e., the log density ratio, 
weighs larger at the differential region which is at the tail of the density $p$.
Taking the sign of $f_\theta$ as done in {\it net-acc} test
introduces discontinuity of at the decision boundary neat $x=2$,
which leads to comparatively larger variance of $\hat{T}$.
This intuitively explains why the {\it net-logit} test performs better.

\begin{figure}[t]
\centering
\begin{tabular}{ c c }
\raisebox{2cm}{
\scriptsize
\begin{tabular}[t]{ c | c c c   }
\hline
                           	&  $\textbf{Mean}$ 	& 	$\textbf{Std}$	&  $\textbf{Mean}$/$\textbf{Std}$	 \\
 \hline
{\it gmmd}  	         & 0.0087  		& 	 0.0421  		&  0.2075  		\\
{\it net-acc}		&  0.1579  		&   	0.6087    		&  0.2594			   \\
 {\it net-logit}  	 	&  0.2445     		& 	 0.9011   		&  0.2714				\\
 \hline
\end{tabular}}
&
\hspace{0.5cm}
\includegraphics[height=0.175\textwidth]{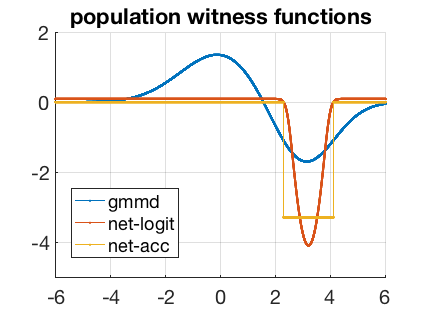}
\end{tabular}
\vspace{-10pt}
\caption{
\small
(Table) 
The values of $\textbf{Mean}$, $\textbf{Std}$, and their ratio 
of the three tests in Fig. \ref{fig:1dexample}.
(Plot)
The population witness functions of the three tests,
normalized to have $\textbf{Std}=1$.
}\label{fig:test4b}
\vspace{-20pt}
\end{figure}

{\bf  Quantitative comparison.}
To further explain the testing power difference,
we give a quantitative comparison.
Let $w$ be the population witness function of the three methods respectively, and define
\begin{align*}
\textbf{Mean}  := \E_{x\sim p, Y \sim q} (w(X)-w(Y)),  
\quad
\textbf{Std}  := \sqrt{ \text{Var}_{x\sim p}( w(X)) +  \text{Var}_{Y \sim q}(w(Y)) }.
\end{align*}
The larger the $\textbf{Mean}$, and the smaller the $\textbf{Std}$, the more powerful the test is going to be.
To remove the scaling equivalence (a test statistic multiplied by a positive constant gives the same test power),
 we will use the ratio of $\textbf{Mean}$ and $\textbf{Std}$ as an indicator of test power. 
 A similar approach has been taken to study kernel MMD \cite{serfling1981approximation, cheng2017two}.
 Using the explicit formula of $p$ and $q$ in Eg. 3,
 the values of $\textbf{Mean} $ and $\textbf{Std}$ can be analytically computed, shown in the table in Fig. \ref{fig:test4b},
 where {\it net-logit} gives the largest ratio.
 The normalized witness functions are plotted in Fig. \ref{fig:test4b},
where {\it net-logit} witness function gives the largest weights to the differential region of $p$ and $q$ in this example.

\begin{figure}[t]
\hspace{-.7cm}
 \raisebox{0.1cm}{
\includegraphics[height=0.16\textwidth,trim = {0cm 0.5cm 0cm 0.5cm}, clip]{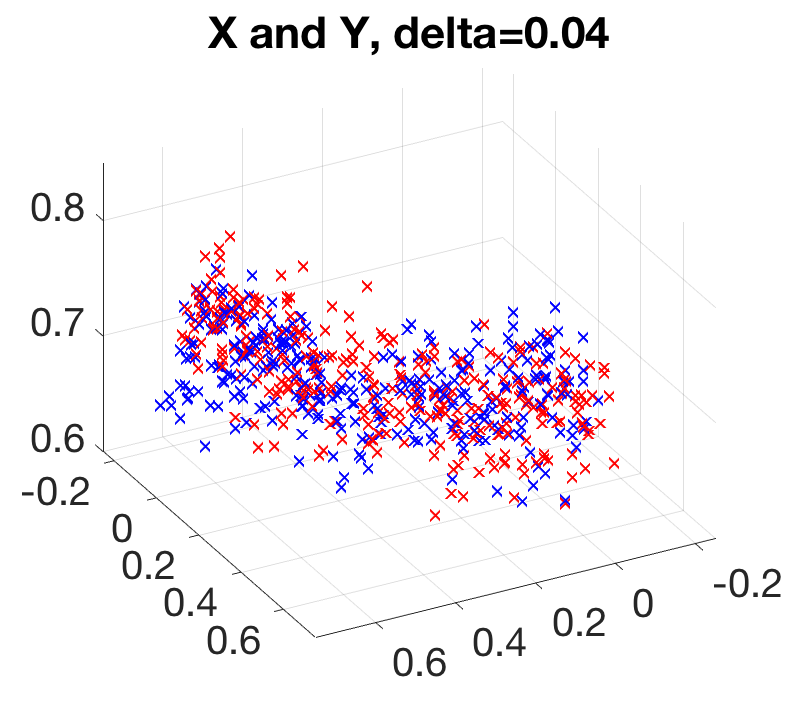}
}
\hspace{-0.7cm}
\includegraphics[height=0.155\textwidth,trim = {6cm 0cm 6cm 0cm}, clip]{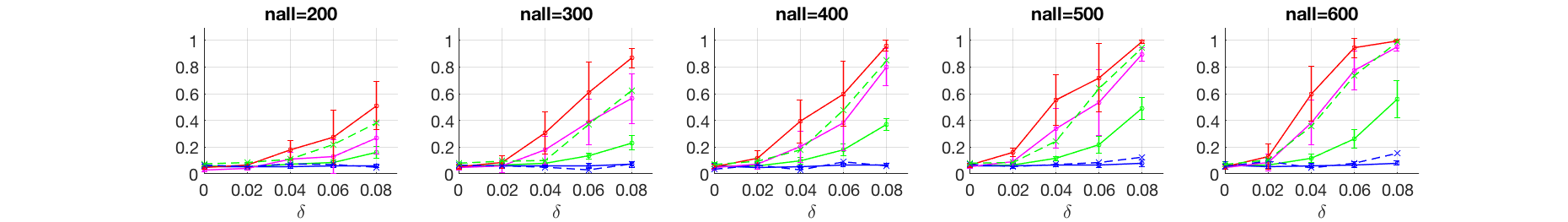}
\vspace{-20pt}
\caption{
\small
Test power of the different tests
on data on sphere in $\R^3$ in Section \ref{subsec:exp-2d}.
Markers same as in Fig. \ref{fig:test5-eg3}.
}\label{fig:test6-sphere}
\end{figure}

\subsection{Synthetic 2D manifold density}\label{subsec:exp-2d}

The example consists of $p$ and $q$ which lie on the sphere $S^2$, 
a 2-dimensional manifold embedded in $\R^3$.
A realization of samples $X$ and $Y$ is shown in the left of Fig. \ref{fig:test6-sphere},
and the formula of densities are given in Appendix \ref{app:detail-exp-2d}.
Fig. \ref{fig:test6-sphere} plots the test power of the 5 methods over increasing density departure $\delta$ and sample size.
It can been seen that {\it net-logit} gives the fastest growth of power as $\delta$ increases and the 
strongest average power for all $n_{\text{all}}$,
but the variation can be large if the power is not close to 1.
The {\it gmmd-ad} improves the power of  {\it gmmd},
but does not do as well as {\it net-acc}, 
which again performs inferior to {\it net-logit}. 
{\it net-logit} performs better than {\it gmmd++} (post-selecting $\sigma$, green dash)
and the advantage is more evident when $n_{\text{all}} > 200$.  
This indicates that larger sample size can be particularly in favor of network-based tests, 
which rely on the search in the network parameter space optimized on a separated training set.

\subsection{Generated vs authentic MNIST data}\label{subsec:exp-mnist}

\begin{figure}[t]
\begin{center}
\begin{tabular}{ c c c  }
\hspace{-0.25cm}
	\multirow{2}*{
	\hspace{-0.5cm} 
	 \stackunder[2pt]{
	\includegraphics[width=.35\textwidth, trim = {5.5cm 14.5cm 15cm 1cm}, clip]{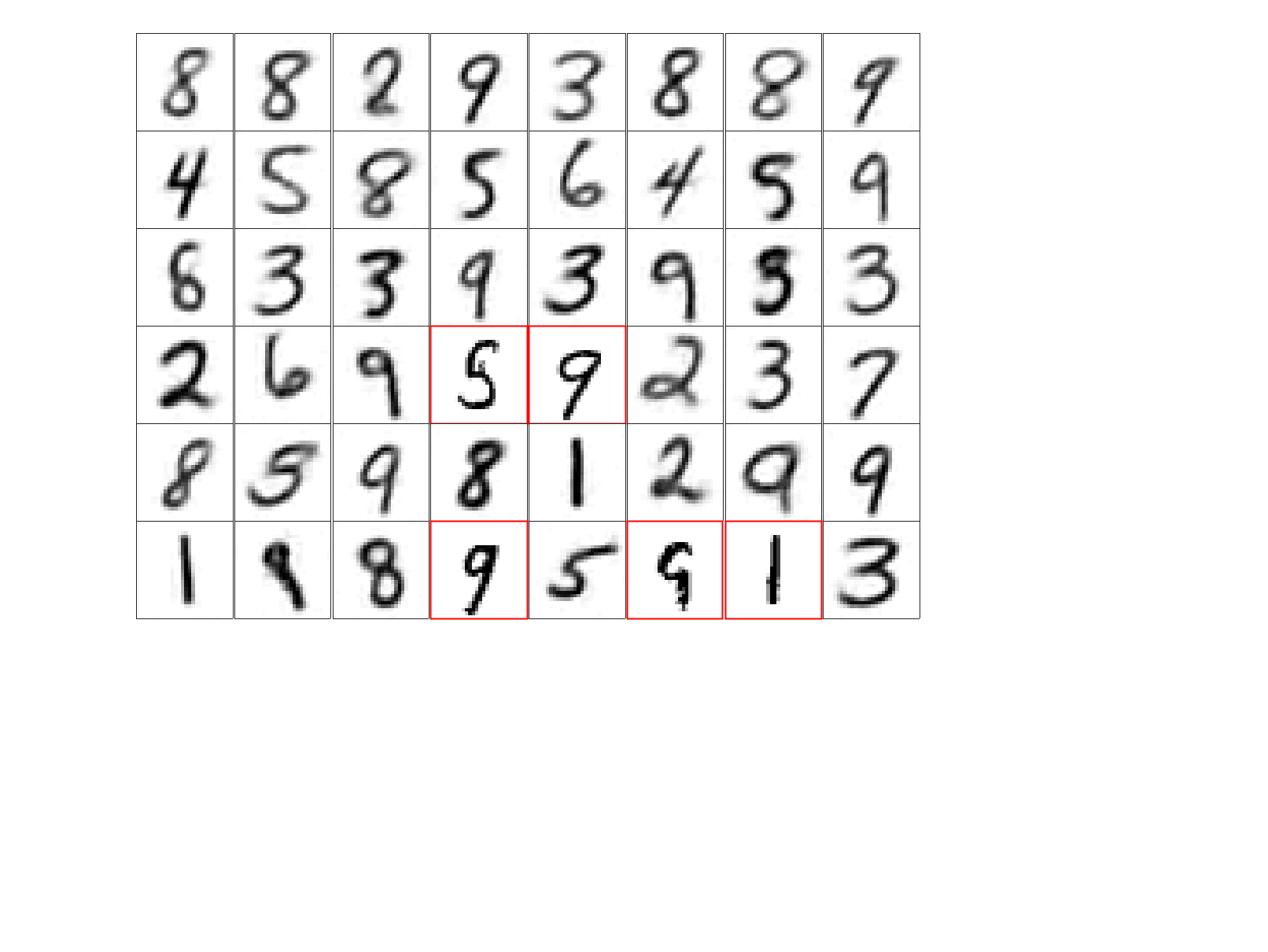}
	}{Detected by {\it gmmd-ad}}
	}
	&
	\multirow{2}*{
	\hspace{-0.75cm}
	 \stackunder[2pt]{
	\includegraphics[width=.35\textwidth, trim = {5.5cm 14.5cm 15cm 1cm}, clip]{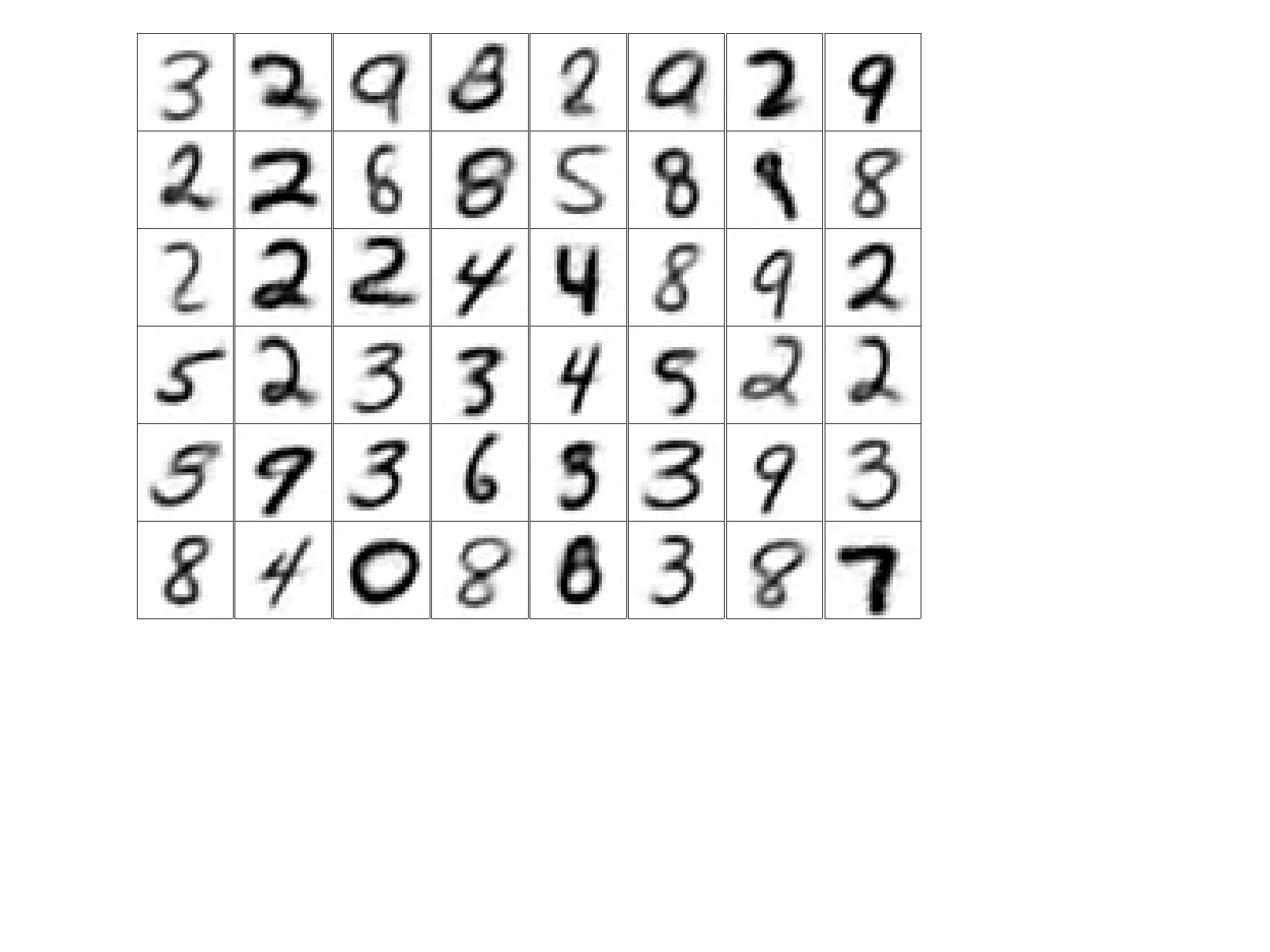}
	}{Detected by {\it net-logit}}
	}
	&
	\hspace{-0.75cm}
	 \raisebox{-2cm}{
		\includegraphics[width=0.3\textwidth]{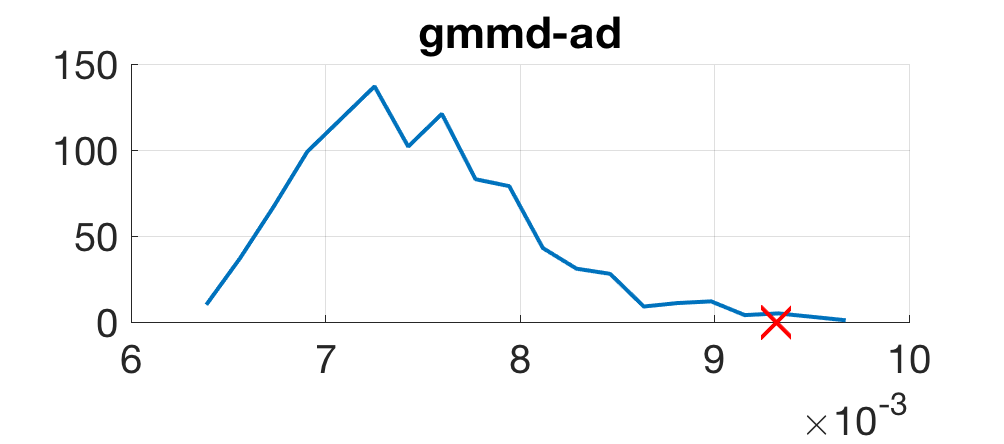} 
		}  
	\\
     & & 	
     \hspace{-0.5cm}
	 \raisebox{-0.5cm}{
     	\includegraphics[width=0.3\textwidth]{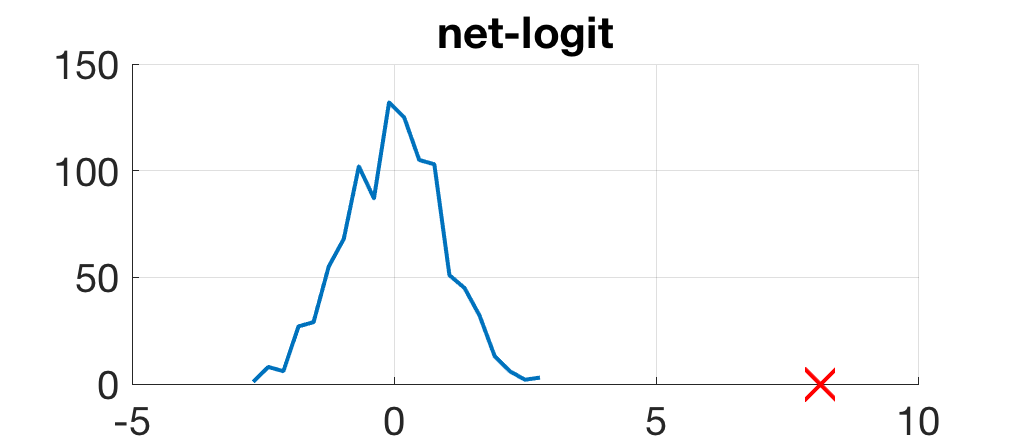}
	}
\end{tabular}
\end{center}
\vspace{-5pt}
\caption{
\small
Two-sample problem of differentiating 
$p$, the density of authentic MNIST digits, 
and $q$ which contains a $\delta=0.4$ fraction of 
digits ``faked''  by a generative model.
$|X|=|Y|=500$. 
The {\it gmmd-ad} and {\it net-logit} tests use half as training set,
and test on the other $|{\cal D}_{\text{te}}|=500$ samples.
Left and middle: 
the most likely fake digits identified by the empirical witness functions of the two tests,
red box indicates authentic digits incorrectly identified. 
Right: 
The test statistic $\hat{T}$ (${\cal H}_1$) and the histogram of its value under 1000 permutation tests (${\cal H}_0$).
}\label{fig:mnist-catch-fake}
\vspace{-20pt}
\end{figure}

\begin{figure}[t]
\centering
\includegraphics[height=0.16\textwidth]{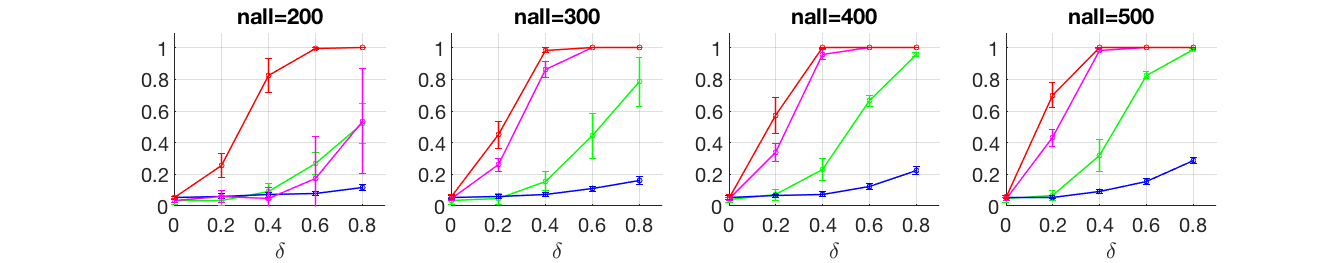}
\vspace{-10pt}
\caption{
\small
Test power of 
{\it gmmd} (blue)
 {\it gmmd-ad} (green)
{\it net-acc} (pink)
{\it net-logit} (red)
 on differentiating authentic vs synthesized MNIST digits produced by a generative model,
 where sample $X$ has all authentic ones,
and  $\delta$ stands for the fraction of synthesized ones in $Y$,
$n_{all} =|X| + |Y|$ including half-half training-testing split. 
}\label{fig:mnist-power}
\vspace{-10pt}
\end{figure}

As a real-world data example,
we study the task of distinguishing ``faked" MNIST samples produced by a pre-trained generative network
from authentic ones. 
The MNIST dataset consists of gray-scale hand-written digits of size $28 \times 28$ falling into 10 classes,
which is relatively simple and thus is viewed to lie near to low-dimensional manifolds in the ambient space of $\R^{784}$. 
More details about the generative and classification networks in Appendix \ref{app:mnist}. 
We compare 
(1) {\it net-acc}
(2) {\it net-logit}
(3) {\it gmmd} 
(4) {\it gmmd-ad}
on two samples $X$ and $Y$,  
half of ${\cal D}=X\cup Y$ used for training.
$X$ consists of authentic MNIST samples, and $Y$ of a mixture of authentic and faked ones,
i.e.  $p = p_{\text{data}}$ and $q = (1-\delta) p_{\text{data}} + \delta p_{\text{model}} $, $\delta \in [0,1]$.
The test power for increasing $\delta$ and sample size $n_{all} = |{\cal D}|$ up to 500 is shown in  Fig. \ref{fig:mnist-power},
where {\it net-logit} gives the strongest power throughout all cases,
and the two network-based tests significantly outperforms the other two
when $n_{all} \ge 300$.
The adaptive choice of kernel bandwidth also improves the power over the median-distance choice.
The standard deviation of the {\it net-acc} and {\it net-logit} power is less than that of {\it gmmd-ad} power when $n_{\text{all}}=300$ and $\delta \ge 0.4$, 
when the former two give near to 1 power. 
We also observe that the training of the CNN classifier in this experiment 
is  more stable than that of the previous fully-connected network on low-dimensional synthetic data,
as revealed in the training error evolution plots, c.f. Fig. \ref{fig:training-detail-synthetic} Fig. \ref{fig:training-detail-mnist}. 
With another pre-trained model which generates faked images that are closer to authentic ones,
 {\it net-logit} again shows the best discriminative power,
 {\it net-acc} gives comparable performance starting $n_{\text{all}} = 300$,
while {\it gmmd} and {\it gmmd-ad} gives trivial power up to $n_{all} = 500$,
c.f. Fig. \ref{fig:mnist-power-dim128}.

Setting $n_{\text{all}} = 1000$, $\delta = 0.4$,
the results of {\it gmmd-ad} and {\it net-logit}
in one test run is shown in Fig. \ref{fig:mnist-catch-fake}.
Based on the $n_{\text{all}} = 500$ plot in Fig. \ref{fig:mnist-power},
both tests shall have non-trivial power, and that of  {\it net-logit} shall be close to 1.
In this test,
both methods correctly rejects ${\cal H}_0$,
yet the {\it net-logit} statistic deviates from the distribution of $\hat{T}| {\cal H}_0$ more significantly,
indicating stronger power (shown in the histogram plots).
To compare the detecting ability of the empirical witness function $\hat{w}$ of  {\it gmmd-ad} and {\it net-logit},
for each method,
we sort the 250 samples in $Y_{\text{te}}$ (among which 100 are faked ones)
in ascending order of the value of $\hat{w}$ and select the first 100 samples.
These are samples which the model views as most likely to be faked ones.
The success rate of identifying faked samples is $\sim 60$ by  {\it gmmd-ad} $\hat{w}$,
and $\sim 90$  by  {\it net-logit} $\hat{w}$.
The first 48 most likely faked digits identified with both  witness functions are plotted in Fig. \ref{fig:mnist-catch-fake},
where {\it gmmd-ad} $\hat{w}$ incorrectly includes 5 authentic samples,
and none by {\it net-logit} $\hat{w}$.

\section{Proofs}\label{sec:main-proofs}

\subsection{Proofs in Section \ref{sec:theory-approx}}

\subsubsection{Proofs in Section \ref{subsec:setup-manifold}}

\begin{proof}[Proof of Lemma \ref{lemma:manifold-in-tube}]
For any $x \in U_i = B(x_i, \delta)\cap {\cal M}$,
we have that (note that $\phi_i(x_i) = x_i$)
\[
\| x - x_i\|
\le d_{\cal M}(x, x_i)
\le \beta_i \| \phi_i(x) - x_i\|,
\]
where the first inequality is by that geodesic distance is always larger than the Euclidean distance,
and the second inequality is by \eqref{eq:equiv-norm-Ui}.
This gives that 
\begin{equation}\label{eq:bound1-lemmalip-etai}
\| \phi_i(x) - x_i\| \ge \frac{1}{\beta_i} \| x - x_i\|.
\end{equation}
Meanwhile, as $\phi_i$ is an orthogonal projection, we have that
$
\| x - x_i\|^2  =
\| \phi_i(x) - x_i \|^2 + 
\| x- \phi_i(x)\|^2$,
and then with \eqref{eq:bound1-lemmalip-etai} it gives that
\[
\| x- \phi_i(x)\| \le  \| x - x_i\| \sqrt{1-1/{\beta_i^2} } 
<\delta \sqrt{1-1/{\beta_i^2} },
\]
which is further bounded by $\frac{\sqrt{3}}{2} \delta$ by $\beta_i \le 2 $ as in \eqref{eq:global-alpha-beta}.
\end{proof}

\begin{proof}[Proof of Theorem \ref{thm:mainthm-shaham2018provable}]
Under the setup of manifold atlas $\{ (U_i, \phi_i) \}_{i=1}^K$, 
the network function is
\begin{equation}\label{eq:def-fN-1}
f_N(x) = \sum_{i=1}^K \bar{f}_{N, i}(x), \quad x \in \R^D,
\end{equation}
where each $\bar{f}_{N, i}$ is constructed near $U_i$
in the following way.
For each $i$, we work with the local coordinates 
$x=(u,v)$, $u:=\phi_i(x) \in \R^d$,
and $v \in \R^{D-d}$, and we use the bar to denote that the function is defined in $\R^D$.
We utilize a modified construction of 
 Eqn. (41) (45) (46) in \cite{shaham2018provable}, which are the following 
 
\begin{align}
\hat{f}_{ i}(u)
& := \sum_{(k,b)} c_{k,b} {\psi}_{k,b}(u), 
\quad k = 0, 1, 2, \cdots, \quad  b \in 2^{-\frac{k}{d}} \Z_d,
\label{eq:fNi-expansion}
\\
{\psi}_{k,b}(u) 
& := 2^{\frac{k}{2}}\left( {\varphi}_{k,b}(u) - 2^{-1}{\varphi}_{k-1,b}(u) \right), 
\label{eq:def-barpsikb}
\\
{\varphi}_{k,b}(u) 
& := c_d \text{Relu}\left( \sum_{j=1}^d t( 2^{\frac{k}{d}}(u_j -b_j)  )  -2(d-1) 
\right),
\label{eq:def-barphikb}
\end{align}
where
$c_d$ is a constant depending on $d$, c.f. Eqn. (19) (20) in \cite{shaham2018provable};
$t: \R \to \R$ is a trapezoid-shaped continous function, $t(x) =2$ for $-1< x < 1$, 0 for $|x| > 3$, and linearly interpolating in between,
and $t$ can be written as the sum of four Relu activations in 1D,
c.f. Eqn. (18) in  \cite{shaham2018provable}.
The finite-term summation $\hat{f}_{N,i}$
will be a truncation $k\le k_{max}$ of $\hat{f}_i$,
and the network function
\begin{align}
\bar{f}_{N,i}(x) 
& := \text{Relu}( \text{Relu}(\hat{f}_{N,i}(u)) + F_0 g_\delta (v) - F_0) 
\nonumber \\
& ~~~ - \text{Relu}( \text{Relu}(-\hat{f}_{N,i}(u)) + F_0 g_\delta (v) - F_0), 
\quad F_0 := \| f\|_{L^\infty({\cal M})}  + 1,
\label{eq:def-fNi-new}
\end{align}
where $g_\delta: \R^{D} \to \R$ will be constructed such that, 
$C_1$ and $C_2$ being absolute constants,
\begin{itemize}
\item[(1)] $g_\delta$ is continuous on $\R^{D}$,
supp($g_\delta$) $\subset B_\delta^{D}$, $0 \le g_\delta(v) \le 1$ and $g_\delta(v) = 1$ if $\|v\| \le \frac{\sqrt{3}}{2}\delta$.

\item[(2)] $g_\delta$ is piece-wise differentiable on $\R^{D}$, and $\| \nabla g_\delta \| \le  \frac{C_1}{\delta}$ when differentiable. 

\item[(3)] $g_\delta(v)$ can be represented by a network with $\le C_2 D \frac{\log D + \log 1/\delta}{\delta}$ many parameters.
\end{itemize}
Note that while $v = x - \phi_i(x)$ lies in $(T_{x_i}({\cal M}))^{\perp}$ which is $(D-d)$-dimensional space, 
in practice our network $g_\delta$ takes the coordinates of $v$ in $\R^D$ (to avoid $D$-by-$D$ dense connection in the change of coordinate layer), thus $g_\delta$ is constructed to be a mapping from $\R^D$ to $\R$ with the properties (1)-(3).

\begin{lemma}\label{lemma:construct-g-delta}
Given $ 0 < \delta \le 1$, 
a function $g_\delta: \R^{D} \to \R$ that satisfies (1)(2)(3) can be constructed.
\end{lemma}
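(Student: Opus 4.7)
The plan is to define $g_\delta$ as a composition $g_\delta(v) := G(\tilde r(v))$, where $\tilde r$ is a ReLU-network approximation of the squared normalized Euclidean norm $\|v\|_2^2/\delta^2$, and $G : \R \to [0,1]$ is the one-dimensional tent that equals $1$ on $(-\infty, 3/4 + \alpha]$, equals $0$ on $[1-\alpha, \infty)$, and is linear in between, for a small absolute slack $\alpha \in (0, 1/16)$. Concretely, I would take
\[
\tilde r(v) := \sum_{j=1}^{D} \phi(c_j(v)),
\quad
c_j(v) := \mathrm{clip}(v_j/\delta, -1, 1),
\]
where each $c_j$ is a constant-size ReLU subnetwork that saturates $|v_j| > \delta$ to $\pm 1$, and $\phi$ is the Yarotsky-type continuous piecewise-linear approximation of $u \mapsto u^2$ on $[-1,1]$ with uniform error $\eta$ and break spacing $h = 2^{-m}$, $m = O(\log(1/\eta))$, implemented as a ReLU network of depth $O(m)$ and width $O(1)$. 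If needed, I replace $\phi$ by $\mathrm{ReLU}(\phi)$ to enforce non-negativity.

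For the shape conditions (1) and the parameter count (3), I would choose $\eta$ of order $1/D$ so the summed approximation error $|\tilde r(v) - \|v\|_2^2/\delta^2|$ is at most $D\eta \le \alpha/2$ whenever every $|v_j| \le \delta$. Then (i) if $\|v\|_2 \le \tfrac{\sqrt{3}}{2}\delta$, no coordinate is clipped and $\tilde r(v) \le 3/4 + \alpha/2$, so $G(\tilde r(v)) = 1$; (ii) if $\|v\|_2 > \delta$ and no coordinate is clipped, then $\tilde r(v) \ge 1 - \alpha/2$ so $G = 0$; (iii) if some $|v_{j_0}| > \delta$, the clipped term alone contributes $\phi(\pm 1) \ge 1 - \eta$ and all other terms are non-negative, so again $\tilde r(v) \ge 1 - \alpha$ and $G = 0$. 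Continuity and piecewise differentiability are inherited from the ReLU composition. The parameter count is $O(D)$ for the clipping, $D \cdot O(\log(1/\eta)) = O(D \log D)$ for the $\phi$'s, $O(D)$ for the summation and $O(1)$ for $G$, totaling $O(D\log D)$, which for $\delta \le 1$ sits under $C_2 D (\log D + \log(1/\delta))/\delta$ as required.

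The step I expect to be the main obstacle is the gradient bound (2), since $C_1$ must be independent of the ambient dimension $D$. A naive coordinate-wise Lipschitz estimate on $\phi$ only yields $\|\nabla \tilde r\| = O(\sqrt{D}/\delta)$, because each coordinate can contribute up to $O(1/\delta)$ in magnitude. The refinement that fixes this comes directly from the structure of Yarotsky's sawtooth construction: on every linear piece $\phi'(u) = 2u + \xi$ with $|\xi| \le h$, so for unclipped $v$,
\[
\|\nabla \tilde r(v)\|^2 \,=\, \frac{1}{\delta^2}\sum_{j=1}^{D}\phi'(v_j/\delta)^2 \,\le\, \frac{1}{\delta^2}\left(\frac{4\|v\|_2^2}{\delta^2} + \frac{4 \|v\|_1 h}{\delta} + 4Dh^2\right).
\]
In the transition region $\tilde r(v) \in [3/4+\alpha, 1-\alpha]$ the previous step already forces $\|v\|_2 = O(\delta)$, hence $\|v\|_1 \le \sqrt{D}\|v\|_2 = O(\sqrt{D}\,\delta)$; choosing $h \le 1/\sqrt{D}$ (equivalently $\eta = h^2 \le 1/D$, which coincides with the choice made above for the approximation error) collapses the error terms to $O(1/\delta^2)$ and yields $\|\nabla \tilde r\| = O(1/\delta)$. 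Combined with the universal slope bound $|G'| \le 1/(1/4 - 2\alpha) = O(1)$, this gives $\|\nabla g_\delta\| \le C_1/\delta$ with $C_1$ an absolute constant; outside the transition region either $g_\delta$ is locally constant or the clipping zeros the contribution of the saturated coordinate, so the bound persists.
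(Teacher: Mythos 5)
Your construction is correct: all three properties (1)--(3) go through, and the crucial point --- that the per-coordinate slope of the square-approximant is $\approx 2u$ plus an error that is made $O(1/\sqrt{D})$ so that the $D$-fold sum stays dimension-free after using $\|v\|\le\delta$ on the transition band --- is exactly the mechanism the paper relies on. The outer structure is the same as the paper's: there $g_\delta(v)=t_\delta\bigl(\sum_{j} y(v_j)\bigr)$ with $t_\delta$ a one-dimensional ramp (your $G$, at thresholds $0.8\delta^2$ and $\delta^2$ rather than $3/4+\alpha$ and $1-\alpha$) and $y$ a piecewise-linear over-approximation of $x^2$. The difference is the 1D building block: the paper builds $y$ as a \emph{shallow} (single hidden layer, $2(L+1)$ ReLUs) interpolant of $x^2$ with geometrically spaced breakpoints $x_l=(1+2r)x_{l-1}$, $x_0=\delta/(20\sqrt{m})$, saturated at the constant $x_L^2>1$ beyond $x_L$ (this saturation plays the role of your coordinate clipping), and its slope bound $|y'(x)|\le 2.1|x|$ is the analogue of your $\phi'(u)=2u+O(h)$; the paper's fine scale $x_0\sim\delta/\sqrt{D}$ plays precisely the role of your choice $h\le 1/\sqrt{D}$ (equivalently $\eta\le 1/D$). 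What your Yarotsky-based version buys is a smaller parameter count, $O(D\log D)$ versus the paper's $O\bigl(D(\log D+\log(1/\delta))/\delta\bigr)$, by trading width for depth. What it costs is depth $O(\log D)$ per coordinate, whereas the paper's $g_\delta$ is a two-layer block; the lemma statement only counts parameters, so your proof is valid as stated, but Theorem \ref{thm:mainthm-shaham2018provable} advertises a four-layer network overall, and the paper's shallow construction is what makes that depth claim possible --- worth flagging if you were to substitute your block into the full construction. One small simplification: the $\mathrm{ReLU}(\phi)$ wrapper is unnecessary, since the piecewise-linear interpolant of the convex function $u^2$ already dominates $u^2\ge 0$.
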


The approximation construction
proceeds by setting the coefficients $c_{k,b} = \langle f\eta_i, \tilde{\psi}_{k,b} \rangle$ in \eqref{eq:fNi-expansion},
where $\eta_i$ is the partition of unity function on $U_i$,
the function $ f\eta_i$ is viewed as a function on $\R^d$ (due to one-to-one correspondence between $U_i$ and $\phi_i(U_i)$)
which is supported on $\phi_i(U_i)$,
and the inner product is taking on $\R^d$.
The function $\tilde{\psi}_{k,b} $ is the dual of basis $\psi_{k,b}$,
which is compactly supported on $\R^d$.
Concerning the convergence of the infinite summation,
first observe that
while $b$ is on an infinite grid in $\R^d$ as in \eqref{eq:fNi-expansion}, 
since the functions $\varphi_{k,b}(u)$ are compactly supported, 
only finitely many $b$ are involved in the summation for each $k$.
Specifically, the $2^{-\frac{k}{d}}$ spacing of $b_j$ in each of the $d$ direction 
matches the wavelet basis spacial rescaling $2^{\frac{k}{d}}$ in \eqref{eq:def-barphikb},
then one can verify that (c.f. Remark C.5 in \cite{shaham2018provable})

\begin{itemize}
\item[{ \bf (P1)}]
In \eqref{eq:fNi-expansion}, at each $u \in \R^d$, for each $k$, at most $12^d$ many $b$'s are involved in the summation. 
\end{itemize}

Another important property is the decaying of the wavelet coefficients $c_{k,b}$ as $k$ increases due to the $C^2$ regularity of the function 
$f\eta_i$ on $\R^d$, which is Proposition C.4 in \cite{shaham2018provable}:

\begin{itemize}
\item[{ \bf (P2)}]
For any $b$ and $k = 0,1,2,\cdots$, $|c_{k,b}| \le C_{d,f, {\eta_i}} 2^{- ( \frac{2k}{d} + \frac{k}{2}) }$,
the constant depending on the 2nd derivative of $f\eta_i$ as a function on $\R^d$ and the dimension $d$. 
\end{itemize}

For any $\epsilon < 1$, let $\hat{f}_{N,i}(u)$ be the $k \le k_{max}$ truncation of \eqref{eq:fNi-expansion},
and $k_{max}$ large enough such that
\begin{equation}\label{eq:truncate-by-epsilon}
\sum_{k > k_{max}} 12^d \cdot C_{d,f, {\eta_i}} 2^{- ( \frac{2k}{d} + \frac{k}{2}) }  4 c_d 2^{k/2} < \epsilon,
\end{equation}
which is possible due to the summability of $2^{-2k/d}$.
By (P1), (P2), this proves that
\begin{equation}\label{eq:uniform-approx-hatfNi}
| \hat{f}_{N,i}(u) - f\eta_i(u) | < \epsilon, \quad \forall u \in \R^d \text{  in the local coordinate on $T_{x_i}({\cal M})$}.
\end{equation}
We now claim that, with the definition \eqref{eq:def-fNi-new},
\begin{equation}\label{eq:uniform-approx-barfNi}
|\bar{f}_{N,i}(x) - f\eta_i(x)| < \epsilon, \quad \forall x \in {\cal M} \text{ even not in $U_i$},
\end{equation}
and to verify this, consider three cases respectively,

(i) $x=(u,v) \in U_i$. By Property (1) of $g_\delta$ and Lemma \ref{lemma:manifold-in-tube}, $g_\delta(v)=1$.
Then \eqref{eq:def-fNi-new} becomes 
\[
\bar{f}_{N,i}(x)  
=\text{Relu}( \text{Relu}(\hat{f}_{N,i}(u)) ) 
- \text{Relu}( \text{Relu}(-\hat{f}_{N,i}(u)) ) = \hat{f}_{N,i}(u),
\]
and then \eqref{eq:uniform-approx-barfNi} follows by \eqref{eq:uniform-approx-hatfNi}.

(ii)  $x \notin U_i$, but $\phi_i(x) \in \phi_i(U_i)$. 
This only happens if $\| x - \phi_i(x)\| > \delta$. 
(Otherwise, since $\| \phi_i(x) - x_i \| < \delta$, 
then $\| x - x_i\| < 2\delta$. 
This means that $x \in B_{2\delta}^D(x_i) $,
while $B_{2\delta}^D(x_i) \cap {\cal M}$ is isomorphic to ball in $\R^d$
by construction, c.f. beginning of Section \ref{subsec:setup-manifold},
thus $\phi_i(x) \in \phi_i(U_i)$ only when $x \in U_i$, drawing a contradiction.)
Thus in the local coordinates $x=(u,v)$, $u = \phi_i(x)$, $\|v\| > \delta$,
and then by Property (1) of $g_\delta$, $g_\delta(v) = 0$.
Note that while $f\eta_i(x) =0$,
$f\eta_i(u) $ may not be zero due to that $u = \phi_i(x) \in \phi_i(U_i)$. However, $|f\eta_i(u)| \le |f(u)| \le \|f\|_{L^\infty({\cal M})}$,
and then \eqref{eq:uniform-approx-hatfNi} gives that
$|\hat{f}_{N,i}(u)| < \epsilon + |f\eta_i(u)| < 1+ \|f\|_{L^\infty({\cal M})} = F_0$.
Inserting into \eqref{eq:def-fNi-new} gives that \[
\bar{f}_{N,i}(x)  = \text{Relu}( \text{Relu}(\hat{f}_{N,i}(u))  - F_0) 
 - \text{Relu}( \text{Relu}(-\hat{f}_{N,i}(u))  - F_0) = 0, 
\] 
thus  \eqref{eq:uniform-approx-barfNi} holds.

(iii) $x \notin U_i$, and $\phi_i(x) \notin \phi_i(U_i)$. 
Again $f\eta_i(x) = 0$.
Since $f\eta_i(u)$ vanishes outside $\phi_i(U_i)$, $f\eta_i(u) =0$.
By \eqref{eq:uniform-approx-hatfNi}, $| \hat{f}_{N,i}(u)| < \epsilon$. 
Note that $g_\delta(v)$ may not be zero, but remains between $0$ and $1$.
Note that $\text{Relu}(\hat{f}_{N,i}(u)) = (\hat{f}_{N,i}(u))^+ \in [0, \varepsilon)$,
and then 
\begin{equation}\label{eq:barfNi+}
\bar{f}_{N,i}(x)^+ :=\text{Relu}( (\hat{f}_{N,i}(u))^+   +F_0 g_\delta(v) - F_0) \le  (\hat{f}_{N,i}(u))^+ < \epsilon .
\end{equation}
Similarly, 
\begin{equation}\label{eq:barfNi-}
\bar{f}_{N,i}(x)^- :=
\text{Relu}( (\hat{f}_{N,i}(u))^-   +F_0 g_\delta(v) - F_0) \le (\hat{f}_{N,i}(u))^- < \epsilon.
\end{equation}
For each $u$, only one of $(\hat{f}_{N,i}(u))^{\pm}$ is non zero, 
and when $(\hat{f}_{N,i}(u))^{+}=0$ then so is $\bar{f}_{N,i}(x)^+ $, same for minus. 
Thus $\bar{f}_{N,i}(x) = \bar{f}_{N,i}(x)^+ - \bar{f}_{N,i}(x)^- $
satisfies that  $|\bar{f}_{N,i}(x)| < \epsilon$, which proves  \eqref{eq:uniform-approx-barfNi}.

Given  \eqref{eq:uniform-approx-barfNi}, back to \eqref{eq:def-fN-1}, and by that $\sum_{i=1}^K \eta_i = 1$ on ${\cal M}$, 
for any $x \in {\cal M}$,
\[
| f_N(x) - f(x)| \le \sum_{i=1}^K | \bar{f}_{N,i}(x) -  f\eta_i(x) | < \epsilon K.
\]
Setting $\epsilon := \frac{\varepsilon}{K}$ to begin with, 
which determines $k_{max}$ in \eqref{eq:truncate-by-epsilon},
finishes the approximation part of the Theorem, 
and to verify the claimed number of neural network parameters,
we use big $O$ to denote multiplying an absolute constant here:

\begin{itemize}
\item
The first layer which conducts change of coordinate of $x \in \R^D$ to local coordinates around  $U_i$, for each $i$,
 takes $O(dD)$ parameters. 
 Because $ u \in \R^d$ is determined by the projection $\phi_i$, which is a $D$-to-$d$ linear transform, of $x_c:=(x-x_i)$, and $v = x_c - \phi_i (x_c)$ can be computed in another layer which has $O(dD)$ weights.

 \item 
 On the branch sub-network for each $i$,
 the layer which produces $\bar{f}_{N,i}$ takes the input of $\hat{f}_{N,i}(u)$ and $g_\delta(v)$ and uses $O(1)$ weights.
In $\hat{f}_{N,i}(u)$,
the number of basis $\# \{(k,b)\} = \sum_{k=0}^{k_{max}} (2\delta)^d 2^{k} = O(\delta^d 2^{k_{max}})$,
because the diameter of $\phi_i(U_i) \le 2\delta$ due to $U_i \subset  B_\delta(x_i)^D$,
 thus a grid of $b$ on $(-\delta,\delta)^d$ suffices.
 $k_{max}$ is set in \eqref{eq:truncate-by-epsilon} which satisfies $2^{k_{max}} \le (\frac{ \varepsilon }{K C_{d,f,\eta_i}'} )^{-d/2}$,
 where $C_{d,f,\eta_i}'$ is a constant depending on $f\eta_i$, $d$ and atlas. 
 Let $C':= \max_{i=1}^K C_{d,f,\eta_i}'$, 
 and then $2^{k_{max}} \le \varepsilon^{-d/2} (K C')^{d/2}$.
 We use this $k_{max}$ for all atlas $i$ subnet, 
thus  $\# \{(k,b)\} \le O( \delta^d \varepsilon^{-d/2} (K C')^{d/2}) $.
The bottom layer which constructs $\varphi_{k,b}(u)$  takes $O(d \# \{(k,b)\})$ many weights,
and can be shared across $i$. 
The upper layer which linearly combines $\psi_{k,b}$ from $\varphi_{k,b}$'s to form $\hat{f}_{N,i}$ 
involves $i$-atlas specific coefficients $c_{k,b}$, and uses $O( \# \{(k,b)\})$ many weights.
In $g_\delta(v)$, the 2 layer network uses $O( \frac{D}{\delta} \log \frac{D}{\delta} )$ parameters
 by Property (3) of $g_\delta$, and are shared across all $i$.

\end{itemize}

Summing over the $K$ atlas sub-networks, the total number of parameters is, omitting absolute constant,
\begin{align}
& K  dD  +  (d + K)\delta^d \varepsilon^{-d/2} (K C')^{d/2} + \frac{D}{\delta} \log \frac{D}{\delta}
 = C_{f, {\cal M}}   \varepsilon^{-d/2} + N_0, 
 \\
&  C_{f, {\cal M}}  : = (d + K)\delta^d  (K C')^{d/2} ,
\quad 
N_0 := K  dD  + \frac{D}{\delta} \log \frac{D}{\delta}.
\end{align}

We now prove the Lipschitz constant part of the Theorem.
For fixed $i$,
we first bound $\text{Lip}_{\R^D}( {\bar{f}_{N,i}} )$. 
Working with local coordinates $x =(u,v)$, for any $x, x' \in \R^D$, 
\begin{align}
& |{\bar{f}_{N,i}} (x) - {\bar{f}_{N,i}} (x')|
\le | \bar{f}_{N,i}(x)^+  - \bar{f}_{N,i}(x')^+ | + | \bar{f}_{N,i}(x)^-  - \bar{f}_{N,i}(x')^- |
\nonumber \\
& \le 2( |\hat{f}_{N,i}(u) - \hat{f}_{N,i}(u')| + F_0 |g_\delta(v) -  g_\delta(v')|  ).
\label{eq:bound-Lip-fNi-1}
\end{align}
To bound  $\text{Lip}_{\R^d}( \hat{f}_{N,i} )$,
recall that
\[
\hat{f}_{N,i} (u) = \sum_{k=0}^{k_{max}} \sum_b c_{k,b} \psi_{k,b}(u),
\]
and in \eqref{eq:def-barpsikb}, \eqref{eq:def-barphikb},
by that $t$ is piece-wise differentiable on $\R$, the function ${\varphi}_{k,b}$ is piece-wise differentiable on $\R^d$,
and then so is ${\psi}_{k,b}$. We have that 
\begin{equation}\label{eq:bound-grad-phikb}
\| \nabla {\varphi}_{k,b}(u)  \|
\le 
|c_d| \|2^{k/d} \mathbbm{1}_d \|
=
|c_d| (  d\cdot 2^{2k/d} )^{1/2}
=: c_{d}' \cdot 2^{k/d},
\quad \forall k=0,1,2,\cdots,
\end{equation}
where $c_{d}'$ is a constant depending on $d$.
Then, 
\begin{equation}\label{eq:bound-nabla-barpsikb}
\| \nabla {\psi}_{k,b}(u) \| 
\le 
2^{\frac{k}{2}} \left( 
\| \nabla {\varphi}_{k,b}(u) \|  + 2^{-1} \| \nabla {\varphi}_{k-1,b}(u) \|  \right)
\le 
2^{\frac{k}{2}}  c_{d}'  ( 2^{\frac{k}{d}} +  2^{-1}  2^{\frac{k-1}{d}})
\le  \frac{3}{2} c_{d}' 2^{\frac{k}{2}}2^{\frac{k}{d}},
\end{equation}
and then, for any $u, u' \in \R^d$,
\begin{align}
& |\hat{f}_{N,i} (u) - {\hat{f}_{N,i}} (u')|
 = \left|
\sum_{k=0}^{k_{max}} \left(  \sum_{b \in {\cal B}_u} c_{k,b} {\psi}_{k,b}(u) -  \sum_{b \in {\cal B}_{u'}} c_{k,b} {\psi}_{k,b}(u')  \right) 
\right| \\
& ~~~
\text{ ($B_u$ denoting the set of $b$'s involved in the summation)}
\nonumber
\\
& \le
\sum_{k=0}^{k_{max}} \sum_{b \in {\cal B}_u \cup  {\cal B}_{u'}} |c_{k,b}| |{\psi}_{k,b}(u) - {\psi}_{k,b}(u')|
\\
& \le 
\sum_{k=0}^{k_{max}}  (2\cdot 12^d) C_{d,f, {\eta_i}} 2^{- ( \frac{2k}{d} + \frac{k}{2})} 
\cdot \frac{3}{2} c_{d}' 2^{\frac{k}{2}}2^{\frac{k}{d}} \| u - u'\|
\\
& ~~~
\text{
(by (P1), (P2), \eqref{eq:bound-nabla-barpsikb} and  piece-wise continuous differentiability of ${\psi}_{k,b}$ in $\R^d$)
}
\nonumber
\\
& =:  \|u - u'\| C_{f, {\cal M}} \sum_{k=0}^{k_{max}} 2^{-  \frac{k}{d} }  \\
& \le \|u - u'\| C_{f, {\cal M}} (1- 2^{-1/d})^{-1} 
=: \|u - u'\| C_{f, {\cal M}}',
\end{align}
where $C_{f, {\cal M}} $ is a constant depending on function $f$ and manifold-atlas, including $d$ and $\eta_i$, and so is $C_{f, {\cal M}} '$.

Meanwhile, Property (2) of $g_\delta$ gives that $\text{Lip}_{\R^{D-d}}(g_\delta) \le \frac{C_1}{\delta}$, $C_1$ being absolute constant.
Then \eqref{eq:bound-Lip-fNi-1} continues as below, $x=(u,v)$ being orthogonal decomposition,
\[
\le 2(  C_{f, {\cal M}}'  \|u - u'\| + F_0 \frac{C_1}{\delta} \|v - v'\|)
\le 2 ( C_{f, {\cal M}}' +  F_0 \frac{C_1}{\delta}) \sqrt{2} \| x - x'\|=: C_{f,{\cal M}}'' \| x - x'\|,
\]
and this proves that 
$\text{Lip}_{\R^D}( {\bar{f}_{N,i}} ) \le C_{f, {\cal M}}''$.
By \eqref{eq:def-fN-1},
\[
\text{Lip}_{\R^D}( f_{N} ) \le  K C_{f, {\cal M}}'' =: L_{{\cal M}, f}
\]
which is a constant depending on $f$ and manifold-atlas only.
\end{proof}

\begin{figure}
\centering{
 \raisebox{3pt}{
\includegraphics[trim={50pt 300pt 650pt 180pt},clip,height=.265\linewidth]{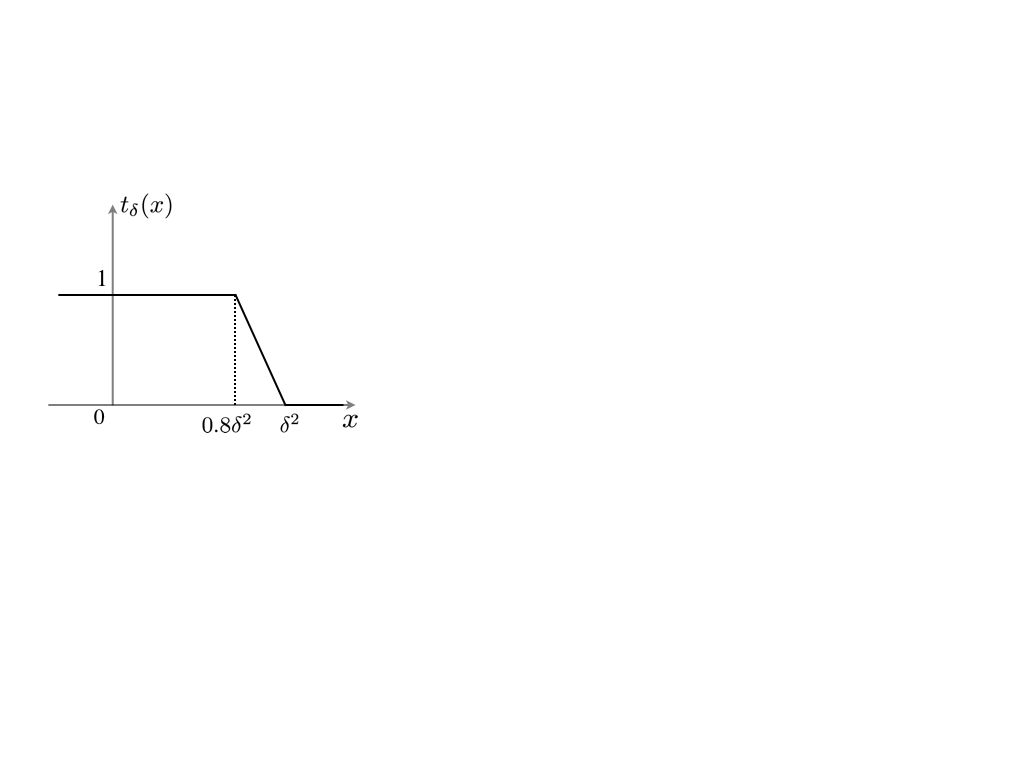}
}
\hspace{-10pt}
\includegraphics[trim={120pt 300pt 340pt 180pt},clip,height=.27\linewidth]{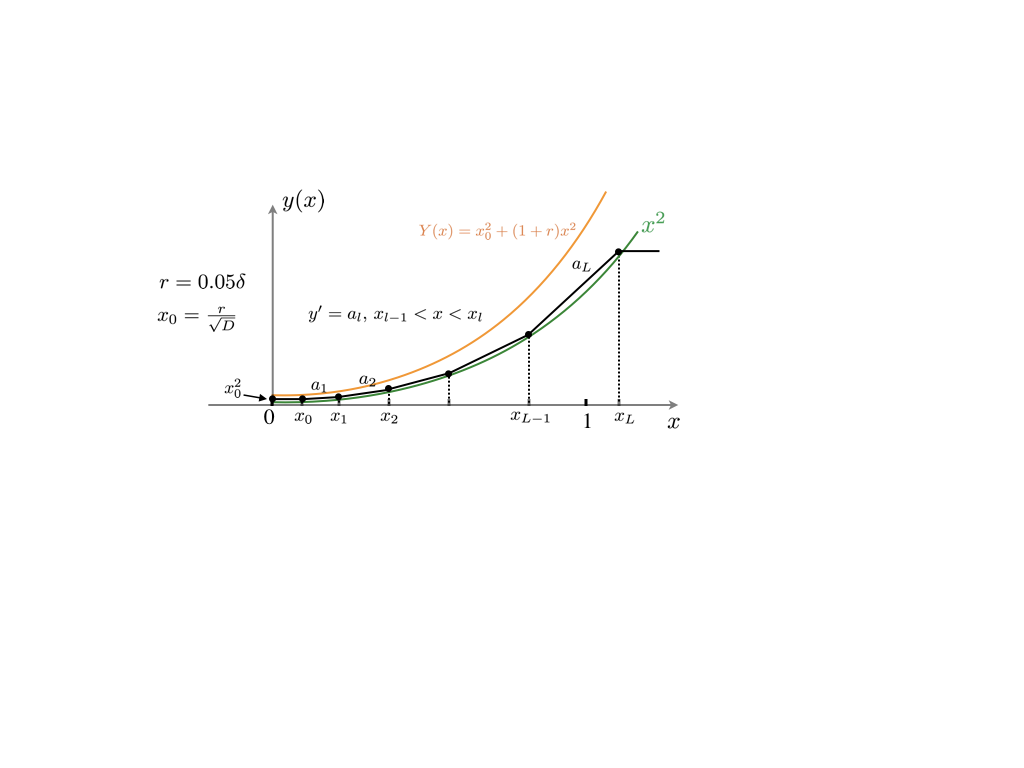}
}
\vspace{-25pt}
\caption{
The construction of $g_\delta: \R^{D} \to \R$.
(Left) $t_\delta$ as in \eqref{eq:def-g}.
(Right) $y(x)$ by $2(L+1)$ Relu's as in \eqref{eq:def-y-1} \eqref{eq:def-y-2},
only the $x > 0$ part is shown and $y(x)=y(-x)$.
}
\label{fig:diag-g}
\vspace{-10pt}
\end{figure}

\begin{proof}[Proof of Lemma \ref{lemma:construct-g-delta}]
Given $ 0 < \delta \le 1 $ fixed, we prove the construction
of $g=g_\delta :\R^m \to \R $ for any dimension $m$, and take $m=D$.
Let $g$ be in the form of
\begin{equation}\label{eq:def-g}
g(v) := t_\delta \left ( \sum_{j=1}^m y(v_j) \right),
\quad
t_\delta( x) := 1 -\text{Relu} \left( \frac{ x- 0.8 \delta^2}{0.2 \delta^2} \right) + 
			\text{Relu}\left( \frac{x-\delta^2}{0.2\delta^2} \right), \quad x \in \R,  
\end{equation}
and $y: \R \to \R$ will be an approximation of $y(x) \approx x^2$ for $|x| \le 1$.
$t_\delta( x )=1 $ when $ x < 0.8 \delta^2$, 0 when $ x > \delta^2$, and linearly interpolating in between, see Fig. \ref{fig:diag-g}.
To construct $y$, define $r: =  \frac{1}{20} \delta $, $ 0< r \le \frac{1}{20} $, 
define a sequence of points
 \begin{align}
& ~~~ x_0 := \frac{r}{\sqrt{m}}, \quad x_l := \rho x_{l-1}, \quad l=1,\cdots, L,  \\
& \rho=1+2r  > 1,
\quad \text{ $L$ is the smallest integer s.t. $x_L = x_0 \rho^L > 1$.}
\end{align}
and let $y(x)$ be a piece-linear function, $y(-x) = y(x)$, and $y(x) = x_0^2$ when $|x| \le x_0$,
$y(x_l) = x_l^2$, $l=1,\cdots, L$, and $y(x) = x_L^2$ for $x > x_L$, see Fig. \ref{fig:diag-g}.
Such $y(x)$ can be represented by $2(L+1)$ many Relu functions, specifically, 
\begin{align}
y(x) & := y_+(x) + y_+(-x) - x_0^2, 
\quad a_l := x_{l-1}+x_{l}, \, l=1, \cdots, L, \, a_{l+1 } > a_l,
\label{eq:def-y-1}
\\
 y_+(x) & = x_0^2 + a_1 \text{Relu}(x-x_0) + (a_2-a_1) \text{Relu}(x-x_1) + \cdots 
 \nonumber \\
 	& ~~~ + (a_L - a_{L-1}) \text{Relu}(x-x_{L-1}) - a_L \text{Relu}(x-x_{L}).
\label{eq:def-y-2}
\end{align}
Since $y(x_l) = x_l^2$ for all $0 \le l \le L$, and by convexity of $x^2$, we have that 

(p1) $y(x) \ge x^2$ whenever $|x| \le x_L$, and $y(x) = x_L^2 > 1$ when $|x| > x_L$.
   
We also claim that

(p2) $0 \le y(x) \le x_0^2 + (1+r)x^2 := Y(x)$, for all $x$.

(p3)  $y$ is piece-wise linear on $\R$, 
when $x \neq x_l$, $y'(x) $ exists, and $y'=0$ if $|x| < x_0 $ or $|x | > x_L$,
$|y'(x)| <  2.1 |x|$ if $ x_0 < |x| < x_L$.

To verify (p2) and (p3): By symmetry of $y$, only consider when $x \ge 0$. When $x \le x_0$, $y(x) = x_0^2 \le Y(x)$.
For $x \in [x_{l-1}, x_l]$, on the left end point $y(x_{l-1}) = x_{l-1}^2 \le (1+r)x_{l-1}^2 < Y(x_{l-1})$,
and $y'(x) = a_l$ on the interval, also $x_l = (1+2r)x_{l-1}$, then
\[
(Y-y)'(x) = 2(1+r)x - a_l = 2(1+r)x - (x_{l-1} + x_l) = 2(1+r)(x - x_{l-1}) \ge 0, \quad x_{l-1} \le x \le x_l,
\]
Thus $Y \ge y$ on $[x_{l-1}, x_l]$. When $|x| > x_L$, $Y(x) \ge (1+r)x^2 \ge x_L^2 = y(x)$. Thus (p2) holds.

The differentiability of $y$ is by construction, and when $ x_{l-1}< x < x_l $, $l=1,\cdots,L$,
$y'(x) = a_l = x_{l-1}+x_l = x_{l-1}2(1+r) < x 2(1+r) \le 2.1 x$, by that $r \le 0.05$. Thus (p3) holds.

We are ready to prove the properties (1)-(3) of $g$ defined as in \eqref{eq:def-g}:

(1) $y:\R \to \R$ is continuous on $\R$, and so is $t_\delta$, then $g$ is continuous on $\R^m$.
$t_\delta$ takes value between 0 and 1, and so is $g$.
For any $\| v \|_{\R^m} \ge \delta$, $\sum_{j=1}^m v_j^2 \ge \delta^2$,
by (p1) we have that $\sum_{j=1}^m y(v_j) \ge \delta^2$.
(If all $|v_j|\le x_L$, $y(v_j) \ge v_j^2$, then the sum $\ge \sum_{j} v_j^2 \ge \delta^2$.
If any one $|v_j| > x_L$,  $y(v_j) > 1 \ge \delta^2$, then so is the sum.)
Then by that $t_\delta(x) = 0$ if $x \ge \delta^2$, $g(v) = 0$.
This proves that $g$ vanishes outside $B_\delta^{m}$.

For any $\| v \|_{\R^m} \le \frac{\sqrt{3}}{2} \delta$, $\sum_{j=1}^m v_j^2 \le 0.75 \delta^2$,
then (p2) gives that 
\begin{align}
& \sum_{j=1}^m y(v_j) \le \sum_{j} ( x_0^2 +(1+r) v_j^2 ) = x_0^2 m + (1+r) \|v\|^2 
= r^2 + (1+r) \|v\|^2 \nonumber \\
& ~~~
\le  (0.05 \delta)^2 + 1.05 \cdot 0.75 \delta^2  < 0.8 \delta^2,
\quad \text{(by that $r = 0.05 \delta \le 0.05$)}
\end{align}
then by that $t_\delta(x)= 1$ when $x < 0.8 \delta^2$, $g(v) = 1$.

(2) $y(v_j)$ applies to each coordinate $v_j$ and $y$ is piece-wise linear on $\R$,
and $t_\delta$ is also piece-wise linear  on $\R$, thus $g$ is piece-wise linear on $\R^m$ and thus piece-wise differentiable. 
By chain rule,
$\partial_{v_j} g(v) =  t_\delta'( \sum_{j} y(v_j)) y'(v_j)$
and $t_\delta'$ is non-zero only if $\sum_{j} y(v_j) \in (0.8 \delta^2, \delta^2)$,
in which case $|t_\delta'( \sum_{j} y(v_j))| = \frac{1}{0.2\delta^2}$ and 
$|y'(v_j)| \le 2.1 |v_j|$ by (p3). 
(Because $y(v_j) \ge 0$ and $\sum_j y(v_j) \le \delta^2$, each $y(v_j) \le \delta^2 \le 1$, 
thus $|v_j| < x_L$, and $|y'(x)| \le 2.1 |x|$ as long as $|x| < x_L$).
Also note that as proved in (1),
one needs  $\| v\| < \delta $ to make $\sum_{j} y(v_j) < \delta^2$.
This means that $\nabla g(v) $, when existing,
 is non-zero only when $\|v\| < \delta$ and $\sum_{j} y(v_j) \in (0.8 \delta^2, \delta^2)$, and then
\[
|\partial_{v_j} g(v) | \le \frac{1}{0.2\delta^2} 2.1 |v_j|, \quad j=1, \cdots, m,
\]
which gives that 
\[
\| \nabla g(v) \|^2 \le (\frac{10.5}{\delta^2} )^2 \|v\|^2 \le (\frac{10.5}{\delta^2} )^2 \delta^2= \frac{(10.5)^2}{\delta^2}.
\]
This proves (2) with $C_1 = 10.5$.

(3) Representing $g$ as a two-layer neural network, 
the top layer $t_\delta$ has 2 neurons and each takes $m$ inputs, thus it has $O( m)$ weights, 
here we use big $O$ to denote multiplying an absolute constant and same below.
The bottom layer branches for the $m$ coordinates,
each branch is a sub-network $y(v_j)$ with one hidden layer of width $2(L+1)$ and a scalar input $v_j$,
and thus it has $O(L)$ weights, and all the $m$ branches has $O( mL)$ weights.
Thus the total number of parameters is $O ( m(1+L))$.
By definition,  $x_0 (1 + 2r)^{L-1} \le 1$, thus, by that $\log(1+2r) > r$ ($ 0 < r \le 0.05$),
\[
L \le 1 + \frac{ \log \sqrt{m} + \log \frac{1} {r}}{r} 
= 1 + \frac{ \frac{1}{2}\log m + \log \frac{20}{\delta}}{0.05 \delta},
\]
this proves that total number of network parameters $\le C m (2 + \frac{ \frac{1}{2}\log m + \log \frac{20}{\delta}}{0.05 \delta})$ for an absolute constant $C$, which is (3) with $C_2$ being an absolute constant.
\end{proof}

\subsection{Other proofs in Section \ref{sec:theory-approx}}

\begin{proof}[Proof of Theorem \ref{thm:ambient-intergral-approx}]

Since $f$ is Lipschitz on $\R^D$, $T$ is Lip-1 on $\R$,
the composed function $ T\circ f$ 
is Lipschitz on $\R^D$, 
and 
\[
\text{Lip}_{\R^D}( T\circ f ) \le  \text{Lip}_{\R^D}(f). 
\]
By that $I[f] = \int_{\R^D} p \cdot T\circ f$,
applying Proposition \ref{prop:integral-swap} gives that
\[
I[f] = \int_{\cal M} T\circ f (x) \tilde{p}(x) d_{\cal M}(x)  +r_1, 
\]
and
\begin{equation}\label{eq:bound-r1}
|r_1|
 \le   \left( \| T\circ f\|_{L^\infty({\cal M})} 
C_1
+ \text{Lip}_{\R^D}(f) 
C_2
 \right) 
c_1 \sigma,
\end{equation}
where $C_1:=3KL_{\cal M} $,
$C_2: = K(2 L_{\cal M} + 1 + \beta_{\cal M} )$
as in Proposition \ref{prop:integral-swap}.

Let $f_{\text{con}}$ be given by 
Theorem \ref{thm:mainthm-shaham2018provable}
to uniformly approximate $f$ on $\cal M$ up to $\varepsilon$,
$\text{Lip}_{\R^D}(f_{\text{con}}) \le L_{ {\cal M}, f} $.
Repeat the above argument on $f_{\text{con}}$ in place of $f$, we have that 
\[
I[f_{\text{con}}] = \int_{\cal M} T \circ f_{\text{con}} (x) \tilde{p}(x) d_{\cal M}(x)  +r_2, 
\]
where since $\|f_{con} -f \|_{L^\infty({\cal M})} \le \varepsilon $ and Lip($T$)$\le 1$,
$\| T\circ f_{\text{con}} \|_{L^\infty({\cal M})}  \le \| T\circ f \|_{L^\infty({\cal M})} + \varepsilon$,
then
\begin{equation}\label{eq:bound-r2}
|r_2|
 \le   \left(  (\| T\circ f \|_{L^\infty({\cal M})} + \varepsilon)
C_1
+ L_{ {\cal M}, f} 
C_2
 \right) 
c_1 \sigma.
\end{equation}

Comparing the integrals on the manifold,
\begin{align}
& \left|
\int_{\cal M} T\circ f (x) \tilde{p}(x) d_{\cal M}(x)  - \int_{\cal M} T\circ f_{\text{con}}  (x) \tilde{p}(x) d_{\cal M}(x) 
\right| 
\nonumber \\
& \le
\int_{\cal M} |T\circ f (x) - T\circ f_{\text{con}} (x)| \tilde{p}(x) d_{\cal M}(x)
\nonumber \\
& \le \int_{\cal M} | f (x) -  f_{\text{con}} (x)| \tilde{p}(x) d_{\cal M}(x)
\quad \text{(by that $T$ is Lip-1)} 
\nonumber \\
& \le \varepsilon \int_{\cal M}  \tilde{p}(x) d_{\cal M}(x)
\quad \text{(by uniform approximation on ${\cal M}$)} 
\nonumber \\
& \le \varepsilon (1+ 3KL_{\cal M}  c_1 \sigma).
\quad \text{(by \eqref{eq:tildep-almost-density})}
\label{eq:bound-r3}
\end{align}

Collecting \eqref{eq:bound-r1}, \eqref{eq:bound-r2}, \eqref{eq:bound-r3},
$|I[f] - I[f_{\text{con}}]|$ is bounded by the sum of the three,
which is
\[
 (1+ C_1 c_1 \sigma)   \varepsilon
 + \left\{
 C_1 ( \varepsilon + 2 \| T\circ f \|_{L^\infty({\cal M})} )
 +
 C_2 (  L_{ {\cal M}, f}  + \text{Lip}_{\R^D}( f )  )
 \right\}
 c_1\sigma,
\]
as stated in the Theorem.
\end{proof}

\begin{proof}[Proof of Lemma \ref{lemma:lip-etai}]
For a fixed $i$, 
by the definition \eqref{eq:def-tileta},
and that 
(i) $\eta_i$ vanishes outside $U_i$, and $\phi(N_i) = \phi(U_i)$,
and 
(ii) $h_i$ vanishes outside $B_\delta^{D-d}$,
we have that supp$(\tilde{\eta}_i) \subset N_i$. 
To see that $\tilde{\eta}_i|_{U_i} = {\eta}_i$,
it suffices to show that $h_i( x-\phi_i(x))=1$ on $U_i$,
which follows by the definition of $h_i$
and Lemma \ref{lemma:manifold-in-tube}.

Finally, we prove the Lipschitz continuity of $\tilde{\eta}_i$ on $\R^D$.
First, $\tilde{\eta}_i$ is continuous on $\R^D$.
This is because 
$\tilde{\eta}_i$ has the factorized definition as in \eqref{eq:def-tileta},
and that $\eta_i( \psi_i (u))$ as a function on $\R^d$ is smooth,
plus that $h_i(v)$ as a function on $\R^{D-d}$ is continuous,
thus the product function $\tilde{\eta}_i$  is continuous on $\R^D$. 
Next we prove the Lipschitz constant.
By the global continuity and that supp$(\tilde{\eta}_i) \subset N_i$, 
$
\text{Lip}_{\R^D} (\tilde{\eta}_i) =  \text{Lip}_{N_i}(\tilde{\eta}_i)$.
For the latter,
consider $x, x' \in N_i$, and  let $y := \psi_i \circ \phi_i(x) \in U_i$,
\[
\tilde{\eta}_i(x) = \eta_i(y) h_i( x - \phi_i(x) ), 
\]
similarly for $x'$, $y'$. 
Since $\eta_i$ is smooth on ${\cal M}$ and compactly supported on $U_i$, 
we assume that 
\[
| \eta_i(y_1)-\eta_i(y_2) |\le c_i d_{\cal M}(y_1, y_2), \quad \forall y_1, y_2 \in U_i,
\]
i.e., $c_i=\text{Lip}(\eta_i)$ w.r.t. manifold geometry, and $c_i$ is determined once the manifold-atlas is fixed.
Then
\begin{align*}
& |\tilde{\eta}_i(x) - \tilde{\eta}_i(x') | 
 \le  |{\eta}_i(y) -\eta_i( y')| |h_i(x - \phi_i(x))| + |\eta_i(y')| | h_i( x-\phi_i(x))- h_i(x'-\phi_i(x')) |  \\
& \le c_i d_{\cal M}(y, y')  +  | h_i( x-\phi_i(x))- h_i(x'-\phi_i(x')) |  
\quad \text{(by  that $\eta_i$ and $h_i$ are bounded by 1 )}   
\\
& 
\le  c_i \beta_{i} \| \phi_i(y) - \phi_i(y') \|_2  +  \frac{2\beta_i^2}{\delta} \|  (x-\phi_i(x)) - (x'-\phi_i(x')) \|
\quad \text{(by \eqref{eq:equiv-norm-Ui} and \eqref{eq:hx=1})}   
\\
&
\le c_i \beta_{i} \| x - x' \|  +  \frac{2\beta_i^2}{\delta}  \| x-x'\|,
\end{align*}
where the last row is by that $\phi_i(y) = \phi_i(x)$, similarly for $y'$, and $\phi_i$ is orthogonal projection.
This proves that 
$\text{Lip}( \tilde{\eta}_i ) \le c_i \beta_{i} +  \frac{2\beta_i^2}{\delta}$
which can be upper bounded by $2c_i + \frac{8}{\delta}$ by \eqref{eq:global-alpha-beta}.
Taking maximum over $i$ gives that $\sup_i \text{Lip}( \tilde{\eta}_i ) \le L_{ {\cal M}}$, which is an absolute content determined by the manifold and atlas. 
\end{proof}

\begin{proof}[Proof of Proposition \ref{prop:integral-swap}]
We need two technical lemmas, the proofs are elementary and in Appendix \ref{app:proofs}.

\begin{lemma}\label{lemma-c3}
For any $p \in {\cal P}_\sigma$ defined as in \eqref{eq:density-mannifold-exp-decay}, if $ \sigma < \frac{1}{2}$, then
\[
\int_{\R^D} d(x, {\cal M}) p(x) dx , 
\int_{\R^D} d(x, {\cal M})^2 p(x) dx 
< c_1 \sigma.
\]
The requirement of $\sigma < 1/2$ is only used to simply the bound to be $c_1\sigma$, and the integral of $d(x,{\cal M})^2$ actually gives a abound of $2c_1 \sigma^2$.
\end{lemma}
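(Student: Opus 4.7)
The plan is to apply the standard layer-cake (tail-integration) formula for the expectation of a nonnegative random variable, together with the exponential tail bound that defines the class $\mathcal{P}_\sigma$. Write $Z := d(X,\mathcal{M})$ where $X\sim p$, so that $Z \ge 0$ and by \eqref{eq:density-mannifold-exp-decay}, $\Pr[Z > t] \le c_1 e^{-t/\sigma}$ for all $t>0$.

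For the first integral, I would use $\int_{\R^D} d(x,\mathcal{M})\,p(x)\,dx = \mathbb{E}[Z] = \int_0^\infty \Pr[Z>t]\,dt$ and bound the integrand directly by $c_1 e^{-t/\sigma}$, yielding $\mathbb{E}[Z] \le \int_0^\infty c_1 e^{-t/\sigma}\,dt = c_1\sigma$. For the second integral, apply the same identity to $Z^2$ and change variables $t=u^2$:
\[
\mathbb{E}[Z^2] = \int_0^\infty \Pr[Z^2 > t]\,dt = \int_0^\infty 2u\,\Pr[Z>u]\,du \le 2c_1 \int_0^\infty u\,e^{-u/\sigma}\,du = 2c_1\sigma^2.
\]
Then the assumption $\sigma < \tfrac{1}{2}$ gives $2\sigma^2 < \sigma$, so $\mathbb{E}[Z^2] \le 2c_1\sigma^2 < c_1\sigma$, as claimed. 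The remark at the end of the lemma statement (that the actual bound on the second integral is $2c_1\sigma^2$, before invoking $\sigma<1/2$) matches precisely this intermediate step.

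There is no real obstacle here; the only minor issue is that the tail bound $c_1 e^{-t/\sigma}$ may exceed $1$ for small $t$, but since probabilities are at most $1$ this only makes the true tail smaller, so the chain of inequalities above remains valid without modification. The lemma is thus a direct computation from the definition of $\mathcal{P}_\sigma$ and the layer-cake formula, with no geometric input beyond the exponential concentration of $p$ around $\mathcal{M}$.
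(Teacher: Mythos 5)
Your proposal is correct and follows essentially the same route as the paper's proof: both use the tail-integration (layer-cake) identity together with the exponential tail bound defining ${\cal P}_\sigma$, obtaining $c_1\sigma$ for the first moment and $2c_1\sigma^2$ for the second, with $\sigma<\tfrac12$ only used to simplify the latter to $c_1\sigma$. No gaps to note.
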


\begin{lemma}\label{lemma:boundgx}
If $g :\R^D \to \R$ is globally Lipschitz continuous,
 then \[
| g(x) | 
\le 
\| g\|_{L^\infty({\cal M})} +  \text{Lip}(g) \cdot d(x, {\cal M}),
\quad \forall x \in \R^D,
 \]
 where $\| g\|_{L^\infty({\cal M})}$ is finite due to that $g$ is continuous and ${\cal M}$ is compact.
\end{lemma}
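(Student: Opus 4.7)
The plan is to prove the lemma by a direct triangle inequality argument that exploits the compactness of ${\cal M}$. Compactness gives us two things at once: first, since $g$ is Lipschitz and hence continuous, $\|g\|_{L^\infty({\cal M})}$ is finite (the continuous function $|g|$ attains its maximum on ${\cal M}$); second, for each fixed $x \in \R^D$, the continuous function $y \mapsto \|x-y\|_2$ attains its infimum over ${\cal M}$ at some point $y_x \in {\cal M}$, so $\|x - y_x\|_2 = d(x, {\cal M})$.

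With such a $y_x$ in hand, the bound follows immediately:
\[
|g(x)| \le |g(y_x)| + |g(x) - g(y_x)| \le \|g\|_{L^\infty({\cal M})} + \text{Lip}(g) \cdot \|x - y_x\|_2 = \|g\|_{L^\infty({\cal M})} + \text{Lip}(g) \cdot d(x, {\cal M}),
\]
where the first inequality is the triangle inequality combined with $y_x \in {\cal M}$, and the second uses the global Lipschitz hypothesis on $g$. There is no substantive obstacle here; this is essentially an immediate consequence of the definition of $d(x, {\cal M})$ and the Lipschitz property of $g$. If one prefers not to appeal to attainment of the infimum, the same conclusion can be reached by picking $y_\varepsilon \in {\cal M}$ with $\|x - y_\varepsilon\|_2 \le d(x, {\cal M}) + \varepsilon$ for arbitrary $\varepsilon > 0$, running the identical inequality, and then letting $\varepsilon \to 0^+$.
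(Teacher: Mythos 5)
Your proof is correct and follows essentially the same argument as the paper: pick a nearest point on ${\cal M}$ (which exists by compactness), then apply the triangle inequality together with the Lipschitz bound. The $\varepsilon$-approximation remark is a fine optional variant but adds nothing essential.
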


For each $i=1,\cdots, K$,
let $H_i : = \phi_i( U_i) = \phi_i( N_i)$, $H_i \subset T_{x_i}({\cal M})$.
We will show that
\begin{align}
& \int_{\R^D}p(x)g(x)dx 
 \approx \int_{\R^D }p(x)g(x)\sum_{i=1}^{K}\tilde{\eta}_{i}(x)
\quad\text{(error 1)} 
\nonumber \\
 & =\sum_{i=1}^K \int_{N_{i}} g(x)\tilde{\eta}_{i}(x)p(x)dx
 \quad \text{(supp($\tilde{\eta}_i$) $\subset N_i$, Lemma \ref{lemma:lip-etai})} 
 \nonumber\\
 & \approx \sum_{i}\int_{N_{i}} g(\psi_{i}\circ\phi_{i}(x)) \tilde{\eta}_{i}(x)p(x)dx
 \quad\text{(error 2)}, \label{eq:int-pq-1}
 \end{align}
 and then,
 on each $N_i = H_i \times B_\delta^{D-d}$,
  we change to local coordinate $x=(u,v)$, $u = \phi_i(x) \in H_i$, $v = x-\phi_i(x) \in B_\delta^{D-d}$,
 and use \eqref{eq:def-tileta} namely
 \[
 \tilde{\eta}_{i}(x) = \eta_i( \psi_i(u)) h_i(v)
 \] 
 to integrate w.r.t $v$ first,
 which continues \eqref{eq:int-pq-1} as
 \begin{align*}
 & =\sum_{i}\int_{H_{i}}(g\cdot\eta_{i})(\psi_{i}(u)) \left( \int_{B_\delta^{D-d}} h_i(v) p(u,v)dv \right) du
 \\
 & =: \sum_{i}\int_{U_{i}}g(z)\eta_{i}(z)\tilde{p}_{i}(z)d_{\cal M}(z)  
 \quad \text{(definition of $\tilde{p}_i$)}\\
 & =\int_{\cal M}g(z)\left(\sum_{i}\eta_{i}(z)\tilde{p}_{i}(z)\right)d_{\cal M}(z)   
 =\int_{\cal M} g(z)\tilde{p}(z) d_{\cal M}(z),
\end{align*}
where $d_{\cal M}(z)$ stands for the Reimannian volume measure on ${\cal M}$.
To prove the proposition,
it suffices to bound (error 1) and (error 2) and show that the sum $\le$ the right hand side of \eqref{eq:claim-integral-swap}.

Bound of (error 1):
\begin{align}
& \left| \int_{\R^D }p(x)g(x)dx  -  \int_{\R^D }p(x)g(x)\sum_{i=1}^{K}\tilde{\eta}_{i}(x)  \right| 
\nonumber \\
 &   \le 
  \int_{\R^D } p(x) \left| g(x) ( 1- \sum_{i=1}^{K}\tilde{\eta}_{i}(x) )  \right| dx
  =:   \int_{\R^D } p(x) |g(x) \xi(x)| dx, 
  \label{eq:bound-erro1-pf}
\end{align}
where  $\xi:=(  1- \sum_{i=1}^{K}\tilde{\eta}_{i} )$,
$\xi|_{\cal M} = 0$, 
and $\xi$ is Lipschitz on $\R^D$ with
\[
\text{Lip} (\xi ) \le  \sum_{i=1}^K \text{Lip}( \tilde{\eta}_{i}) \le K L_{{\cal M}}
\]
by Lemma \ref{lemma:lip-etai}.
By Lemma \ref{lemma:boundgx}, $\forall x \in \R^D$,
\begin{align*}
|g(x) \xi(x)| 
& \le 
( \| g\|_{L^\infty({\cal M})} +  \text{Lip}(g) \cdot d(x, {\cal M}) ) 
(0 + \text{Lip} (\xi )  \cdot d(x, {\cal M})) \\
& =  \text{Lip} (\xi ) ( \| g\|_{L^\infty({\cal M})} \cdot d(x, {\cal M}) + \text{Lip}(g) \cdot d(x, {\cal M})^2),
\end{align*}
and then \eqref{eq:bound-erro1-pf} continues as
\begin{align*}
&\le    \int_{\R^D } p(x) 
 \text{Lip} (\xi ) \left( \| g\|_{L^\infty({\cal M})} \cdot d(x, {\cal M}) + \text{Lip}(g) \cdot d(x, {\cal M})^2 \right) dx \\
& =  \text{Lip} (\xi ) \left( 
\| g\|_{L^\infty({\cal M})}   \int_{\R^D } p(x)  d(x, {\cal M}) 
+ \text{Lip}(g)  \int_{\R^D } p(x)  d(x, {\cal M})^2  \right) \\
& <  \text{Lip} (\xi )
\left( \| g\|_{L^\infty({\cal M})}   c_1 \sigma + \text{Lip}(g)   c_1 \sigma \right).
\quad \text{(by Lemma \ref{lemma-c3})}
\end{align*}
This proves that 
\begin{equation}\label{eq:bound-error1}
\text{(error 1)} <  K L_{{\cal M}} ( \| g\|_{L^\infty({\cal M})}   + \text{Lip}(g)  ) c_1 \sigma.
\end{equation}

Bound of (error 2):
\begin{equation}\label{eq:error2-eq1}
\left|
\sum_{i=1}^K \int_{N_{i}} ( g(x) - g(\psi_{i}\circ\phi_{i}(x) )) 
\tilde{\eta}_{i}(x) p(x)dx 
\right|
\le
\sum_{i=1}^K \int_{N_i} |   g(x) - g(\psi_{i}\circ\phi_{i}(x) ) | \tilde{\eta}_{i}(x) p(x) dx.
\end{equation}
Define $\xi_i(x) :=  g(x) - g(\psi_{i}\circ\phi_{i}(x) )$ for $x \in N_i$,
and we derive bound for $ |\xi_i(x)|  \tilde{\eta}_{i}(x)$ on $N_i$.
For any $x \in N_i$, 
exists $x^*_{\cal M} \in {\cal M}$ such that $\| x - x^*_{\cal M} \| = d(x, {\cal M})$,
but the point $x^*_{\cal M}$ may not lie in $U_i$.
Consider the two cases respectively:

(i) If $x^*_{\cal M} \in U_i$,  then $\xi_i (x^*_{\cal M}) = 0$, 
and then
\begin{equation}
| \xi_i (x)|
= | \xi_i (x)- \xi_i (x^*_{\cal M})|
\le \text{Lip}_{N_i}(\xi_i) \| x - x^*_{\cal M}\|
= \text{Lip}_{N_i}(\xi_i) d(x, {\cal M}).
\label{eq:bound-etaix-case1}
\end{equation}
For each $i$,
one can verify that $g(\psi_{i}\circ\phi_{i}(x) )$ has Lipschitz constant $\text{Lip}(g) \beta_{i}$ on $N_i$:
For any $x, x' \in N_i$, let $y = \psi_{i}\circ\phi_{i}(x)$, $y'= \psi_{i}\circ\phi_{i}(x')$, $y ,y' \in U_i$,
then
\begin{align*}
& | g(\psi_{i}\circ\phi_{i}(x) ) - g(\psi_{i}\circ\phi_{i}(x') )|
= |g(y) - g(y')| \le \text{Lip}(g) \|  y - y'\|   \\
& \le \text{Lip}(g) d_{\cal M}(y,y') 
\quad \text{(geodesic larger than Euclidean)}
\\
&\le \text{Lip}(g)  \beta_i \| \phi_i(y) - \phi_i(y')\|
\quad \text{(by \eqref{eq:equiv-norm-Ui})} \\
& = \text{Lip}(g)  \beta_i \| \phi_i(x) - \phi_i(x')\|
\le \text{Lip}(g)  \beta_i \| x - x'\|.
\end{align*}
As a result,
we have that
\begin{equation}\label{eq:bound-Lip-xii}
\text{Lip}_{N_i}(\xi_i) \le \text{Lip}(g) (1+\beta_{i}).
\end{equation}
Back to \eqref{eq:bound-etaix-case1}, by that $|\tilde{\eta}_i(x)| \le 1$, and \eqref{eq:global-alpha-beta},
we then have that
 \begin{equation}\label{eq:bound-case1}
 |\xi_i \cdot \tilde{\eta}_i (x) |
  \le 
  \text{Lip}(g) (1+\beta_{\cal M}) \cdot d(x, {\cal M}).
 \end{equation}

(ii) If $x^*_{\cal M} \notin U_i$, 
then $x^*_{\cal M}$ must be outside $N_i$. (Otherwise, suppose $x^*: = x^*_{\cal M}$ is in $N_i$,
$ \| x^* - \phi_i(x^* )\| \le \delta$ and $\phi_i(x^* ) \in \phi_i(U_i)$.
By construction in the beginning of Section \ref{subsec:setup-manifold},
$B_{2\delta}^D (x_i) \cap {\cal M}$ is isomorphic to Euclidean ball,
thus there is one-to-one correspondence between points in $B_{2\delta}^D (x_i) \cap {\cal M}$ and their projected image under $\phi_i$.
Since $\psi_i( \phi_i(x^* ) ) \in U_i $, $x^*$ cannot be $\psi_i( \phi_i(x^* ) )$, 
thus $x_i$ cannot $ \in B_{2\delta}^D (x_i)$.
This draws a contradiction, because $\| \phi_i(x^* ) - x_i \| < \delta$, then $ \| x^* - x_i\| < 2\delta$ by triangle inequality.)
Then the line from $x$ to $x^*_{\cal M}$ intersects with the boundary of $N_i$ at a point $x'$,
and
\[
\| x - x' \| \le \| x - x^*_{\cal M}\| = d(x, {\cal M}).
\]
Then by that $\tilde{\eta}_i(x') = 0$,
\[
|\tilde{\eta}_i(x)| = |\tilde{\eta}_i(x) - \tilde{\eta}_i(x')|
\le \text{Lip}( \tilde{\eta}_i) \| x- x'\|
\le \text{Lip}( \tilde{\eta}_i) d(x, {\cal M}).
\]
Meanwhile, Lemma \ref{lemma:boundgx} gives that 
\[
| \xi_i(x)| 
\le |g(x)|  + | g(\psi_{i}\circ\phi_{i}(x)) |
\le
2 \|g\|_{L^\infty({\cal M})} + \text{Lip}(g) d(x, {\cal M}).
\]
Together, and by the bound of $ \text{Lip}( \tilde{\eta}_i)$ in Lemma \ref{lemma:lip-etai},
\begin{equation}\label{eq:bound-case2}
| \xi_i(x)  \tilde{\eta}_i(x)| 
\le 
\left( 
2 \|g\|_{L^\infty({\cal M})} + \text{Lip}(g) d(x, {\cal M})
\right)
 L_{\cal M} d(x, {\cal M}).
\end{equation}

Combining the two cases,
we simply bound $ |\xi_i(x)|  \tilde{\eta}_{i}(x)$ on $N_i$ by the sum of 
\eqref{eq:bound-case1} and \eqref{eq:bound-case2},
and then \eqref{eq:error2-eq1} continues as
\begin{align}
 \text{(error 2)}  
& \le 
\sum_{i=1}^K
\int_{N_i} 
\left\{
\text{Lip}(g) (1+\beta_{\cal M}) \cdot d(x, {\cal M}) + 
\left( 
2 \|g\|_{L^\infty({\cal M})} + \text{Lip}(g) d(x, {\cal M})
\right)
 L_{\cal M} d(x, {\cal M}) 
 \right\}
  p(x) dx 
  \nonumber \\
&  < K c_1 \sigma  \left\{
 \text{Lip}(g) (1+\beta_{\cal M}) 
 +   L_{\cal M} (2 \|g\|_{L^\infty({\cal M})} + \text{Lip}(g) )
 \right\}
 \label{eq:bound-error2}
\end{align}
 where Lemma \ref{lemma-c3} is used.
 
 Finally, 
combining the two bounds of (error 1) and (error 2) 
\eqref{eq:bound-error1} and \eqref{eq:bound-error2}
proves the claim. 
\end{proof}

\subsection{Proofs in Section \ref{sec:theory-consist}}

\subsubsection{Approximation error analysis of $L[f]$}

\begin{proof}[Proof of Proposition \ref{prop:step2-general-pq}]
We first consider when $\Omega:=$supp$(p+q)$ is compact. 
Then supp($f_{tar}$) is inside $\Omega$ and let $C'$ be the diameter of $\Omega$.
Result in \cite{yarotsky2017error} guarantees the existence of $f_{\text{con}}$ such  that
\begin{equation}\label{eq:yarotsky2017}
\| f_{\text{tar}} - f_{\text{con}}\|_{L^\infty(\Omega)} \le \varepsilon
\end{equation}
with the needed neural network complexity stated in the proposition. 
Then, by   \eqref{eq:def-Lf-2},
 \begin{align*}
 |L[ f_{\text{tar}}] -  L[f_{\text{con}}]|
& \le 
\frac{1}{2}
\left( 
 \int p |T_p \circ f_{\text{tar}} - T_p \circ f_{\text{con}}|
+
 \int q |T_q \circ f_{\text{tar}} - T_q \circ f_{\text{con}}|
 \right) 
 \nonumber \\
& \le
\frac{1}{2}
\left( 
 \int p |f_{\text{tar}} - f_{\text{con}}|
 + 
  \int q |f_{\text{tar}} - f_{\text{con}}|
 \right)
 \quad \text{(by \eqref{eq:LipT})}
 \nonumber \\
 & 
 \le \varepsilon \frac{1}{2} ( \int_\Omega p + \int_\Omega q)
  = \varepsilon,
 \quad \text{(by supp$(p+q)$ $\subset \Omega$ and \eqref{eq:yarotsky2017})}
\end{align*}
which proves the claim. 

When the two densities are merely sub-exponential,
by a re-centering of the origin we assume that supp($f_{\text{tar}}$) lies inside $\Omega:=(-\frac{C'}{2}, \frac{C'}{2})^D$,
and all derivatives of $f = f_{\text{tar}}$ vanishes at $\partial \Omega$.
The constructive proof in \cite{yarotsky2017error} utilizes a partition of unity of the box $\Omega$ by evenly divided sub-boxes, 
and approximate $f$ by a Taylor expansion on each sub-box.
As a result,  
the $ f_{\text{con}}$ which fulfills \eqref{eq:yarotsky2017}
 also vanishes outside $\Omega$.
 This is because the only sub-boxes whose support are not in $\Omega$ are those that are centered on the boundary of $\Omega$, and then the coefficients in Taylor expansion vanish, thus those sub-boxes can be removed from the formula of  $f_{\text{con}}$.
Note that since $T(0)=0$ \eqref{eq:LipT}, whenever $f$ vanishes, so does $T\circ f$,
thus supp($T_p \circ f_{\text{tar}}$),  supp($T_p \circ f_{\text{con}}$) $\subset \Omega$.
Thus we have
\begin{align*}
& |\int p T_p \circ f_{\text{tar}} -  \int p T_p \circ f_{\text{con}} |
\le \int_{{\Omega}} p | T_p \circ f_{\text{tar}} -  T_p \circ f_{\text{con}} | 
 \le \int_{ \Omega} p | f_{\text{tar}} -  f_{\text{con}} | 
 \le \varepsilon \int_{ \Omega} p  \le \varepsilon,
\end{align*}
and similarly for the integral w.r.t. $q$. 
Putting together, it gives that  $| L[f_{\text{tar}}] - L[f_{\theta}]  |  \le  \varepsilon$.
\end{proof}

\begin{proof}[Proof of Proposition \ref{prop:step2-nearmanifold-pq}]
Under Assumption \ref{assump:ftar}, $f_{\text{tar}}$ is smooth and compactly supported in $\R^D$ and then globally Lipschitz. 
Since ${\cal M}$ is compact smooth manifold,
$f_{\text{tar}}|_{\cal M}$ is smooth on the manifold. 
Since $T=T_p$ and $T_q$ are Lipschitz-1,
Theorem \ref{thm:ambient-intergral-approx} applies to $f=f_{\text{tar}}$
and guarantees the existence of a $f_{\text{con}}$ in the network function family with the claimed complexity
to bound 
$| L[f_{\text{tar}}] - L[ f_{\text{con}} ]|$.
(Strictly speaking, Theorem \ref{thm:ambient-intergral-approx}  only applies to bound $| I[f_{\text{tar}}] - I[f_{\text{con}}]|$ 
where $I[f]= \int p T_p \circ f$, or $\int q T_q \circ f$, 
and $L[f]$ equals the average of the two. 
Nevertheless, the proof of Theorem \ref{thm:ambient-intergral-approx} uses the integral comparison lemma Proposition \ref{prop:integral-swap}
and the uniform approximation of $f_{\text{tar}}$ on the manifold,
which directly extends to prove the same result for $I[f] = L[f]$.)

To prove the proposition, it suffices to bound the quantities 
\[
\| T\circ f_{\text{tar}} \|_{L^\infty({\cal M})}, 
\quad
T=T_p,\,T_q.
\]
Let $f= f_{\text{tar}}$, since $f(x_0) = 0$ for $x_0 \in {\cal M}$, and then $T\circ f(x_0) = 0$, and
\[
\| T(f(x)) \| \le \text{Lip}_{\R^D}(f) \|x - x_0\| \le \text{Lip}_{\R^D}(f) \text{diam}({\cal M}),
\quad \forall x\in {\cal M}.
\]
Thus \[
\| T\circ f_{\text{tar}} \|_{L^\infty({\cal M})}
\le \text{Lip}_{\R^D}(f_{\text{tar}}) \text{diam}({\cal M}),
\quad 
T=T_p,\,T_q.
\]
Combining with  \eqref{eq:bound-thm-ambient-integral} and \eqref{eq:C3-thm-ambient-integral}
leads to \eqref{eq:bound-prop-step2-nearmanifold}.
\end{proof}

\subsubsection{Estimation error analysis of $L_n[f]$}
\begin{proof}[Proof of Lemma \ref{lemma:cover}]
For fixed $ 0 < r < \frac{B}{L} $,
let $X:=\{x_i\}_{i =1}^{N}$ be a $r$-net of $K$ such that $N = {\cal N}(K ,r)$.
Let $t := 4Lr > 0$,
and $F:= \{f_j \}_{j=1}^M$ be a maximal $t$-separated set in ${\cal F}'$, 
meaning that for any $j \neq j'$, $\| f_j - f_{j'}\|_{L^\infty(K)} > t$, and no more member in ${\cal F}'$ can be added to preserve this property.
Such $F$ always exists because it can be generated by adding points from an arbitrary point while preserving the $t$-separation property.
By construction, $F$ is a $t$-net of ${\cal F}'$.
We will show that $M \le$ \eqref{eq:boundNF}. 

Partition the 1D interval $[-B, B]$ into $N_1$ many disjoint sub-intervals, each of length $\le 2Lr$, 
and $N_1$ can be made $< \frac{B}{Lr}+1 \le \frac{2B}{ Lr}$.
For any $j\neq j'$, there must be one $x_i \in X$ such that $f_j(x_i)$ and $f_{j'}(x_i)$ lie in distinct subintervals. 
(Otherwise, $|f_j(x_i) - f_{j'}(x_i)| \le 2Lr$,
and by that both $f_j$ and $f_{j'}$ are $L$-Lipschitz, $| f_j(x) - f_{j'}(x)|< 4Lr$ for any $x \in B_r(x_i)$.
Since $\cup_{i=1}^N B_r(x_i)$ cover $K$, this means that $\| f_j - f_{j'} \|_{L^\infty(K)} \le 4Lr =t$,
contradicting with that $f_j$ and $f_{j'}$ are $t$-separated.)
Then each function $f_j$ corresponds to a string of interval indices 
$(I_1, \cdots, I_N)$, where $I_i \in \{ 1, \cdots, N_1\}$,
and these strings are distinct for the $M$ many $f_j$'s.
There are at most $N_1^N$ distinct strings, which means that $M \le N_1^N$.
\end{proof}

\begin{proof}[Proof of Proposition \ref{prop:conc-supF-Ln}]
Recall that  ( $|X_{tr}| = |Y_{tr}| = n$)
\[
L_n [f] = \frac{1}{2} \left(
\frac{1}{n} \sum_{i=1}^n T_p \circ f(x_i) + \frac{1}{n} \sum_{i=1}^n T_q \circ f(y_i) 
\right),
\quad
L[f] = \E L_n [f],
\]
where $x_i \sim p$ i.i.d., $y_i \sim q$ i.i.d., and $x_i$'s and $y_i$'s are independent, and $T_p$, $T_q$ as in \eqref{eq:LipT}.
We prove the three cases respectively, where for case (1), we prove the compactly supported case first, and the exponential-tail case as (1').
\\

(1) 
For any $f \in {\cal F}_{\Theta, \text{reg}}(B_R) $,
there is $x_0 \in B_R$,
$f(x_0) = 0$.
Then
$\forall x \in B_R(0)$, $|f(x)| \le \text{Lip}(f) \|x-x_0\| \le 2LR$, i.e., 
$\| f \|_{L^\infty( {B_R(0)})} \le 2LR$.
Thus ${\cal F}_{\Theta, \text{reg}}(B_R) $ is contained in the function space of
\[
{\cal F} := \{f, \, \text{Lip}_{\R^D}(f) \le L, \, \|f\|_{L^\infty( B_R(0) )} \le 2LR \}
\quad
\text{equipped with $\| \cdot \|_{L^\infty( B_R(0))}$}.
\]
By Lemma \ref{lemma:cover}, for any $r < 1 < 2R$, $t := 4Lr$, 
there exists a finite set $F$ in ${\cal F}_{\Theta, \text{reg}}(B_R)$ 
 which form an $t$-net that covers ${\cal F}_{\Theta, \text{reg}}(B_R) $ 
under the metric of $\| \cdot \|_{L^\infty( B_R(0))}$,
where 
\[
\text{Card}(F) \le (\frac{4R}{r})^{{\cal N}( B_R(0), r)}.
\]
The covering number of a Euclidean ball can be bounded by 
${\cal N}( B_R(0), r) \le \left( \frac{2R}{r} + 1\right)^D < \left( \frac{3R}{r}\right)^D$,
see e.g. Section 4.2 of \cite{vershynin2018high},
using $r < 1$.
Thus,
\[
\text{Card}(F) \le \exp \left\{   (3R)^D r^{-D}  \log \frac{4R}{r} \right\}.
\]

Given $r < \min\{ c_1'/4, 1\}$, 
for each $f \in F$, 
$\text{Lip}_{\R^D}(f) \le L$,
then Lemma \ref{lemma:sub-exp-xi} and \eqref{eq:Bern-1} apply to give concentration of the independent sums
over $x_i$'s and $y_i$'s respectively.
By a union bound, 
\begin{equation}\label{eq:badeventprob1}
\Pr[ \exists f \in F, \, |L_n[f] - L[f]| \ge t] 
\le \text{Card}(F) \cdot 4 e^{  - c_0' n \frac{t^2}{L^2} },
\end{equation}
where $\frac{t}{L} = 4r$ is chosen to $< c_1'$ to begin with.
The r.h.s of \eqref{eq:badeventprob1} is upper bounded by
\[
\exp \left \{
 \log 4 +  (3R)^D r^{-D}  \log \frac{4R}{r} - c_0' n (4r)^2 
 \right \},
\quad 
0< r < \min\{ c_1'/4, 1\},
\]
which 
\begin{align}
& \le \exp \left \{
- \frac{2}{3} n^{\frac{D}{D+2}} (\log n)^{\frac{2}{2+D}}  + \log 4
\right \},
\label{eq:badeventprob-2}
\\
& \text{
if $\gamma := \max\{ 3R, \frac{1}{4\sqrt{c_0'}} \}$, 
$r = \gamma (\frac{\log n}{n} )^{\frac{1}{2+D}}$,
and $(\log n)^{\frac{1}{2+D}} > \frac{4}{3} $.}
\end{align}
Note that as $n \to \infty$, $r \to 0$ thus the constraint of $r < \min\{ c_1'/4, 1\}$ is satisfied for sufficiently large $n$. 

We consider the good event where $|L_n[f] - L[f]| < t$ for all $f \in F$.
Since $F$ is a $t$-net that covers $ {\cal F}_{\Theta, \text{reg}}(B_R)$ with unions of closed $\| \cdot \|_{L^\infty( B_R)}$-balls around points in $F$,
for any $f \in {\cal F}_{\Theta, \text{reg}}(B_R) $,
there is an $f_0 \in F$ such that $\| f - f_0\|_{L^\infty(B_R)} \le t$.
Since supp($p+q$) $\subset B_R$,
this implies that $|L[f] - L[f_0]| \le t$ and $|L_n[f] - L_n[f_0]| \le  t$,
then 
\[
\sup_{f \in {\cal F}_{\Theta, \text{reg}}(B_R)} |L_n[f] - L[f]| \le 3t.
\]
This proves that with sufficiently large $n$, with probability $\ge 1-$ \eqref{eq:badeventprob-2},
\[
\sup_{f \in {\cal F}_{\Theta, \text{reg}}(B_R)} |L_n[f] - L[f]| 
\le 3 \cdot 4L \max\{ 3R, \frac{1}{4\sqrt{c_0'}} \} \left( \frac{\log n}{n} \right)^{\frac{1}{2+D}}.
\]
\\

(1')  The proof extends that in (1) by a truncation argument for the exponential tail of the densities.
We rename the $R$ in the statement as $R_0$ in below.

The following lemma gives the decay of the integration
 with the tail of sub-exponential densities 
 for any $f \in {\cal F}_{\Theta, \text{reg}}( B_{R_0}(0))$,
 proved in Appendix \ref{app:proofs}.
\begin{lemma}\label{lamma:truncate-Lftheta}
Suppose  $p$ is in ${\cal P}_{\text{exp}}$  with $c=1$,
function $f:\R^D \to \R$
has $\text{Lip}_{\R^D}(f) \le L$ and vanishes at some point $x_0$, $\|x_0\| < R_0$,
then, $C$ as in the definition of ${\cal P}_{\text{exp}}$,
\[
\int_{ \|x\| > R} |f| p  <  L( R_0 + 1+ R ) Ce^{-R}, 
\quad \forall R > R_0.
\]
\end{lemma}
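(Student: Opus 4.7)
The plan is to exploit the two given ingredients separately: a pointwise upper bound on $|f(x)|$ that grows only linearly with $\|x\|$, coming from the Lipschitz assumption and the vanishing point, and a tail bound on $p$ coming from the sub-exponential assumption with scale $c=1$. Combining them reduces the claim to estimating a first moment of $\|X\|$ restricted to $\{\|X\|>R\}$.

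First, I would bound the integrand. Since $f(x_0)=0$ and $\text{Lip}_{\R^D}(f)\le L$,
\[
|f(x)| \le L\|x-x_0\| \le L(\|x\| + \|x_0\|) \le L(\|x\| + R_0),
\]
where I used $\|x_0\|<R_0$. Therefore
\[
\int_{\|x\|>R} |f|\, p \le L R_0 \,\Pr_{X\sim p}[\|X\|>R] + L \int_{\|x\|>R} \|x\|\, p(x)\, dx.
\]
The first term is immediately bounded by $L R_0 C e^{-R}$ using the definition of ${\cal P}_{\text{exp}}$ with $c=1$.

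Next, I would evaluate the second term via a layer-cake / integration-by-parts identity applied to the nonnegative random variable $\|X\|\mathbbm{1}_{\{\|X\|>R\}}$:
\[
\int_{\|x\|>R} \|x\|\, p(x)\, dx = R\,\Pr[\|X\|>R] + \int_R^\infty \Pr[\|X\|>t]\, dt.
\]
Plugging in the sub-exponential tail bound $\Pr[\|X\|>t]\le Ce^{-t}$ for $t\ge R>R_0\ge 0$ gives
\[
\int_{\|x\|>R} \|x\|\, p(x)\, dx \le RCe^{-R} + \int_R^\infty Ce^{-t}\, dt = (R+1)Ce^{-R}.
\]

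Finally, summing the two contributions yields
\[
\int_{\|x\|>R} |f|\, p \le L\bigl( R_0 + R + 1\bigr) C e^{-R},
\]
which is the claim. No step looks difficult: the only small care needed is to write out the layer-cake identity correctly (so as not to lose the $R\Pr[\|X\|>R]$ boundary term), and to notice that the assumption $R>R_0$ is only used to guarantee that $x_0$ lies inside the ball $B_R$ so the Lipschitz bound is meaningful in the region of integration.
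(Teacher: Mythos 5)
Your proof is correct and follows essentially the same route as the paper's: bound $|f(x)|\le L(R_0+\|x\|)$ via the Lipschitz constant and the vanishing point, apply the sub-exponential tail bound, and handle the remaining first-moment term with a layer-cake identity; the only difference is trivial bookkeeping (the paper groups the integrand as $(R_0+R)+(\|x\|-R)$ rather than $R_0+\|x\|$), and both yield the stated bound $L(R_0+R+1)Ce^{-R}$.
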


We introduce a sequence of domains $\Omega_n := B_{R_n}(0)$ in $\R^D$, where $R_n = \alpha \log n$, $\alpha > 0$ is a constant to be determined. 
For $n$ samples of $x_i$'s and $y_i$'s,
since $p$, $q$ are in $ {\cal P}_{\text{exp}}$  with $c=1$,
\[
\Pr [ \exists i, \, \|x_i\| > R \text{ or } \exists i', \, \|y_{i'}\| > R] 
< 2n Ce^{-R}, 
\quad \forall R > 0.
\]
We call  \[
\text{(good event 1)}_n = \{ \| y_{i'}\|, \, \| x_i\| \le R_n, \,  \forall i, i'\}.
\]

For any large enough $n$ such that $R_n > \max\{ R_0, 1 \}$, 
the functional space ${\cal F}_{\Theta, \text{reg}}( B_{R_0}(0))$ lies inside 
\[
{\cal F}_n := \{f, \, \text{Lip}_{\R^D}(f) \le L, \, \|f\|_{L^\infty( B_{R_n} )} \le 2LR_n \}
\quad
\text{equipped with $\| \cdot \|_{L^\infty( B_{R_n})}$}.
\]
Similarly as in (1), 
for any $r< 1$ thus $< 2R_n$, $t: = 4Lr$,
${\cal F}_{\Theta, \text{reg}}( B_{R_0}(0))$ has a subset $F_n$ which is an $t$-net
that covers ${\cal F}_{\Theta, \text{reg}}( B_{R_0}(0))$ under $\| \cdot \|_{L^\infty( B_{R_n})}$,
and 
\[
\text{Card}(F_n) \le \exp \left\{   (3R_n)^D r^{-D}  \log \frac{4R_n}{r} \right\}.
\]
We choose
\[
r = \gamma_n (\frac{\log n}{n})^{\frac{1}{2+D}},
\,
\gamma_n = 3R_n \text{ ($\sim \log n > \frac{1}{4\sqrt{c_0'}}$ for large $n$)},
\]
then if $(\log n)^{\frac{1}{2+D}} > 4/3$,
\[
\text{(good event 2)}_n := \{ \forall f\in F_n,\, |L_n[f] - L[f]| < t\}
\]
fails with probability $\le$ \eqref{eq:badeventprob-2}.

Restricting to $\text{(good event 1)}_n$ and $\text{(good event 2)}_n$.
For any $f \in {\cal F}_{\Theta, \text{reg}}( B_{R_0}(0))$,
exists $f_0 \in F_n$ such that 
$\|f - f_0\|_{L^\infty( B_{R_n})} \le t$,
then $|L_n[f] - L_n[f_0]| \le t$.
Meanwhile, since both $f $ and $f_0$ are in ${\cal F}_{\Theta, \text{reg}}( B_{R_0}(0))$, 
Lemma \ref{lamma:truncate-Lftheta} applies to both (since $R_n > R_0$),
then
\begin{align*}
|L[f] - L[f_0]|
& \le \frac{1}{2} \left(  \int p |T_p \circ f -  T_p \circ f_0| +   \int q |T_q \circ f -  T_q \circ f_0| \right)
\\
& \le \frac{1}{2} \left(  \int p | f -  f_0| +   \int q |  f -   f_0| \right) 
\quad \text{(by \eqref{eq:LipT})}
\\
& \le \frac{1}{2} \left( 
t \int_{B_{R_n}} (p+q) 
+ \int_{\R^D \backslash B_{R_n}} (p+q) (|f| + |f_0|) \right) 
\\
& \le t + 2CL (R_0 + 1 + R_n) e^{-R_n},
\end{align*}
which means that 
\begin{align}
& \sup_{f \in {\cal F}_{\Theta, \text{reg}}(B_{R_0})} |L_n[f] - L[f]| 
\le 3 t + 2CL (R_0 + 1 + R_n) e^{-R_n} \\
&~~~
 = 3 \cdot 4L  \cdot 3R_n  \left( \frac{\log n}{n} \right)^{\frac{1}{2+D}} 
 +  2CL (R_0 + 1 + R_n) e^{-R_n} \\
 &~~~
 < \tilde{C} L \cdot \alpha \log n  \cdot (   (\log n/ n)^{\frac{1}{2+D}}  + n^{-\alpha}),
 \quad \text{(setting $R_n= \alpha \log n > R_0$)}
 \label{eq:estimation-bound(2)-1}
\end{align}
where $\tilde{C}$ is an absolute constant.
This happens with probability $\ge 1-  p_{\text{fail}}$,
and 
\begin{align*}
p_{\text{fail}}
& \le
\Pr[\text{ fail of $\text{(good event 1)}_n$}]+ \Pr[\text{ fail of $\text{(good event 2)}_n$ }]
\\
&  \le 2n Ce^{-R_n} + \text{\eqref{eq:badeventprob-2}} 
=  (2C) n^{-\alpha + 1}  + \text{\eqref{eq:badeventprob-2}}.
\end{align*}
To make $p_{\text{fail}} \to 0$,
one can set $\alpha = 1+\epsilon$ for some $\epsilon > 0$,
then in \eqref{eq:estimation-bound(2)-1}
 the $n^{-\alpha}$ term is dominated by the term of $(\log n/ n)^{\frac{1}{2+D}} \gg O( n^{-1/3})$. 

Putting together, we have that when $n$ is sufficiently large,
specifically,
$(\log n)^{\frac{1}{2+D}} > 4/3$ and $R_n = (1+\epsilon) \log n > \max\{ 1, R_0, 1/(12\sqrt{c_0'})\}$,
then with probability
$\ge 1- (2C)n^{-\epsilon} -  \text{\eqref{eq:badeventprob-2}}$,
the bound \eqref{eq:estimation-bound(2)-1} holds, which for large $n$ is 
\[
\sim L \log n  (\log n/ n)^{\frac{1}{2+D}},
\]
omitting the absolute constant in front.
\\

(2)
Since ${\cal M} \subset B_R$, similarly as in (1), enlarge the network function space to be 
\[
{\cal F} := \{f, \, \text{Lip}_{\R^D}(f) \le L, \, \|f\|_{L^\infty( {\cal M} )} \le 2LR \}
\quad
\text{equipped with $\| \cdot \|_{L^\infty( {\cal M})}$}.
\]
When applying Lemma \ref{lemma:cover}, the $t$-net $F$ can be chosen such that
\[
\text{Card}(F)
\le \exp \left\{   c({\cal M}) r^{-d} \log \frac{4R}{r} \right\},
\]
because the covering number of the manifold 
${\cal N}( {\cal M}, r) \le \frac{c({\cal M})}{r^d}$,
where $c({\cal M} )$ involves the intrinsic volume of ${\cal M}$ integrated over its Riemannian volume element, 
and it scales with the diameter of ${\cal M}$.
The rest of the proof is similar, which gives that, with sufficiently large $n$, with probability 
$\ge 1- \exp \left \{
- \frac{2}{3} n^{\frac{d}{d+2}} (\log n)^{\frac{2}{2+d}}  + \log 4
\right \}$,
\[
\sup_{f \in {\cal F}_{\Theta, \text{reg}}( {\cal M})} |L_n[f] - L[f]| 
\le 3 \cdot 4L \max\{ c({\cal M})^{1/d},  \frac{1}{4\sqrt{c_0'}}  \} \left( \frac{\log n}{n} \right)^{\frac{1}{2+d}}.
\]
\\

(3) We consider the same enlarged ${\cal F}$ as in (3) equipped with $\| \cdot \|_{L^\infty( {\cal M})}$,
which can be covered by the same $t$-net $F$ as there.
The large deviation of $|L_n[f] - L[f]|$ for any $f \in F$ is the same,
because though the densities $p$, $q$ are no longer supported on the manifold they still belong to the ${\cal P}_{\text{exp}}$ class,
 and so is the union bound. 
 The difference is in the control of $|L_n [f] - L_n[f_0]|$ and $|L[f] - L[f_0]|$ where $f$ is an arbitrary member in 
 ${\cal F}_{\Theta, \text{reg}}( \cal M)$
 and  $f_0 \in F$ such that $\| f - f_0\|_{L^\infty( {\cal M})} \le t$. 
 
 For $|L[f] - L[f_0]|$, 
 we can use the integral comparison Proposition \ref{prop:integral-swap}
  and the uniform approximation of $f$ by $f_0$ on the manifold.
 Specifically, similarly as in the proof of Proposition \ref{prop:step2-nearmanifold-pq},
 we have that 
 \begin{equation}\label{eq:diff-Lf-Lf0}
 |L[f] - L[f_0]| \le 
 2(2LR C_1({\cal M}) +L C_2({\cal M}) )c_1 \sigma
 + (1 + C_1({\cal M}) c_1 \sigma) t,
 \end{equation}
 where $C_1({\cal M})$, $C_2({\cal M})$ are manifold-atlas-dependent only constants defined in \eqref{eq:def-C1C2}.

For the empirical $L_n$, we need the concentration of the independent sum of the random variables $d(x_i, {\cal M})$ (and similarly $d(y_i, {\cal M})$),
which is the following lemma  proved in Appendix \ref{app:proofs}.
\begin{lemma}\label{lemma:conc-dxM}
Suppose $x_i$, $i=1,\cdots, n$ $\sim p$ i.i.d., $p \in {\cal P}_\sigma$ 
as defined in \eqref{eq:density-mannifold-exp-decay} with  $\sigma < \frac{1}{2}$
and also in $ {\cal P}_{\text{exp}}$  with $c=1$. Then 
\[
\Pr 
\left[ \frac{1}{n}\sum_{i=1}^n d(x_i, {\cal M})  > (c_1 + \tau) \sigma \right]
\le e^{ - c_0'' n \tau^2  },
\quad \forall  0 < \tau < 1, 
\]
where $c_0''$ is an absolute constant.
\end{lemma}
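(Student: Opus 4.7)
The plan is to recognize $Z_i := d(x_i, \mathcal{M})$ as i.i.d.\ nonnegative sub-exponential random variables and then invoke a Bernstein-type inequality, in the same spirit as Lemma~\ref{lemma:sub-exp-xi} and \eqref{eq:Bern-1}. The tail bound defining $\mathcal{P}_\sigma$ in \eqref{eq:density-mannifold-exp-decay} reads $\Pr[Z_i > t] \le c_1 e^{-t/\sigma}$ for all $t>0$, which (via the standard equivalence between exponential tails and the Orlicz norm) directly yields $\|Z_i\|_{\psi_1} \le C \sigma$ for a constant $C$ depending only on $c_1$; since the paper treats $c_1$ as fixed, $C$ is absolute.

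Next, I would invoke Lemma~\ref{lemma-c3} to control the mean: $\mu := \mathbb{E} Z_i = \int_{\R^D} d(x,\mathcal{M}) p(x)\,dx < c_1 \sigma$. Then I would apply the sub-exponential Bernstein inequality (Corollary 2.8.3 of \cite{vershynin2018high}) to the centered sum $\frac{1}{n}\sum_{i=1}^n (Z_i - \mu)$ with deviation parameter $s = \tau \sigma$. Because $\tau < 1$, the ratio $s/(C\sigma) = \tau/C$ is bounded, so the minimum in the exponent is attained by the quadratic term $s^2/(C\sigma)^2 = \tau^2/C^2$, giving
\[
\Pr\!\left[ \frac{1}{n}\sum_{i=1}^n (Z_i - \mu) > \tau \sigma \right] \le \exp(-c_0'' n \tau^2)
\]
for some absolute $c_0'' > 0$ (a single factor of $2$ from the two-sided bound can be absorbed into $c_0''$ if needed, e.g.\ by shrinking it and using the one-sided version).

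Finally, the event of interest satisfies the inclusion
\[
\left\{ \tfrac{1}{n}\textstyle\sum_i Z_i > (c_1 + \tau)\sigma \right\}
\subset
\left\{ \tfrac{1}{n}\textstyle\sum_i (Z_i - \mu) > \tau\sigma \right\},
\]
since $\mu < c_1\sigma$ forces $(c_1+\tau)\sigma - \mu > \tau\sigma$. Combining the inclusion with the Bernstein bound proves the lemma. There is no substantive obstacle: the proof is essentially one-line bookkeeping once the sub-exponential norm of $d(x_i,\mathcal{M})$ is identified. The only minor care needed is to confirm that the pre-factor $c_1$ in the exponential tail does not degrade the $\psi_1$-norm (it only contributes to the constant $C$, not to the $\sigma$ scaling), and the $\mathcal{P}_{\text{exp}}$ assumption with $c=1$ is not actually used in this step but keeps the hypothesis consistent with the other parts of Proposition~\ref{prop:conc-supF-Ln}.
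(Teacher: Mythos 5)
Your proposal is correct and follows essentially the same route as the paper's (very terse) proof: identify $d(x_i,{\cal M})$ as i.i.d.\ nonnegative sub-exponential variables with $\psi_1$-norm of order $\sigma$, use Lemma \ref{lemma-c3} for $\E\, d(x_i,{\cal M}) < c_1\sigma$, and apply the one-sided sub-exponential Bernstein bound, noting that $\tau<1$ makes the quadratic term dominate. Your added care about the factor of $2$ and the observation that the ${\cal P}_{\text{exp}}$ assumption is not actually needed here are both accurate refinements of the same argument.
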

To proceed, observe that 
\begin{align*}
& \left| \frac{1}{n} \sum_{i=1}^n ( T_p \circ f(x_i) - T_p \circ f_0(x_i)  ) \right|
\le 
\frac{1}{n} \sum_{i=1}^n |f(x_i) - f_0(x_i)|
\quad \text{(by \eqref{eq:LipT})} 
\\
&
\le \frac{1}{n} \sum_{i=1}^n  |f(x_i) - f( (x_i)^*_{\cal M}) | 
+ |  f( (x_i)^*_{\cal M} ) - f_0 ( (x_i)^*_{\cal M})| 
+ |f_0 ( (x_i)^*_{\cal M}) - f_0 (x_i)|
\\
& ~~~~~~~~
\text{(for each $x_i$, $\exists (x_i)^*_{\cal M}\in {\cal M}$ and $\| x_i - (x_i)^*_{\cal M}\| = d(x_i, \cal M)$)}
\\
& \le \frac{1}{n} \sum_{i=1}^n  
(2 L d(x_i, M) + t )
\quad \text{(by that $\| f - f_0\|_{L^\infty({\cal M})} \le t$)}
\\
& = t + 2L  \left( \frac{1}{n} \sum_{i=1}^n   d(x_i, {\cal M}) \right).
\end{align*}
Similarly for the independent sums with $T_q \circ f(y_i)$, 
this gives that 
\begin{equation}\label{eq:diff-Lnf-Lnf0}
 |L_n[f] - L_n[f_0]| 
 \le t + L  \left( \frac{1}{n} \sum_{i=1}^n   d(x_i, {\cal M}) + \frac{1}{n} \sum_{i=1}^n   d(y_i, {\cal M}) \right).
\end{equation}
Putting together \eqref{eq:diff-Lf-Lf0}, \eqref{eq:diff-Lnf-Lnf0}, Lemma \ref{lemma:conc-dxM},
and the union bound over the $t$-net such that $\sup_{f \in F}| L_n [f] - L[f] | < t$,
choosing $\tau = c_1$, $t = 4Lr$, $r \sim (\log n /n)^{1/(2+d)}$ as in (3),
we have that, when $n$ is sufficiently large, with probability 
$\ge 1- \exp \left \{ - \frac{2}{3} n^{\frac{d}{d+2}} (\log n)^{\frac{2}{2+d}}  + \log 4 \right \} - \exp \{ - c_0'' c_1^2 n\}$,
\[
\sup_{f \in {\cal F}_{\Theta, \text{reg}}( {\cal M})} |L_n[f] - L[f]| 
\le \tilde{C}({\cal M}) \cdot L ( \sigma +  ( \log n /n )^{\frac{1}{2+d}}),
\]
where $\tilde{C}({\cal M})$ is a constant that depends on manifold-atlas only. 
\end{proof}

\subsubsection{Testing power analysis on the testing set}

\begin{proof}[Proof of Theorem \ref{thm1}]
The text before the theorem proves (1), collecting the approximation error results 
Propositions \ref{prop:step2-general-pq}, \ref{prop:step2-nearmanifold-pq}
and the estimation error result
Proposition \ref{prop:conc-supF-Ln}. 
Note that under the setting of Proposition \ref{prop:conc-supF-Ln}, 
which only imposes the extra assumption that network functions need to vanish at least at one point in a bounded domain
beyond Assumption \ref{assump:netLip},
the approximation results hold. 
Particularly,
in Proposition \ref{prop:step2-nearmanifold-pq}, 
the $\text{Lip}(f_{\text{con}})$ can be bounded by $L_\Theta$ which is a universal constant,
and all other constants depends on manifold-atlas, the target function $f_{\text{tar}}$.

(2) 
Because $T_n$ is the sum of two independent sums of a fixed Lipschitz function $\hat{f}_{\text{tr}}$
averaged on $x_i$'s and $y_i$'s respectively,
CLT applies to give the asymptotically normality, and it suffices to bound the variance to prove the claim.
Let $f=\hat{f}_{\text{tr}}$, under Assumption \ref{assump:netLip}, $\text{Lip}_{\R^D}(f) \le L_\Theta$.
Note that
$\text{Var}(\sqrt{n} T_n) =  \text{Var}_{x \sim p}(f(x)) + \text{Var}_{y \sim q}(f(y))$.
Since in all three settings in Proposition \ref{prop:conc-supF-Ln}, 
$p$ and $q$ are in ${\cal P}_{\text{exp}}$ with $c=1$,
then similar to Lemma \ref{lemma:sub-exp-xi},
we know that $( f(x_i)-\E_{x \sim p} f(x) )$
is 1D sub-exponential random variables satisfying \eqref{eq:sub-exp-xi}
where $L=L_\Theta$.  
This proves that Var($f(x_i)$) $\le c'' L_\Theta^2$, where $c''$ is an absolute constant.
Same argument applies to $f(y_i)$, 
and thus $\text{Var}(\sqrt{n} T_n) \le L_\Theta^2$ multiplied by an absolute constant.
\end{proof}

\begin{lemma}\label{lemma1}
 For any $f$ so that the integrals are defined, $T[f] \ge 4L[f]$.
 \end{lemma}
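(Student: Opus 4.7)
The plan is to reduce the claim to a pointwise inequality on the integrand and then invoke a two-line AM--GM style bound. Writing out $L[f]$ using $\log \frac{2e^{f}}{1+e^{f}} = f + \log 2 - \log(1+e^{f})$ and $\log \frac{2}{1+e^{f}} = \log 2 - \log(1+e^{f})$, I get
\[
4L[f] = 2\int p f + 2(\log 2)\int(p+q) - 2\int (p+q) \log(1+e^{f}).
\]
Subtracting this from $T[f]=\int(p-q)f = \int pf - \int qf$ and using $\int(p+q)=2$ collapses the $\int pf$ contributions neatly; what remains is
\[
T[f] - 4L[f] \;=\; \int (p+q)\bigl(\, 2\log(1+e^{f}) - f - 2\log 2 \,\bigr).
\]

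The job then reduces to showing the integrand is nonnegative pointwise. For this I would use the algebraic identity
\[
2\log(1+e^{t}) - t \;=\; \log\!\frac{(1+e^{t})^{2}}{e^{t}} \;=\; 2\log(e^{t/2}+e^{-t/2}),
\]
so the bracket becomes $2\log\!\bigl( (e^{t/2}+e^{-t/2})/2 \bigr)$. By AM--GM, $e^{t/2}+e^{-t/2}\ge 2$ with equality iff $t=0$, so the bracket is $\ge 0$. Since $p+q \ge 0$ as well, the integrand is nonnegative and therefore $T[f]-4L[f]\ge 0$, as claimed.

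There is no real obstacle here; the only subtlety is recognizing that the $\int pf$ terms combine with the $\int(p+q)\log(1+e^{f})$ terms into a single integral against $p+q$, after which the inequality becomes a pointwise statement that follows from AM--GM. No smoothness or integrability beyond what is needed for the defining integrals of $T[f]$ and $L[f]$ to exist is used.
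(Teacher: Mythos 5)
Your proof is correct and rests on essentially the same idea as the paper's: the pointwise AM--GM inequality $e^{t/2}+e^{-t/2}\ge 2$, which is exactly the paper's bound $\frac{1+e^{\xi}}{2}\ge e^{\xi/2}$ applied inside the integrals. The only organizational difference is that you compute the exact difference $T[f]-4L[f]=2\int(p+q)\log\frac{e^{f/2}+e^{-f/2}}{2}$ (which is precisely the identity underlying the paper's Lemma \ref{lemma2}) and then drop a nonnegative integrand, rather than bounding each term of $L[f]$ directly.
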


Remark:
The relaxation in Lemma \ref{lemma1} may not be sharp, 
particularly,
when $p$ and $q$ nearly non-overlap on their supports, 
$T[f^{*}] = 2 \text{SKL}(p,q)$ diverges to
infinity, while $L[f^{*}]$ remains bounded (by $2\log 2$). 
When $f$ is close to zero,
which as discussed above is the more relevant scenario for two-sample test,
the following lemma quantifies the tightness of the relaxation. 
Proofs of both lemmas are in Appendix \eqref{app:proofs}.

\begin{lemma} 
 For any $f$ s.t. $f^{2}$ is integrable w.r.t $p$ and
$q$,
\[
0\le\frac{1}{2}T[f]- 2 L[f]\le\int(p+q)\frac{f^{2}}{2}.
\]
\label{lemma2}
\end{lemma}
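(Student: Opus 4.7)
The plan is to reduce both inequalities to a pointwise statement about a single one-variable function and then integrate. First I would substitute the definitions into $\tfrac{1}{2}T[f]-2L[f]$ and simplify: using $L[f]$ from \eqref{eq:training-loss-population}, one gets
\begin{equation*}
2L[f] = \int f\,p + 2\log 2 - \int (p+q)\log(1+e^{f}),
\end{equation*}
so that
\begin{equation*}
\tfrac{1}{2}T[f] - 2L[f] = \int (p+q)\left[\log(1+e^{f}) - \tfrac{f}{2} - \log 2\right].
\end{equation*}
The bracketed integrand collapses to $\log\bigl(\tfrac{e^{f/2}+e^{-f/2}}{2}\bigr) = \log\cosh(f/2)$, so the whole quantity equals $\int(p+q)\log\cosh(f/2)$. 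After this algebraic step, the two inequalities become pointwise statements about $\log\cosh$ weighted by the non-negative measure $p+q$.

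Next I would verify the two pointwise bounds for $h(x):=\log\cosh(x)$ on $\mathbb{R}$. The lower bound $h(x)\ge 0$ is immediate since $\cosh(x)\ge 1$. For the upper bound, the plan is to prove $h(x)\le x^{2}/2$ by differentiating $\Phi(x):=x^{2}/2 - \log\cosh(x)$ to get $\Phi'(x)=x-\tanh(x)$, which has the same sign as $x$ because $|\tanh(x)|\le|x|$ with equality only at $0$; combined with $\Phi(0)=0$ this gives $\Phi\ge 0$. Applied with $x=f/2$ this yields $\log\cosh(f/2)\le f^{2}/8\le f^{2}/2$, which is stronger than what is claimed and is therefore sufficient.

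Combining the two bounds through the identity from the first paragraph gives $0\le \tfrac{1}{2}T[f]-2L[f]\le \int(p+q)f^{2}/2$, and the integrability hypothesis on $f^{2}$ ensures all integrals are well defined. No genuine obstacle arises; the only subtle point is recognising the algebraic identity $\log(1+e^{f})-f/2-\log 2 = \log\cosh(f/2)$, which turns what looks like a divergence-versus-moment comparison into the one-variable calculus inequality $\log\cosh(x)\le x^{2}/2$. The gap between the sharper bound $f^{2}/8$ and the stated $f^{2}/2$ reflects the looseness already noted in the remark after Lemma \ref{lemma1}.
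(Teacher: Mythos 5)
Your proof is correct and follows essentially the same route as the paper: both reduce $\tfrac{1}{2}T[f]-2L[f]$ to $\int(p+q)\log\frac{e^{f/2}+e^{-f/2}}{2}$ and then bound the integrand by $0$ below and via $\frac{e^{x}+e^{-x}}{2}\le e^{x^{2}/2}$ above. The only differences are cosmetic: you supply a calculus proof of that elementary inequality (which the paper simply invokes) and note the sharper $f^{2}/8$ constant.
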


\section{Conclusion and discussion}

The paper proposes to use the difference of sample-averaged logit function on testing set as a two-sample statistic, once a classification network has been trained on a training set.
The proposed two-sample method empirically demonstrates improved performance over
 previous neural network two-sample tests using classification accuracy.
It also compares favorably to Gaussian kernel-based MMD in certain settings,
especially for higher dimensional data, 
e.g., distinguishing generated MNIST digits from authentic ones.
The proposed test has an advantage with larger datasets, 
as larger training sets make the optimization more stable,
and the batch-based algorithm has better scalability. 
Theoretically, we prove the consistency of the proposed test when the network is sufficiently parametrized,
and reduce the needed network complexity to scale intrinsically 
when $p$ and $q$ lie on or near to low-dimensional manifolds in ambient space.

The analysis in this paper covers approximation and estimation error,
and more understanding of network optimization is needed so as to better study network classification two-sample test methods. 
In particular, we have not systematically explored the influence of different network architectures.
As for methodology, a future direction is to extend to other training objectives than softmax, 
such as $f$-divergences \cite{kanamori2011f,nowozin2016fgan}, other GAN losses,
and testing power estimators \cite{gretton2012optimal,sutherland2016generative, liu2020learning}.

\section*{Acknowledgement}
The work is supported by NSF DMS-1818945.
AC is also partially supported by Sage Foundation Grant 2196.
XC is also partially supported by NSF (DMS-1820827), NIH (Grant
R01GM131642) and the Alfred P. Sloan Foundation.

\small
\bibliographystyle{plain}
\bibliography{mmd}

\normalsize

\appendix

\setcounter{figure}{0} \renewcommand{\thefigure}{A.\arabic{figure}}
\setcounter{table}{0} \renewcommand{\thetable}{A.\arabic{table}}
\setcounter{equation}{0} \renewcommand{\theequation}{A.\arabic{equation}}
\setcounter{theorem}{0} \renewcommand{\thetheorem}{A.\arabic{theorem}}

\section{Proofs of technical lemmas}\label{app:proofs}

\begin{proof}[Proof of Lemma \ref{lemma:sub-exp-xi}]
$\xi_i = g(x_i)$, where $g: = T\circ f$, $\text{Lip}_{\R^D}(g) \le L$. \[
| g(x_i) - \E_{x \sim p} g(x) |
\le \int_{\R^D} |g(x_i) - g(x)| p(x) dx
\le L \int \| x_i - x\| p(x) dx
\le L( \|x_i\| + \E_{x \sim p}\|x\|)  
\]
and by \eqref{eq:def-calP-subexp-RD}, $\E_{x \sim p}\|x\| = \int_0^\infty \Pr_{x \sim p} [ \|x\| > t] dt < C$ which is an absolute constant.
This means that the random variable $(\xi_i - \E \xi_i )/L $  in absolute value is upper bounded by 
$\| x_i\| + C$, where $y := \| x_i\|$ as a 1D random variable satisfies that $\Pr [ |y| > t ] < C e^{-t}$,
thus $(\xi_i - \E \xi_i )/L $ satisfies the sub-exponential tail claimed with other absolute constants $C'$ and $c'$.
\end{proof}

\begin{proof}[Proof of Lemma \ref{lemma-c3}]
Let $X \sim p$, 
then $d(X, {\cal M})$ is a non-negative random variable,
and  
\begin{align*}
& \int_{\R^D} d(x, {\cal M}) p(x) dx
 = \int_{\R^D} \left( \int_0^{d(x, {\cal M})} dt \right) p(x) dx \\
& =  \int_{0}^\infty \left(     \int_{\R^D} {\mathbf{1}}_{ \{0< t < d(x, {\cal M}) \} } p(x) dx    \right)dt 
= \int_{0}^\infty \Pr[ d(X, {\cal M}) > t] dt \\
& 
\le  \int_{0}^\infty c_1 e^{- \frac{t}{\sigma}}   dt 
 = {c_1} \sigma.
\end{align*}
Similarly,
\begin{align*}
& \int_{\R^D} d(x, {\cal M})^2 p(x) dx
 = \int_{\R^D} \left( \int_0^{d(x, {\cal M})} 2t dt \right) p(x) dx \\
& =  \int_{0}^\infty \left(     \int_{\R^D} {\mathbf{1}}_{ \{0< t < d(x, {\cal M}) \} } p(x) dx    \right) 2t dt 
= \int_{0}^\infty \Pr[ d(X, {\cal M}) > t] 2t dt \\
& 
\le  \int_{0}^\infty c_1 e^{- \frac{t}{\sigma}}  2t dt 
 = 2 {c_1} \sigma^2.
\end{align*}
\end{proof}

\begin{proof}[Proof of Lemma \ref{lemma:boundgx}]
By compactness and smoothness of ${\cal M}$, for any $x \in \R^D$, there exists $x^*_{\cal M} \in {\cal M}$ s.t. $\| x^*_{\cal M}-x \| = d(x, {\cal M})$.
Thus,
\begin{align*}
|g(x)| 
& \le |g(x^*_{\cal M})| +  |g(x) - g(x^*_{\cal M})|  \\
& \le \sup_{x' \in {\cal M}} |g(x')|  + \text{Lip}(\xi) \| x - x^*_{\cal M} \| \\
& =   \| g\|_{L^\infty({\cal M})} + \text{Lip}(\xi) \cdot d(x, {\cal M}).
\end{align*}
\end{proof}

\begin{proof}[Proof of Lemma \ref{lamma:truncate-Lftheta}]
Let Lip$(f)$$ \le L$,  $f(x_0) = 0$, $\|x_0\| < R_0$, then 
\[
|f(x) |\le  |f(x_0)| + L\|x-x_0\| \le L( R_0 + \|x\|), 
\quad \forall x, \, \|x \| > R,
\]
Thus,
\begin{align*}
& \int_{ \|x\| > R} |f(x)| p(x) dx   
 \le
L \int_{ \|x\| > R} ( R_0 + \|x\| )p(x) dx    
\\
& =
L \left(
(R_0 + R) \int_{ \|x\| > R} p + 
 \int_{ \|x\| > R}  (\|x\|-R) p \right),
\end{align*}
and 
\begin{align*}
\int_{ \|x\| > R}  (\|x\|-R) p 
&= \int_{\| x \| > R} \int_R^{\|x \|} dt p(x) dx
= \int_R^{\infty} \int_{\| x \| > t}  p(x) dx dt \\
&= \int_R^{\infty}  \Pr [ \|X\| > t]dt
<  \int_R^\infty Ce^{-t} dt
= Ce^{-R},
\end{align*}
then 
\[
\int_{ \|x\| > R} |f| p   
< L( R_0 + R + 1) Ce^{-R}.
\]
\end{proof}

\begin{proof}[Proof of Lemma \ref{lemma:conc-dxM}]
By definition of ${\cal P}_\sigma$, $d(x_i, {\cal M})$'s are i.i.d. non-negative sub-exponential random variables,
and $\E_{x \sim p} d(x, {\cal M}) < c_1 \sigma$ by Lemma \ref{lemma-c3}.
The claims follows by the  Bernstein's control of the positive tail
for independent sum of i.i.d. sub-exponential random variables.
\end{proof}

\begin{proof}[Proof of Lemma \ref{lemma1}]
By definition,
\begin{align*}
2 L[f] & =\int p\log\frac{2}{1+e^{-f}}+\int q\log\frac{2}{1+e^{f}}\\
 & =-\int p\log\frac{1+e^{-f}}{2}-\int q\log\frac{1+e^{f}}{2}\\
 & \le-\int p\log e^{-\frac{f}{2}}-\int q\log e^{\frac{f}{2}}\quad\text{(for any real number \ensuremath{\xi}, \ensuremath{\frac{1+e^{\xi}}{2}\ge e^{\frac{\xi}{2}}})}\\
 & =\int\frac{f}{2}(p-q)=\frac{1}{2}T[f].
\end{align*}
\end{proof}

\begin{proof}[Proof of Lemma \ref{lemma2}]
\begin{align*}
\frac{1}{2}T[f]- 2L [f] 
& =\int p\frac{f}{2}-\int q\frac{f}{2}-\int p\log\frac{2e^{f}}{1+e^{f}}-\int q\log\frac{2}{1+e^{f}}\\
 & =\int p\log\frac{1+e^{f}}{2e^{f/2}}+\int q\log\frac{1+e^{f}}{2e^{f/2}}\\
 & =\int(p+q)\log\frac{e^{-f/2}+e^{f/2}}{2},
\end{align*}
and by that $\frac{e^{x}+e^{-x}}{2}\le e^{x^{2}/2}$, 
\[
\frac{1}{2}T[f]- 2 L[f] 
\le\int(p+q)\log e^{f^{2}/2}=\int\frac{f^{2}}{2}(p+q).
\]
\end{proof}

\section{A Remark after Proposition \ref{prop:step2-general-pq}}\label{app:remark}

We give a remark on the trade-offs in the choice of $f_{\text{tar}}$ and regularity level in Proposition \ref{prop:step2-general-pq}.

There are two trade-offs in applying the result above.
First, generically (e.g., the two densities are absolutely continuous)
 one can make the constant $C$  arbitrarily close to $\text{JSD}(p,q)$, 
and as a result the regularity of $f_{\text{tar}}$ may worse, and then the needed network complexity will increase. 
Second, the regularity level $r$ can be chosen to be large given that $f_{\text{tar}} \in C_c^\infty(\Omega) $,
but the constant in the network complexity bound will grow with higher order $r$.

For example,
when $q$ is significantly large on a region where $q$ almost vanish,
the log density ratio $f^*$ will be very large in that region,
which creates a difficulty for network approximation if we set $f^*$ to be the target.
However, in this case $f^*$ can be replaced by a regularized surrogate $f_{\text{tar}}$ which will produce an a comparably large $L[f] > C$
with a significantly large $C > 0$.
When such region is large, i.e., $p$ and $q$ have distinct supports,
then the gap between $L[f_{\text{tar}}] $ and JSD$(p,q)$ may be larger if we choose to have smoother $f_{\text{tar}}$. 
But note that such situation is actually trivial case for two-sample testing since $p$ and $q$ already differ significantly,
and as a result $L[f_{\text{tar}}] $ can be made large even not close to  JSD$(p,q)$. 
The more interesting situation is the ``weak-separation" regime of $p$ and $q$,
that is, the departure of $q$ from $p$ is small and then $f^*$ is not far from zero.
We see in Section \ref{subsec:test-consist} that the critical regime for two-sample detection is when the magnitude of $(p-q)$ $\sim O(n^{-1/2})$
which is asymptotically 0 as $n$ increases.

\section{Optimization experiments in Section \ref{sec:1dexample}}

\begin{table}[t]
\begin{centering}
 \stackunder[5pt]{
\begin{tabular}[t]{  l | c c c c c  }
\hline
         $n_{\text{tr}}$   & 250 			&   500			&	   1000		& 	2000  		& 	4000		  \\
 \hline
 $H$=4 		        	&  0.0135    (0.0095) &  0.0169    (0.0105)	&  0.0134    (0.0097)	&  0.0165   (0.0128)	 &    0.0152    (0.0086) \\
 $H$=8		        	&  0.0209    (0.0127) &  0.0206    (0.0110)  &  0.0264  (0.0141)	&  0.0232    (0.0139)   &   0.0261   (0.0143)  \\
 $H$=16  	 		&  0.0255    (0.0122	& 0.0315    (0.0129)	& 0.0311    (0.0139)	&  0.0355    (0.0124)   &   0.0310   (0.0149)		\\
 $H$=32  	 		&  0.0309    (0.0105)	&  0.0339   (0.0107)	& 0.0388    (0.0096)   &  0.0386   (0.0123)   &   0.0410   (0.0094)	\\
 $H$=64  	 		& 0.0324    (0.0189)	& 0.0367    (0.0086)	& 0.0416    (0.0086)	&  0.0439    (0.0081)	   &   0.0447   (0.0058) 	\\
 $H$=128  	 	& 0.0334    (0.0100)	& 0.0422    (0.0046)	&  0.0458   (0.0032)   &   0.0481   (0.0031)   &	0.0485    (0.0030)	\\
 $H$=256  	 	& 0.0297    (0.0177)	&  0.0442   (0.0053) 	&  0.0478   (0.0022) 	&   0.0503    (0.0017)   & 	0.0511    (0.0008)	\\
 $H$=512  	 	& 0.0299    (0.0123)	&  0.0433    (0.0068)	&  0.0477   (0.0040)	 &   0.0504    (0.0011)  & 	0.0508    (0.0011)	\\
 \hline
\end{tabular}
}{Example 1. Data in $\R$.}

\vspace{10pt}
 \stackunder[5pt]{
\begin{tabular}[t]{  l | c c c c c  }
\hline
         $n_{\text{tr}}$   & 250 			&   500			&	   1000		& 	2000  		& 	4000		  \\
 \hline
H=4 & 0.0182 (0.0116) & 0.0208 (0.0111) & 0.0219 (0.0115) & 0.0212 (0.0103) & 0.0189 (0.0115)  \\
H=8 & 0.0245 (0.0115) & 0.0301 (0.0117) & 0.0335 (0.0101) & 0.0283 (0.0100) & 0.0299 (0.0116)  \\ 
H=16 & 0.0240 (0.0159) & 0.0357 (0.0085) & 0.0387 (0.0083) & 0.0360 (0.0102) & 0.0389 (0.0097) \\ 
H=32 & 0.0327 (0.0086) & 0.0386 (0.0061) & 0.0427 (0.0059) & 0.0418 (0.0069) & 0.0453 (0.0034) \\ 
H=64 & 0.0322 (0.0085) & 0.0395 (0.0051) & 0.0442 (0.0021) & 0.0461 (0.0027) & 0.0461 (0.0051) \\ 
H=128 & 0.0237 (0.0202) & 0.0400 (0.0066) & 0.0452 (0.0029) & 0.0475 (0.0024) & 0.0481 (0.0023) \\ 
H=256 & 0.0234 (0.0173) & 0.0388 (0.0081) & 0.0466 (0.0025) & 0.0488 (0.0020) & 0.0498 (0.0020) \\ 
H=512 & 0.0095 (0.0239) & 0.0398 (0.0049) & 0.0471 (0.0026) & 0.0498 (0.0016) & 0.0505 (0.0009) \\ 
 \hline
\end{tabular}
}{Example 2. Data near a 1D manifold in $\R^2$}
\caption{\label{tab:eg-1d-stdL}
Mean and standard deviation (in brackets) of $L[\hat{f}_{\text{tr}}]$,
over 40 replicas of training of the neural network,
in the experiment in Section \ref{sec:1dexample} and Fig. \ref{fig:eg-1d}.
}
\end{centering}
\end{table}

The experiment is merely to optimize the loss $L_{n,tr}$ using a dataset, and numerically compute the values of 
the obtained $L[\hat{f}_{tr}]$ using the analytical formula of the densities.

The training conducts Adam for $100 \cdot \frac{8000}{n_{tr}}$ epochs
to ensure that same number of samples are processed in the experiment when $n_{tr}$ changes. 
The batch size = 100, and learning rate 1e-3.

The numerical values of the mean and standard deviation of $L[\hat{f}_{tr}]$
for various $H$ and $n_{tr}$ is shown in Table \ref{tab:eg-1d-stdL},
where the values of the mean are plotted in Fig. \ref{fig:eg-1d}.

\section{Two-sample tests on 1D data in Section \ref{subsec:exp-1d}}\label{app:exp-1d-section}

\subsection{The different test methods}\label{app:different-tests}

We consider two types of alternative two-sample tests, which are

\begin{itemize}

\item
 ({\it net-acc}) The test based on classification accuracy \cite{lopez2016revisiting}.
The equivalent form as an IPM test is explained in \ref{app:net-acc-equiv}.

\item
 ({\it gmmd}) Gaussian kernel MMD.
The kernel bandwidth $\sigma$ in {\it gmmd} is set to be the median of the pairwise distances among all samples \cite{gretton2012kernel}.
\end{itemize}
Where the three tests,
{\it net-logit} (the proposed),
{\it net-acc}
and {\it gmmd}
all use the test set for two-sample problem,
the first two network-based methods are trained on the stand-along training set. 
One may observe that this comparison to kernel MMD is not fair:
First, kernel MMD with median-distance $\sigma$ does not use the training set,
thus it would be a more fair comparison if {\it gmmd} can use all the data samples without training-test splitting. 
Second, the median setting of $\sigma$ may not be optimal 
and can be improved by existing methods in literature. 
We thus consider three more variants of the Gaussian kernel MMD (GMMD) tests

\begin{itemize}
\item
({\it gmmd}+) GMMD using all samples.
{\it gmmd} with median-distance $\sigma$ which uses all the samples  without training-test splitting.

\item
({\it gmmd-ad}) GMMD using the test set only, 
but with adaptively selected bandwidth $\sigma$ on the training set.
The selection procedure is explained in \ref{app:gmmd-ad-detail}.

\item
 ({\it gmmd}++) GMMD on the whole data sets with post-selected $\sigma$.
The tests are conducted over a range of values of $\sigma$, 
which are $\{2^{-3}, \, 2^{-2}, \cdots 2^3\}$, 
and the best test power is post-selected.
Note that this value of kernel bandwidth choice is not available in an algorithm,
and the results are for theoretical comparison only. 
\end{itemize}

These three tests are included for a more complete comparison between network-based tests and kernel tests.
Training and testing split is half-and-half, and samples in $X$ and $Y$ are of the same number, in all cases.

\subsection{ Equivalent form of {\it net-acc} test}\label{app:net-acc-equiv}

Here we show that the {\it net-acc} test studied in \cite{lopez2016revisiting} is equivalent to 
using $\text{Sign}(f_\theta)$ instead of $f_\theta$ in  \eqref{eq:def-hatT-logratio} when $n_X = n_Y$, up to multiplying and adding constants. 
Specifically, by the definition of test statistic in \cite{lopez2016revisiting}, 
and recall that $|X_{te}|  = |Y_{te}| = \frac{1}{2} |{\cal D}_{te}|$,
$\text{Sign}( z) =1$ if $z \ge 0$ and -1 if $z <0$,

\begin{equation}\label{eq:def-hatT-netacc}
\begin{split}
\hat{T}_\text{net-acc}
& = \frac{1}{2} \left(
   \frac{1}{|X_{te}|} \sum_{x \in X_{te}} \mathbf{1}_{\{ f_\theta(x) \ge 0 \}}
+ \frac{1}{|Y_{te}|} \sum_{y \in Y_{te}}  \mathbf{1}_{\{ f_\theta(y) < 0 \}}
\right) \\
& = \frac{1}{2} \left(
   \frac{1}{|X_{te}|} \sum_{x \in X_{te}} \frac{1}{2} (1 + \text{Sign}( f_\theta(x) ) )
+ \frac{1}{|Y_{te}|} \sum_{y \in Y_{te}} \frac{1}{2} (1 - \text{Sign}( f_\theta(x) ) )
\right) \\
& = \frac{1}{2} 
+ \frac{1}{4} \left(
   \frac{1}{|X_{te}|} \sum_{x \in X_{te}}  \text{Sign}( f_\theta(x) )
- \frac{1}{|Y_{te}|} \sum_{y \in Y_{te}}   \text{Sign}( f_\theta(x) )
\right) .
\end{split}
\end{equation}

\subsection{Adaptive choice of $\sigma$ in {\it gmmd-ad}}\label{app:gmmd-ad-detail}
In the training phase, 
the algorithm computes the Gaussian kernel MMD discrepancy
\[
\hat{T}_{\text{MMD}}( X, Y) = 
\frac{1}{|X|^2}\sum_{x, x' \in X} k_{\sigma}( x, x')
+
\frac{1}{|Y|^2}\sum_{y, y' \in Y} k_{\sigma}( y, y')
-
\frac{2}{|X||Y|}\sum_{x\in X,\, y \in Y} k_{\sigma}( x, y)
\]
on the training set $X=X_{\text{tr}}$, $Y=Y_{\text{tr}}$,
for a range of values of the kernel bandwidth $\sigma$, i.e. $\sigma = \{2^{-3}, \cdots, 2^3\}$.
$k_\sigma(x,y) = \exp \{-\frac{|x-y|^2}{2\sigma^2} \}$ is the isotropic Gaussian kernel. 
A plot of MMD discrepancy as a function of varying $\sigma$ is given in 
Fig. \ref{fig:training-detail-synthetic}.
The $\sigma$ which maximizes the MMD discrepancy on the training set is then chosen to compute the test statistic on the test set.
The MMD test statistic also takes the form as $\hat{T}_{\text{MMD}}$ in \cite{gretton2012kernel,li2017mmd}.

Note that the MMD loss may not be the optimal objective to select $\sigma$,
and previous works have proposed to use the estimate of testing power as the optimization objective \cite{gretton2012optimal,sutherland2016generative, liu2020learning}.
We use the MMD loss to select $\sigma$ for simplicity.
This  method is also equivalent to the training process in \cite{li2017mmd} with only one trainable parameter which is the kernel bandwidth $\sigma$.
In experiments, 
{\it gmmd-ad}  improves the two-sample test performance over median choice 
on the 2D manifold dataset in Section \ref{subsec:exp-2d}
and the MNIST dataset in Section \ref{subsec:exp-mnist}.

\subsection{Training of neural networks}

In all the experiments,
the classifier network used by {\it net-acc} and {\it net-logit}
is a two-layer fully-connected neural network with 32 hidden nodes in each hidden layer,
and the bottom layer has the same dimension as the input data. 
The training of the network is conducted via Adam \cite{kingma2014adam}.
Specifically, 100 epochs of Adam with learning rate $10^{-3}$,
and batch size $100$ when the size of training set $> 100$. 
A typical plot of evolution of training loss and training error is given in Fig. \ref{fig:training-detail-synthetic}.
Training via SGD with momentum $0.9$ produces similar result.
The result is qualitatively the same when the number of hidden units varies from $16$ to $1024$.
We have not investigated the optimal choice of network architecture hyperparameters for the two-sample problem. 

We use fixed learning rate over a fixed number of epochs,
and it is entirely possible that our training procedure is over simplified and 
better usage of stochastic gradient descent method as studied in
\cite{bottou2010large,zeiler2012adadelta,sutskever2013importance,shamir2013stochastic,du2018gradient}
may lead to improved performance.

\begin{figure}
\hspace{-0.3cm}
\includegraphics[height=0.19\textwidth]{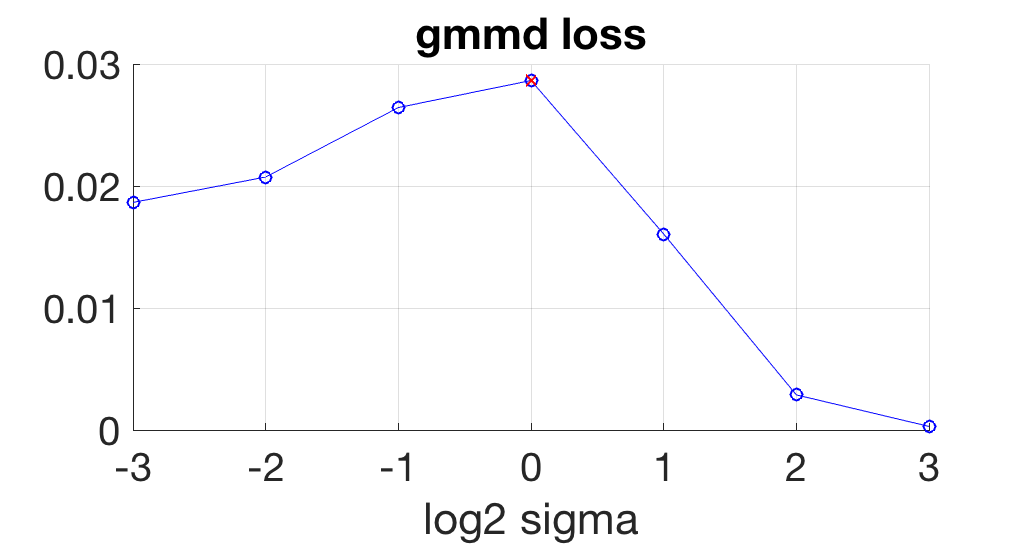}
\hspace{-0.5cm}
\includegraphics[height=0.2\textwidth]{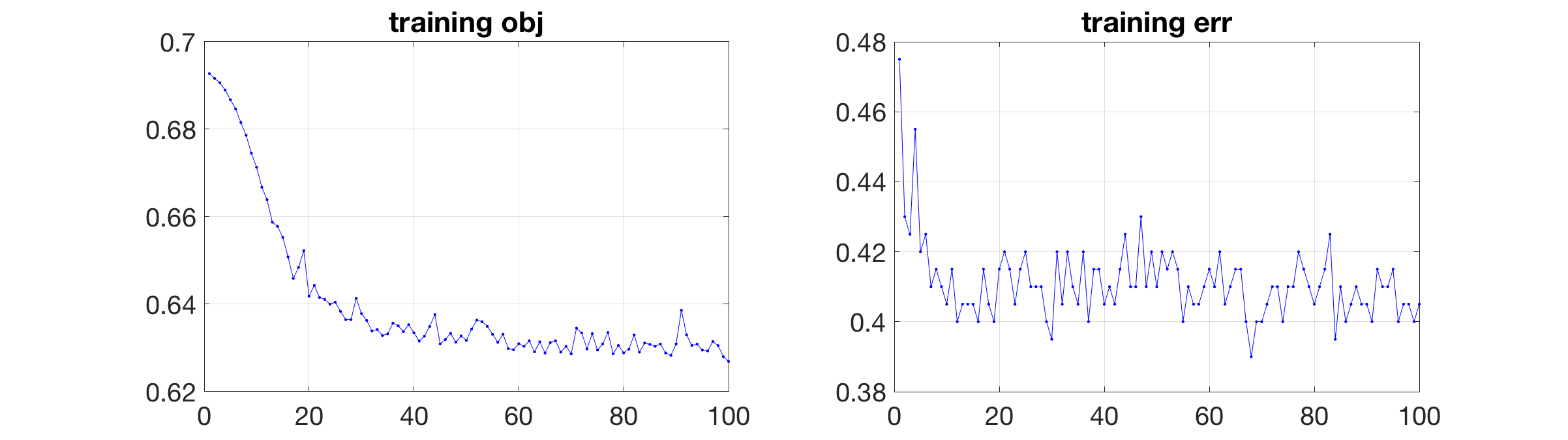}
\caption{
Left:
MMD discrepancy on trained set used by {\it gmmd-ad} to select kernel bandwidth $\sigma$.
Middle and right:
training of the classification network used in net-based tests.
On the synthetic 1D dataset.
}\label{fig:training-detail-synthetic}
\end{figure}

\subsection{Test power estimation}\label{app:exp-test-power-randomness}

\begin{figure}
\includegraphics[height=0.2\textwidth]{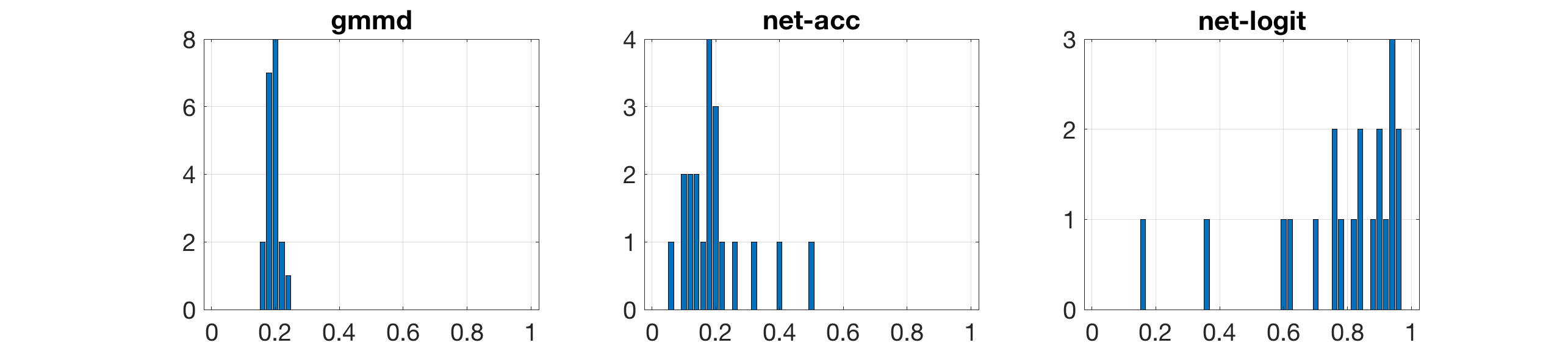}
\caption{
Histogram of estimated test power from 400 test runs
of {\it gmmd}, {\it net-acc} and {\it net-logit} over 20 replicas of training (no training for  {\it gmmd}),
on the example in Fig. \ref{fig:1dexample}.
}\label{fig:hist-power-test7}
\end{figure}

The experiments with tests (1)-(4) 
use $n_\text{run}=400$ test runs to estimate the power as the frequency of rejecting ${\cal H}_0$,
 and the whole experiments are repeated for $n_{\text{rep}}=20$ replicas 
 to compute mean and  standard deviation of the estimated power. 
The test with (5) and (6) uses 200 test runs to estimate the power,
since these gmmd methods demonstrate less variation in estimated power,
explained as below.

The way of computing the test power is empirical and has randomness:
for kernel mmd the variation is due to the finite number of runs ($n_\text{run}$ times),
and for network based tests there is extra variation due to the stochastic optimization of the network.
Thus we use experiment replicas to recored the variations of the test power.
The empirical distribution of the test power of the three methods over training replicas is given in \ref{fig:hist-power-test7}, 
corresponding to the experiment in Fig. \ref{fig:1dexample}.
The plots of the two net-based methods
indicate large variation of the power given by each trained network,
that is, the ``quality'' of the trained net to discriminate the two densities varies.
This instability is due to limited training samples
as well as the randomness in the optimization algorithm. 

We observe decreased power variation with larger training set,
and the trained network gives better two-sample test power.
This is consistent with the observation in Section \ref{sec:1dexample},
and indicates that larger training set benefits the network training.
However, 
as a price to pay, 
the testing set will be smaller given finitely many samples in total.

\subsection{Detailed experimental results on Eg. 3}\label{app:exp-1d-eg3-detail}

{\bf  Set-up.}
 In Eg.3, 
the number $\delta \in [0,1]$ controls the difference between the densities $p$ and $q$.
A plot for $p$ and $q$
with $\delta = 0.08$ is illustrated in Fig. \ref{fig:1dexample}.
200 training and 200 testing samples  are used, with half of the samples coming from $X$ and half from $Y$ in both the training and testing sets.

{\bf Test power.}
The table in Fig. \ref{fig:1dexample} lists the power for the three methods (1)(2)(3),
where {\it net-logit} gives significantly better average power $\sim$ 80\%, 
and the power of {\it net-acc} and {\it gmmd} are similar, both $\sim$ 20\%.
Table \ref{tab:1dexample-power} gives the full table of test power including that of the methods {\it gmmd}+ and {\it gmmd}++.
where {\it gmmd}+ achieves a test power of 47\%
and
{\it gmmd}++  
a power of 57\%, 
remaining inferior to {\it net-logit},
while both with small variation (std $\lesssim$ 2) and thus are more stable than net-based tests. 
Results with other values of $\delta$ and sample sizes are reported in Fig. \ref{fig:test5-eg3}.

The variation of the power is much larger for the two net-based tests,
as explained in \ref{app:exp-test-power-randomness} and  Fig. \ref{fig:hist-power-test7}.
We note that such large variation is due to the instability of network training at small training size, 
and is likely to be a limitation of the current net-based methods.

\begin{table}
\begin{centering}
\begin{tabular}[t]{ c | c c c| c c  }
\hline
                           	& {\it gmmd} 	&   {\it gmmd}+	&   {\it gmmd}++	& 	{\it net-acc}  		& {\it net-logit}  \\
 \hline
mean 	         	& 19.14		&	46.63		&  	57.29		& 19.98  			& 	78.09 		 	\\
std		         	&  1.95 		&	2.49			& 	1.578		& 10.43			&	 20.56		\\
median  	 		& 19.63		&	47.13		& 	57.38		& 17.63			& 	84.13	 		\\
 \hline
\end{tabular}
\caption{\label{tab:1dexample-power}
The mean, standard deviation (``std") and median of the test power of the various methods
computed from $n_{run}=400$ test runs over $n_{\text{rep}}=20$ replicas 
on Eg. 3 in Section \ref{subsec:exp-1d}.
The 
{\it gmmd}, {\it net-acc},  {\it net-logit} 
tests are computed  on 
$|X_\text{te}|= |Y_\text{te}|=100$ samples,
where {\it net-acc}  and  {\it net-logit} train a classification network on another training set of size $|X_\text{tr}|= |Y_\text{tr}|=100$.
{\it gmmd} only uses the test set and sets the kernel bandwidth $\sigma$ to be the median distance.
{\it gmmd+} and {\it gmmd++} accesses both the training and test sets,
where {\it gmmd+} uses the median distance as  $\sigma$,
and {\it gmmd++}  reports the best power over varying range of choices of $\sigma$,
as described in \ref{app:different-tests}.
The results of {\it gmmd}, {\it net-acc}, {\it net-logit} are also reported in Fig. \ref{fig:1dexample}. 
}
\end{centering}
\end{table}

\section{Two-sample tests on 2D data in Section \ref{subsec:exp-2d}}\label{app:detail-exp-2d}

The construction of $x_i \sim p$ and $y_j \sim q$ are as below:
$x_i= T(u_i)$, $y_j = T(v_j)$ where $T: \R^2 \to \R^3$ is a smooth mapping from unit square to the spherical surface given by 
\[
T(x_1, x_2) = \frac{1}{R}\left( x_1, x_2, \sqrt{R^2-x_1^2-x_2^2} \right),
\quad
R=1.5,
\]
and $u_i$, $v_j$ are i.i.d. copies of random variables $u$ and $v$ in $\R^2$ distributed as
\begin{eqnarray*}
& u = t_u + \eta_u,  
\quad
v = t_v + \eta_v,   
\quad
\eta_u, \, \eta_v \sim {\cal N}(0, \epsilon^2 I_2), \quad \epsilon = 0.05,
\\
& t_u  \sim \text{ uniformly on a quater circle in $[0,1]\times [0,1]$},
\\
& t_v  \sim \text{ the distribution of $t_u$ rotated around $(\frac{1}{\sqrt{2}}, \frac{1}{\sqrt{2}})$ by angle $\delta$},
\end{eqnarray*}
where the 4 random variables are all independent.

The other experimental set-ups are the same as in Section \ref{subsec:exp-1d}.

\section{Two-sample tests on MNIST data in Section \ref{subsec:exp-mnist}}\label{app:mnist}

\begin{figure}[t]
\begin{center}
\includegraphics[width=0.9\textwidth]{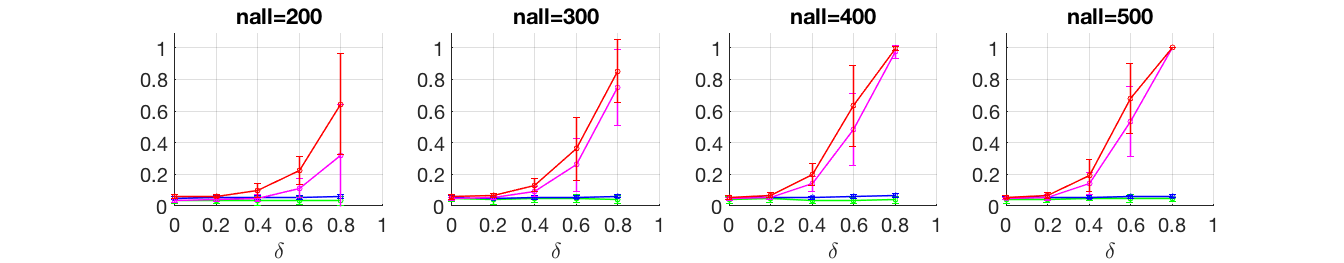}
\end{center}
\caption{
Same plot as Fig. \ref{fig:mnist-power}
with another pre-trained generative model 
which produces faked images that are closer to the authentic ones. 
}\label{fig:mnist-power-dim128}
\end{figure}

\begin{figure}
\hspace{-0.2cm}
\includegraphics[height=0.19\textwidth]{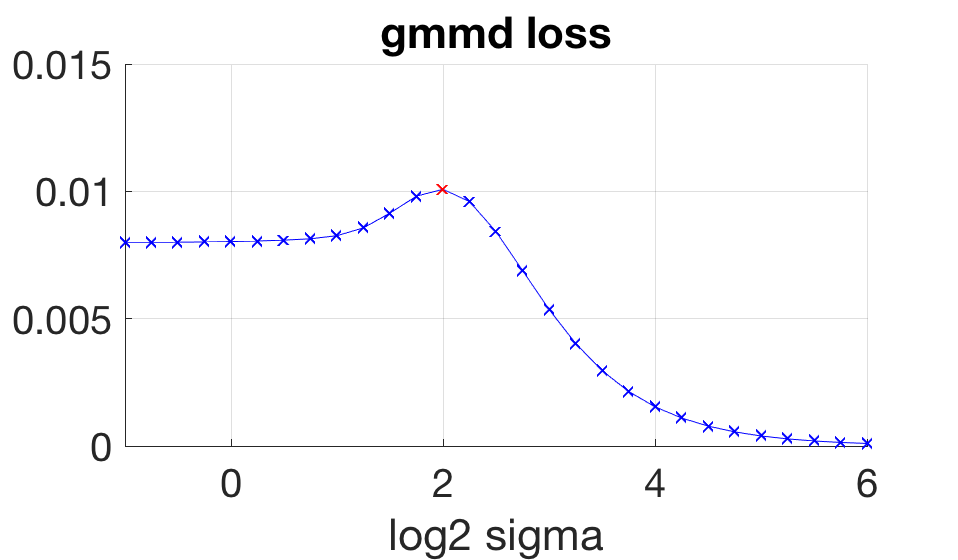}
\hspace{-0.5cm}
\includegraphics[height=0.2\textwidth]{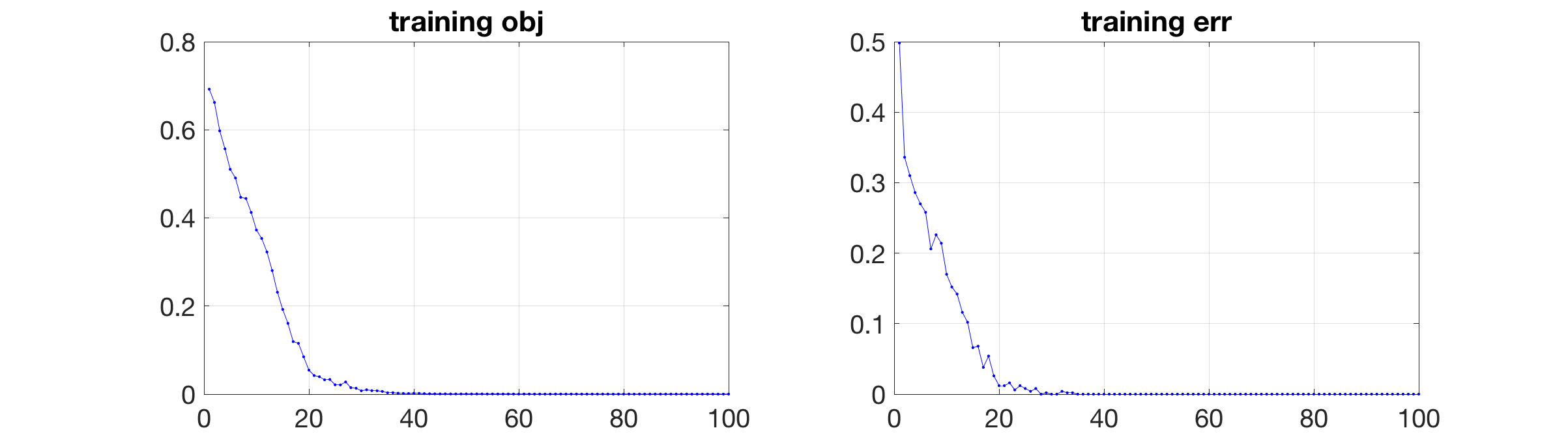}
\caption{
Same plot as Fig. \ref{fig:training-detail-synthetic}
on MNIST data.
}\label{fig:training-detail-mnist}
\end{figure}

The  classifier network used in the experiment 
is a convolutional neural network (CNN) with 2 convolutional layers. 

The pre-trained generated model is based on a convolutional auto-encoder:
\begin{itemize}
\item[]
c5x5x1x16 - re - ap 2x2 - c5x5x16x32 - re - ap 2x2 - fc128 - re  \\
- fc10 - re  ~~~ $\leftarrow$ {\bf code space $\R^{10}$} \\
- fc128 - re - ct 5x5x128x32  - re  \\
-  ct5x5x32x16 (upsample 2x2) - re  - ct5x5x16x1 (upsample 2x2) - Euclidean loss
\end{itemize}
where ``c'' stands for convolutional layer,  ``ct'' for transposed convolutional layers,
 ``re'' for Relu activation, and ``ap'' for average pooling.
The auto-encoder is trained on 50000 MNIST dataset for 20 epochs using Adam 
with learning rate decreasing from $10^{-3}$ to $10^{-6}$ and batch size $100$.

The sampling of generative model is conducted by adding a small isotropic Gaussian noise 
(``giggering'') 
to the 10-dimensional codes of authentic MNIST digits computed by an encoder,
and then mapping through the decoder to $\R^{784}$.

We also prepare another generative model
by removing the bottleneck layer in the above auto-encoder  architecture and retrain the model,
which gives smaller reconstruction error
and a higher-dimensional code space of $\R^{128}$.
The generative model is conducted in the same way by sampling
 in the code space using Gaussian noise of smaller 
variance per coordinate. 
This produces faked images that are closer to the authentic ones in Euclidean distance in $\R^{784}$, 
however less explore the ``manifold'' of $p_{\text{data}}$.
The test power of the four methods is shown in Fig. \ref{fig:mnist-power-dim128}.

The classification network used in {\it net-logit} is the following CNN
\begin{itemize}
\item[]
c5x5x1x16 - re - ap 2x2 \\
- c5x5x16x32 - re - ap 2x2 \\
 - fc128 - re  - fc2 - softmax loss
\end{itemize}
where dropout is used between the last 2 fully-connected layers.
The classification CNN is trained for 100 epochs using Adam  with learning rate  $10^{-3}$
and batch size $100$. 
A typical plot of evolution of training loss and training error is given in Fig. \ref{fig:training-detail-mnist}.

The procedure of adaptive selection of $\sigma$ by {\it gmmd-ad} is same as in Section \ref{subsec:exp-1d},
where the bandwidth search range is $\sigma = \{2^{-1}, \cdots, 2^6\}$.
The test power is evaluated on $400$ test runs and the training is repeated for $20$ replicas.

\end{document}